\newcolumntype{L}[1]{>{\raggedright\let\newline\\\arraybackslash\hspace{0pt}}m{#1}}
\newcolumntype{C}[1]{>{\centering\let\newline\\\arraybackslash\hspace{0pt}}m{#1}}
\newcolumntype{R}[1]{>{\raggedleft\let\newline\\\arraybackslash\hspace{0pt}}m{#1}}
\definecolor{Pink}{rgb}{1.0, 0.5, 0.5}
\definecolor{Maroon}{rgb}{0.8, 0.0, 0.0}
\def\boxit#1{\vbox{\hrule\hbox{\vrule\kern6pt\vbox{\kern6pt#1\kern6pt}\kern6pt\vrule}\hrule}}
\newcommand{\B}{\mathbf}
\newcommand{\mB}{\mathcal{B}}
\newcommand{\mA}{\mathcal{A}}
\newcommand{\inner}[1]{\langle#1\rangle}
\newcommand{\norm}[1]{\|#1\|}
\newcommand{\abs}[1]{\vert #1 \vert}
\newcommand{\upperroman}[1]{\MakeUppercase{\romannumeral #1}}
\newtheorem{theorem}{Theorem}[section]
\newtheorem{lemma}[theorem]{Lemma}
\newtheorem{condition}{Condition}[section]
\newcommand{\bM}{\mbox{\bf M}}
\newcommand{\bu}{\mbox{\bf u}}
\newcommand{\bv}{\mbox{\bf v}}
\newcommand{\bX}{\mbox{\bf X}}
\newcommand{\bY}{\mbox{\bf Y}}
\newcommand{\bdelta}{\mbox{\boldmath $\delta$}}
\newcommand{\bDelta}{\mbox{\boldmath $\Delta$}}
\newcommand{\bGamma}{\mbox{\boldmath $\Gamma$}}
\newcommand{\bSig}{\mbox{\boldmath $\Sigma$}}
\newcommand{\what}{\widehat}
\newcommand{\sgn}{\mathrm{sgn}}
\def\t{^\top}
\def\beqn{\begin{eqnarray}}
\def\eeqn{\end{eqnarray}}
\def\beqns{\begin{eqnarray*}}
\def\eeqns{\end{eqnarray*}}
\def\0{{\bf 0}}
\def\C{{\bf C}}
\def\D{{\bf D}}
\def\d{{\bf d}}
\def\e{{\bf e}}
\def\E{{\bf E}}
\def\I{{\bf I}}
\def\t{{\bf t}}
\def\R{{\bf R}}
\def\Z{{\bf Z}}
\def\bP{{\bf P}}
\def\U{{\bf U}}
\def\V{{\bf V}}
\def\u{{\bf u}}
\def\v{{\bf v}}
\def\X{{\bf X}}
\def\x{{\bf x}}
\def\Y{{\bf Y}}
\def\y{{\bf y}}
\def\Z{{\bf Z}}
\def\1{{\bf 1}}
\def\trans{^{\rm T}}
\def\strans{^{*\rm T}}
\def\ztrans{^{0\rm T}}
\def\ttrans{^{t\rm T}}
\def\t1trans{^{t+1\rm T}}
\title{Statistically Guided Divide-and-Conquer for Sparse Factorization of Large Matrix\thanks{Authors are listed in alphabetical order.}}
\author[1]{Kun Chen\thanks{Correspondence: Kun Chen (kun.chen@uconn.edu) and Zemin Zheng (zhengzm@ustc.edu.cn)}}
\author[2]{Ruipeng Dong}
\author[1]{Wanwan Xu}
\author[2]{Zeming Zheng$^\dagger$}
\affil[1]{Department of Statistics, University of Connecticut, Storrs, CT}
\affil[2]{Department of Statistics \& Finance, University of Science and Technology of China, Anhui, China}
\date{}
\begin{document}

\maketitle
\begin{abstract}
\singlespacing
The sparse factorization of a large matrix is fundamental in modern statistical learning. In particular, the sparse singular value decomposition and its variants have been utilized in multivariate regression, factor analysis, biclustering, vector time series modeling, among others. The appeal of this factorization is owing to its power in discovering a highly-interpretable latent association network, either between samples and variables or between responses and predictors. However, many existing methods are either ad hoc without a general performance guarantee, or are computationally intensive, rendering them unsuitable for large-scale studies. We formulate the statistical problem as a sparse factor regression and tackle it with a divide-and-conquer approach. In the first stage of division, we consider both sequential and parallel approaches for simplifying the task into a set of co-sparse unit-rank estimation (CURE) problems, and establish the statistical underpinnings of these commonly-adopted and yet poorly understood deflation methods. In the second stage of division, we innovate a contended stagewise learning technique, consisting of a sequence of simple incremental updates, to efficiently trace out the whole solution paths of CURE. Our algorithm has a much lower computational complexity than alternating convex search, and the choice of the step size enables a flexible and principled tradeoff between statistical accuracy and computational efficiency. Our work is among the first to enable stagewise learning for non-convex problems, and the idea can be applicable in many multi-convex problems. Extensive simulation studies and an application in genetics demonstrate the effectiveness and scalability of our approach.

\noindent Keywords: biconvex; boosting; singular value decomposition; stagewise estimation.
\end{abstract}

\clearpage
\doublespacing

\section{Introduction} \label{sec:intro}
Matrix factorization/decomposition is fundamental in many statistical learning techniques, including both unsupervised learning methods
such as principal component analysis and matrix completion, and supervised learning methods such as partial least squares and reduced-rank
regression. In particular, singular value decomposition (SVD) and its various generalizations and extensions have been applied or utilized
for matrix/tensor data dimensionality reduction, feature extraction, anomaly detection, data visualization, among others. Such tasks are
nowadays routinely encountered in various fields, and some modern applications include latent semantic analysis in natural language processing \citep{Landauer1998},
collaborative filtering in recommender systems \citep{Adoma2005}, bi-clustering analysis with high-throughput genomic data \citep{lee2010biclustering, vounou2010}, association network learning in
expression quantitative trait loci (eQTLs) mapping analysis \citep{uematsu2019sofar}, and efficient parameterization and stabilization in the training of deep neural
networks \citep{SunZheng2017}.


It is now well understood that classical SVD and its associated multivariate techniques have poor statistical properties in high dimensional
problems, as the noise accumulation due to a large number of irrelevant variables
and/or irrelevant latent subspace directions could disguise the signal of interest to the extent that the estimation becomes entirely distorted.
To break such ``curses of dimensionality'', structural low-dimensional assumptions such as low-rankness and sparsity are often imposed on the
matrix/tensor objects, and correspondingly regularization techniques are then called to the rescue. There is no free lunch though: matrix
factorization is often expensive in terms of computational complexity and memory usage, and for large-scale problems the computational burdens
emerging from regularized estimation become even more amplified. The required regularized estimation can be computationally intensive and
burdensome due to various reasons: the optimization often requires iterative SVD operations, and the data-driven estimation of the
regularization parameters often requires a spectrum of models with varying complexities being fitted.

To alleviate the computational burden of spare factorization of matrix, many related methods have been proposed as repeatedly fitting regularized rank-one models; see, e.g., \citet{witten2009}, \citet{lee2010biclustering}, and \citet{Ahn2015}. In particular, \citet{chen2012jrssb} proposed the exclusive extraction algorithm which isolates the estimation of each rank-one component by removing the signals of the other components from the response matrix based on an initial estimator of the matrix factorization. Alternatively, \citet{MishraDeyChen2017} suggested extracting unit-rank
factorization one by one in a sequential fashion, each time with the previously extracted components removed from the current response matrix. Both of them adapted the alternating convex search (ACS) method to solve the unit-rank estimation problems. Despite their proven effectiveness in various applications, the properties of these deflation strategies are not well understood in a general high-dimensional setup. Moreover, for each unit-rank problem, the optimization needs to be repeated for a grid of tuning parameters for locating the optimal factorization along the paths, and even with a warm-start strategy, it can still lead to skyrocketing computational costs especially for large-scale problems.

In parallel to the aforementioned learning scheme of ``regularization + optimization'' in solving the unit-rank problems, there has also been a revival of interest into the so-called \textit{stagewise learning} \citep{efron2004least,zhao2007,tibshirani2015stagewise}. Unlike regularized estimation, a stagewise procedure builds a model from scratch, and gradually increases the model complexity in a sequence of simple learning steps. For instance, in stagewise linear regression \citep{efron2004least,HastTibsFrie2008}, a \textit{forward step} searches for the best predictor to explain the current residual and updates its coefficient by a small increment (in magnitude), and a \textit{backward step} may decrease the coefficient (in magnitude) of a previously selected predictor to correct for greedy selection.  This process is repeated until a model with a desirable complexity level is reached or the model becomes excessively large and ceases to be identifiable. Thus far, fast stagewise learning algorithms and frameworks have been developed mainly for convex problems, which are not applicable for non-convex problems and are in general lacking of theoretical support; a comprehensive review of stagewise learning can be found in \citet{tibshirani2015stagewise}, in which its connections and differences with various optimization and machine learning approaches such as steepest descend, boosting, and path-following algorithms were discussed.

In this paper, we formulate the statistical problem of sparse factorization of large matrix as a sparse factor regression and tackle it with a novel two-stage divide-and-conquer approach. In the first stage of ``division'', we consider both sequential and parallel approaches for simplifying the task into a set of co-sparse unit-rank estimation (CURE) problems, and establish the statistical underpinnings of these commonly-adopted and yet poorly understood deflation methods. In the second stage of ``division'', we innovate a contended stagewise learning technique, consisting of a sequence of simple incremental updates, to efficiently trace out the whole solution paths of CURE. Our algorithm has a much lower computational complexity than ACS, and the choice of the step size enables a flexible and principled tradeoff between statistical accuracy and computational efficiency. Our work is among the first to enable stagewise learning for non-convex problems, and the idea can be applicable in many bi-convex problems including sparse factorization of a tensor. Extensive simulation studies and an application in genetics demonstrate the effectiveness and scalability of our approach.

The rest of the paper is organized as follows. In Section \ref{sec:ssvd}, we present a general statistical setup with a sparse SVD formulation, and present the deflation strategies for reducing the general problem into a set of simpler CURE problems with comprehensive theoretical properties. Section \ref{sec:unitrank} provides detailed derivations of the stagewise estimation algorithm for tracing out the solution paths of CURE. Algorithmic convergence and computational complexity analysis are also shown in Section \ref{sec:unitrank}. In Section \ref{sec:simulation}, extensive simulation studies demonstrate the effectiveness and scalability of our approach. An application in genetics is presented in Section \ref{sec:yeast}. Section \ref{sec6} concludes with possible future work. All technical details are relegated to the Supplementary Materials.

\section{A Statistical Framework for Sparse Factorization}\label{sec:ssvd}



Throughout the paper, we use bold uppercase letter to denote matrix and bold lowercase letter to denote column vector. For any matrix $\bM=(m_{ij})$, denote by $\|\bM\|_F=\bigl(\sum_{i,j}m_{ij}^2\bigr)^{1/2}$, $\|\bM\|_1=\sum_{i,j}|m_{ij}|$, and $\|\bM\|_{\infty}=\max_{i,j}|m_{ij}|$ the Frobenius norm, entrywise $\ell_1$-norm, and entrywise $\ell_{\infty}$-norm, respectively. We also denote by $\|\cdot\|_2$ the induced matrix norm (operator norm), which becomes the $\ell_2$ norm for vectors.






\subsection{Co-sparse Factor Regression}

We briefly review the co-sparse factor regression model proposed by \citet{MishraDeyChen2017}, with which arises the problem of sparse SVD recovery.
Consider the multivariate linear regression model,
\begin{align*}
\y = \C^*\trans\x + \e,
\end{align*}
where $\y\in \mathbb{R}^q$ is the multivariate response vector, $\x \in \mathbb{R}^p$ is the multivariate predictor vector,
$\C^* \in \mathbb{R}^{p\times q}$ is the coefficient matrix, and $\e\in\mathbb{R}^{q}$ is the random error vector of zero mean.
Without loss of generality, suppose $\mbox{E}(\x)=\0$ and hence $\mbox{E}(\y) = \0$.
As in reduced-rank regression \citep{anderson1951, reinsel1998}, suppose the coefficient matrix $\C^*$ is of low-rank,
i.e., $\mbox{rank}(\C^*) = r^*$, $r^*\leq \min(p,q)$.

With a proper parameterization, the model possesses a supervised factor analysis
interpretation. To see this, denote $\mbox{cov}(\x) = \bGamma$, and write $\C^*=\U^*\widetilde{\V}\trans$ for some $\U^*\in\mathbb{R}^{p\times r^*}$ and $\widetilde{\V}\in\mathbb{R}^{q\times r^*}$.
Then $\U^*\trans\x$ can be interpreted as a set of ``supervised latent factors'', as some linear combinations of the components
of $\x$; to avoid redundancy, it is desirable to make these factors uncorrelated, which leads to the constraint that
$\mbox{cov}(\U^*\trans\x)=\U^*\trans\bGamma\U^*=\I_{r^*}$. Similar to the factor analysis, we let $\widetilde{\V} = \V^*\D^*$ with
$\V^*\trans\V^*=\I$, to ensure parameter identifiability. This leads to
\begin{align}
\C^*=\U^*\D^*\V^*\trans,\qquad \mbox{ s.t. } \U^*\trans\bGamma\U^*=\I_{r^*} = \V^*\trans\V^* =\I_{r^*}.\label{eq:decomposition1}
\end{align}
When both $\U^*$ and $\V^*$ assumed to be sparse, we say that $\C^*$ admits a (generalized) \textit{sparse SVD structure}.
The interpretation is very appealing in statistical modeling: the $q$ outcomes in $\y$ are related to the $p$ features in $\x$ only
through $r^*$ pathways, each of which may only involve small subsets of the outcomes and the features.

With $n$ independent observations, let $\Y=[\widetilde{\y}_1,\ldots,\widetilde{\y}_q]=[\y_1,\ldots,\y_n]\trans\in\mathbb{R}^{n\times q}$
be the response matrix, and $\X=[\widetilde{\x}_1,\ldots,\widetilde{\x}_{p}]=[\x_1,\ldots,\x_n]\trans\in \mathbb{R}^{n\times p}$ be
the predictor matrix. The sample-version of the multivariate regression model is written as
\begin{equation}\label{eq:samplemodelC}
\Y=\X\C^*+\E,
\end{equation}
where $\E \in \mathbb{R}^{n\times q}$ is the random error matrix. In addition, we assume that each column of $\X$ is normalized to have $\ell_2$ norm $\sqrt{n}$. The sparse SVD representation of $\C^*$ becomes
\begin{align}
\C^*=\U^*\D^*\V^*\trans,\qquad \mbox{ s.t. } (\frac{1}{\sqrt{n}}\X\U^*)\trans(\frac{1}{\sqrt{n}}\X\U^*) = \V^*\trans\V^* =\I_{r^*},
\label{eq:decomposition}
\end{align}
where $\X\trans\X/n$ replaces its population counterpart $\bGamma$.

For convenience, for $k=1,\dots,r^*$, we denote $\C_k^* = d_k^*\u_k^*\v_k\strans$, where $\u_k^*,\v_k^*$ are the $k$th column of $\U^*,\V^*$ respectively and
$d_k^*$ is the $k$th diagonal element of $\D^*$. In addition, let ${\bf P} = n^{-1}\X\trans\X$, which is a sample-version covariance matrix of
$\x$. Then we call $\C_k^*$ as the $k$th layer $\bf P$-orthogonal SVD of $\C^*$ \citep{Zheng2019}. Notice that when setting $\X$ as the $n\times n$ identity matrix, the above general model subsumes the matrix approximation problem in the unsupervised learning scenario. Further, if $\Y$ is partially observed, the model connects to the matrix completion problem and the trace regression \citep{Hastie2015}.

Our main focus is then the recovery of the $\bf P$-orthogonal sparse SVD of $\C^*$ under the co-sparse factor model as specified in \eqref{eq:samplemodelC} and \eqref{eq:decomposition}. We stress that we desire the elements of the matrix factorization, e.g., the left and right singular vectors, to be entrywisely sparse, rather than only making the matrix itself to be sparse. A matrix with sparse singular vectors could be sparse itself, but not vice versa in general. The many appealing features of the sparse SVD factorization have been well demonstrated in various scientific applications \citep{vounou2010,ChenChanStenseth2014, Ma15675}.

\subsection{Divide-and-Conquer through Unit-Rank Deflation}\label{sec:deflation}

The simultaneous presence of the low rank and the co-sparsity structure makes the sparse SVD recovery challenging. A joint estimation of all the sparse singular vectors may necessarily involve identifiability constraints such as orthogonality \citep{vounou2010,uematsu2019sofar}, which makes the optimization computationally intensive. Motivated by the power method for computing SVD \citep{Golub1996}, we take a divide-and-conquer deflation approach to tackle the problem. The main idea is to estimate the unit-rank components of $\X\C^*$ one by one, thus reducing the problem into a set of much simpler unit-rank problems.


We first present the ``division'' that we desire for the estimation of the co-sparse factor model in \eqref{eq:samplemodelC} and \eqref{eq:decomposition}. That is, we aim to simplify the multi-rank factorization problem into a set of unit-rank problems of the following form:
\begin{align}
    \min_{\C \in \mathbb{R}^{p \times q}} \left\{L(\C; \mathcal{T}_n) + \rho(\C;\lambda)\right\},\quad \mbox{s.t. rank}(\C)\leq 1,\label{eq:unitoptC}
\end{align}
where $\mathcal{T}_n = \{(\y_i,\x_i); i = 1,\ldots, n\}$ denotes the observed data, $L(\C; \mathcal{T}_n)=(2n)^{-1}\norm{\Y - \X\C}_{F}^{2}$ is the sum of squares loss function, and $\rho(\C;\lambda) = \lambda \|\C\|_1$ is the $\ell_1$ penalty term with tuning parameter $\lambda \geq 0$. Intuitively, this criterion targets on the best sparse unit-rank approximation of $\Y$ in the column space of $\X$. Intriguingly, due to the unit-rank constraint, the problem can be equivalently expressed as a \textit{co-sparse unit-rank regression} (CURE) \citep{MishraDeyChen2017},
\begin{align}
\begin{split}
\min_{d,\u,\v} & \left\{L(d\u\v\trans; \mathcal{T}_n) + \lambda \|d\u\v\trans\|_1\right\},\qquad \mbox{s.t.}\,
d\geq 0, n^{-1/2}\|\X\u\|_2=1, \|\v\|_2 = 1. \label{eq:unitopt}
\end{split}
\end{align}
The penalty term is multiplicative in that $\|\C\|_1 = \|d\u\v\trans\|_1 = d \|\u\|_1\|\v\|_1$, conveniently producing a co-sparse factorization. This is not really surprising, because the sparsity of a unit-rank matrix directly leads to the sparsity in both its left and right singular vectors. Therefore, formulating the problem in term of unit-rank matrix enables CURE to use a single penalty term to achieve co-sparse factorization.

We now consider two general deflation approaches, in order to reach the desired simplification in \eqref{eq:unitoptC}, or equivalently, in \eqref{eq:unitopt}. The first approach is termed ``\textit{parallel pursuit}'', which is motivated by the exclusive extraction algorithm proposed in \citet{chen2012jrssb}. This approach requires an initial estimator of the elements of the matrix factorization of $\C^*$; the method then isolates the estimation of each rank-one component $\C_k^*$ by removing from the response matrix $\Y$ the signals of the other components based on the initial estimator. To be specific, let $\widetilde{\C} = \sum_{k=1}^{r}\widetilde{\C}_k = \sum_{k=1}^{r}\widetilde{d}_k \widetilde{\u}_k \widetilde{\v}_k\trans$ be the initial estimator of $\C^*$ and its $\bP$-orthogonal SVD as in \eqref{eq:decomposition}. Then parallel pursuit solves the problems
\begin{equation}\label{eq:parallel}
\widehat{\C}_k  = \arg\min_{\C} \left\{L(\C + \sum_{j\neq k}^{r}\widetilde{\C}_j;
\mathcal{T}_n) + \rho(\C;\lambda)\right\},\qquad \mbox{s.t. rank}(\C)\leq 1,
\end{equation}
for $k = 1,\ldots, r$, which can be implemented in parallel. See Figure \ref{figs:diagram1} for an illustration.

\tikzstyle{block} = [draw,minimum size=2em]
\def\radius{.7mm}
\tikzstyle{branch}=[fill,shape=circle,minimum size=3pt,inner sep=0pt]

\begin{figure}[htp]
\centering
\begin{tikzpicture}[remember picture,scale=1, every node/.style={scale=0.8},every block/.style={scale=0.8}]
    \foreach \y in {1,2,3} {
        \node at (0,-\y*3) (input\y) {$\widetilde{d}_\y\widetilde{\B{u}}_\y\widetilde{\B{v}}_\y\trans$};
        \node[block] at (5,-\y*3) (block\y) {$\B{Y}-\B{X}\sum_{k\neq\y}\widetilde{d}_k\widetilde{\B{u}}_k\widetilde{\B{v}}_k\trans$};

        \node at (10,-\y*3) (output\y){$\widehat{d}_\y\widehat{\B{u}}_\y\widehat{\B{v}}_\y\trans$};

        \draw [->] (block\y) -- node [name=u] [above] {CURE on $\B{X}$} (output\y);
    }

    \foreach \x in {2,3} {
            \draw[->] (input\x) -- (block1);
    }
    \foreach \x in {1,3} {
            \draw[->] (input\x) -- (block2);
    }
    \foreach \x in {1,2} {
            \draw[->] (input\x) -- (block3);
    }


    \path (input1) -- coordinate (branch) (block1);

\end{tikzpicture}
\caption{An illustration of the parallel pursuit.}\label{figs:diagram1}
\end{figure}
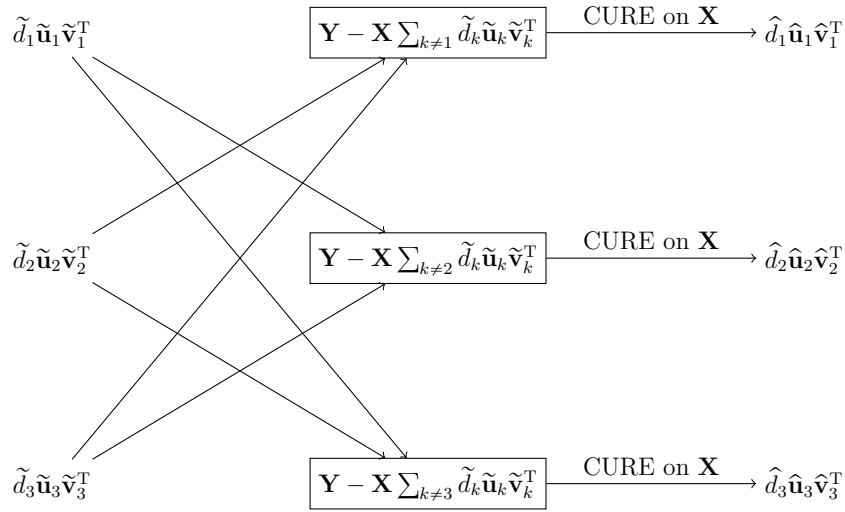

The second deflation approach is ``\textit{sequential pursuit}'', which was proposed by \citet{MishraDeyChen2017}. This method extracts unit-rank
factorization one by one in a sequential fashion, each time with the previously extracted components removed from the current response matrix. Specifically, the procedure sequentially solves
\begin{equation}\label{eq:sequential}
\widehat{\C}_k = \arg\min_{\C} \left\{L(\C + \sum_{j=0}^{k-1}\widehat{\C}_j; \mathcal{T}_n)+  \rho(\C;\lambda)\right\},
\qquad \mbox{s.t. rank}(\C)\leq 1,
\end{equation}
for $k = 1, \ldots r$, where $\widehat{\C}_0 = 0$, and $\widehat{\C}_k =\widehat{d}_k\widehat{\u}_k\widehat{\v}_k$ is the selected unit-rank solution (through tuning) in the $k$th step. See Figure \ref{figs:diagram2} for an illustration.

\tikzstyle{block} = [draw,minimum size=2em]
\def\radius{.7mm}
\tikzstyle{branch}=[fill,shape=circle,minimum size=3pt,inner sep=0pt]

\begin{figure}[htp]
\centering
\begin{tikzpicture}[remember picture,scale=1, every node/.style={scale=0.8},every block/.style={scale=0.8}]

        \node at (0,-3) (input1) {$\widehat{d}_0\widehat{\B{u}}_0\widehat{\B{v}}_0\trans=\0$};
        \node[block] at (5,-3) (block1) {$\bY_{1}=\B{Y}-\B{X}\widehat{d}_{0}\widehat{\B{u}}_{0}\widehat{\B{v}}_{0}\trans$};
        \node at (0,-6) (input2) {};
        \node[block] at (5,-6) (block2) {$\bY_{2}=\B{Y}_{1}-\B{X}\widehat{d}_{1}\widehat{\B{u}}_{1}\widehat{\B{v}}_{1}\trans$};
        \node at (0,-9) (input3) {};
        \node[block] at (5,-9) (block3) {$\bY_{3}=\B{Y}_{2}-\B{X}\widehat{d}_{2}\widehat{\B{u}}_{2}\widehat{\B{v}}_{2}\trans$};
    \draw[->] (input1) -- (block1);

    \foreach \y in {1,2,3} {

        \node at (10,-\y*3) (output\y){$\widehat{d}_\y\widehat{\B{u}}_\y\widehat{\B{v}}_\y\trans$};

        \draw [->] (block\y) -- node [name=u] [above] {CURE on $\B{X}$} (output\y);
    }

    \foreach \x in {1} {
            \draw[->] (output\x) -- (block2);
    }
    \foreach \x in {2} {
            \draw[->] (output\x) -- (block3);
    }



    \path (input1) -- coordinate (branch) (block1);

\end{tikzpicture}
\caption{An illustration of the sequential pursuit.}\label{figs:diagram2}
\end{figure}
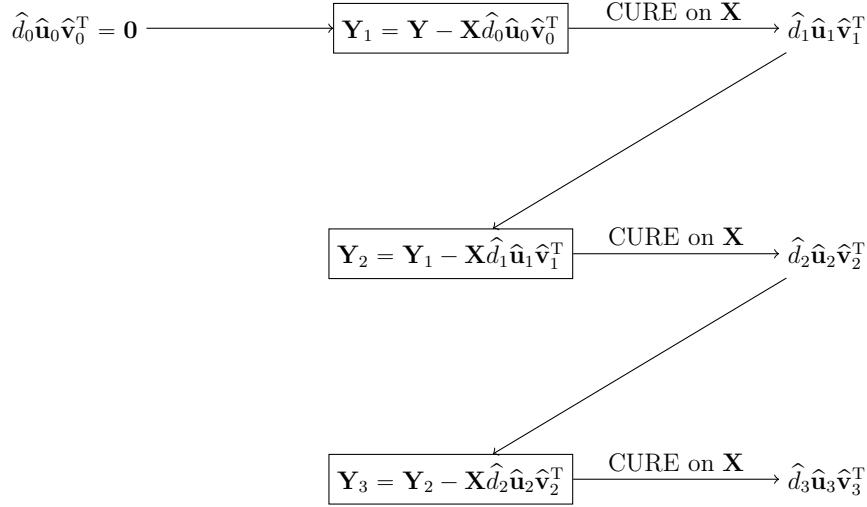

Both deflation approaches are intuitive. The parallel pursuit uses some initial estimator to isolate the estimation of the unit-rank components, and refines the estimation of each component around the vicinity of its initial estimator. Sequential pursuit, on the other hand, keeps extracting unit-rank components from the current residuals until no signal is left; the idea is related to a well-known fact derived from Eckart-Young Theorem \citep{eckart1936} in matrix approximation and reduced-rank regression, that is, the solution of $\min_{\C}\|\Y-\X\C\|_F^2$ s.t. $\mbox{rank}(\C)\leq r$ can be exactly recovered by sequentially fitting $r$ unit-rank problems with the residuals from the previous step. But apparently, such exact correspondence would break down with additional sparse regularization.


Indeed, many matrix decomposition related methods have been proposed as repeatedly fitting rank-one models; see, e.g., \citet{witten2009}, \citet{lee2010biclustering}, and \citet{Ahn2015}. Despite their proven effectiveness, many properties of these deflation strategies are yet to be explored. For example, what is the role of the initial estimator in parallel pursuit? Can parallel pursuit lead to substantially improved statistical performance (in terms of lowered error bound) comparing to that of the initial estimator? If Eckart-Young Theorem no longer applies, can sequential pursuit still work in the presence of regularization? What is the impact of ``noise accumulation'' since the subsequent estimation depends on the previous steps? How do the two deflation methods compare? We attempt to fill these gaps by performing a rigorous theoretical analysis, to be presented in the next section.


\subsection{Unveiling the Mystery of Deflation}

We now analyze the statistical properties of the deflation-based estimators of the sparse SVD components, i.e., $\widehat{\C}_k$, $k=1,\ldots, r$, from either the parallel pursuit in \eqref{eq:parallel} or the sequential pursuit in \eqref{eq:sequential}. Our treatment on the computation and the tuning of the CURE problem in \eqref{eq:unitoptC} will be deferred to Section \ref{sec:unitrank}, as the second part of our divide-and-conquer strategy. For now, we first concern theoretical analysis assuming CURE is solved globally, and then generalize our results to any computable local optimizer of CURE.




\subsubsection{Technical Conditions}

For any $p$ by $q$ nonzero matrix $\bDelta$, denote by $\bDelta_J$ the corresponding matrix of the same dimension that keeps the entries of $\bDelta$ with indices in set $J$ while sets the others to be zero. We need the following regularity conditions.

\begin{condition}\label{cond2}
The random noise vectors are independently and identically distributed as $\e_i \sim N(\boldsymbol{0}, \bSig)$, $i = 1,\ldots, n$.  Denote the $j$th diagonal entry of $\bSig$ as $\sigma_{j}^{2}$; we assume $\sigma_{\max}^{2} = \max_{1\le j \le p}\sigma_{j}^{2}$ is bounded from above. 
\end{condition}

\begin{condition}\label{cond3}
For $1 \leq k \leq r^{*}$, the gaps between the successive singular values are positive, i.e., $\delta_{k}^* = d_{k}^* - d_{k+1}^* > 0$, where $d_{k}^* > 0$ is the $k$th singular value of $n^{-1/2}\X\C^{*}$.
\end{condition}

\begin{condition}\label{cond1}
There exists certain sparsity level $s$ with a positive constant $\rho_{l}$ such that 
\begin{align*}
  \inf_{\bDelta} \left\{\frac{\norm{\X\bDelta}_{F}^{2}}{n(\norm{\bDelta_{J}}_{F}^{2}\vee\norm{\bDelta_{J_{s}^{c}}}_{F}^{2})}:
  \abs{J} \le s, ~
  \norm{\bDelta_{J^{c}}}_{1} \le 3 \norm{\bDelta_{J}}_{1}\right\} \geq \rho_{l},
\end{align*}
where $\bDelta_{J_{s}^{c}}$ is formed by keeping the $s$ entries of $\bDelta_{J^{c}}$ with largest absolute values and setting the others to be zero. 
\end{condition}

\begin{condition}\label{cond4}
There exists certain sparsity level $s$ with a positive constant $\phi_{u}$ such that
  \begin{equation*}
    \sup_{\bDelta}\left\{\frac{\abs{J}^{1/2}\norm{\X\trans\X\bDelta}_{\max}}{n\norm{\bDelta_{J}}_{F}}: \abs{J} \le s,~
    \norm{\bDelta_{J^{c}}}_{1} \le 3\norm{\bDelta_{J}}_{1}\right\} \le \phi_{u}. 
  \end{equation*}
\end{condition}



Condition \ref{cond2} assumes the random errors in model \eqref{eq:samplemodelC} are Gaussian for simplicity. In fact, our technical argument still applies as long as the tail probability bound of the noise decays exponentially; see the inequality \eqref{gaus} in the proof of Lemma \ref{se_lemma2}. Condition \ref{cond3} is imposed to ensure the identifiability of the $r^*$ latent factors (singular vectors). Otherwise, the targeted unit rank matrices would not be distinguishable. Similar assumptions can be found in \citet{MishraDeyChen2017} , \citet{Zheng2017}, and \citet{Zheng2019}, among others.



Condition \ref{cond1} is a matrix version of the restricted eigenvalue (RE) condition proposed in \citet{bickel2009re}, which is typically imposed in $\ell_1$-penalization to restrict the correlations between the columns of $\X$ within certain sparsity level, thus guaranteeing the identifiability of the true regression coefficients. The only difference is that we consider the estimation of matrices here instead of vectors. Since the Frobenius norm of a matrix can be regarded as the $\ell_2$-norm of the stacked vector consisting of the columns of the matrix, Condition \ref{cond1} is equivalent to the RE condition in the univariate response setting. Similarly, Condition \ref{cond4} is a matrix version of the cone invertibility factor \citep{ye2010rate} type of condition that allows us to control the entrywise estimation error. The integer $s$ in both conditions acts as a theoretical upper bound on the sparsity level of the true coefficient matrices, the requirement on which will be shown to be different in the two deflation approaches.

\subsubsection{Main Results}
Now we are ready to present the main results. Denote by $s_k = \norm{\C_k^*}_0$ for $1 \leq k \leq r^*$ and $s_0=\norm{\C^*}_0$. The following theorem characterizes the estimation accuracy in different layers of the sequential pursuit.

\begin{theorem}[Convergence rates of the sequential pursuit]\label{se_th}
  Suppose Conditions \ref{cond2}--\ref{cond4} hold with the sparsity level $s \geq \max_{1\leq k\leq r^*}s_k$. Choose $\lambda_{1} = 2\sigma_{\max} \sqrt{2\alpha\log(pq)/n}$ for some constant $\alpha > 1$ and $\lambda_k = \Pi_{\ell=1}^{k-1}(1+\eta_{\ell})\lambda_1$. 
  The following results hold uniformly over $1 \leq k \leq r^*$ with probability at least $1 - (pq)^{1-\alpha}$,
  \begin{align*}
   &\norm{\widehat{\bDelta}_{k}}_{F} = O(\theta_{k}\sqrt{s_k}\lambda_k) = O(\gamma_{k}\theta_k\sqrt{s_k\log(pq)/n}),\\ %
   &\norm{\widehat{\bDelta}_{k}}_{1} = O(\theta_{k}s_k \lambda_k)  = O(\gamma_{k}\theta_ks_k\sqrt{\log(pq)/n}), %
  \end{align*}
  where $\theta_k = d_{k}^*/\delta_{k}^*$, $\eta_k = c\theta_k$ with constant $c = 24\phi_u\rho_{l}^{-1}$, and $\gamma_{k}=\Pi_{\ell=1}^{k-1}(1+\eta_{\ell})$ with $\gamma_1 = 1$.
\end{theorem}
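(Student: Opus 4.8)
The plan is to induct on the layer index $k$, viewing each step of the sequential pursuit \eqref{eq:sequential} as a single co-sparse unit-rank regression whose ``effective noise'' absorbs the genuine error matrix, the not-yet-extracted deeper signal, and the propagated error from the layers already deflated. Writing $\widehat{\bDelta}_k = \widehat{\C}_k - \C_k^*$ and $\Y_k = \Y - \X\sum_{j<k}\widehat{\C}_j$, optimality of $\widehat{\C}_k$ against the feasible unit-rank competitor $\C_k^*$ yields the basic inequality
\[
  \frac{1}{2n}\norm{\X\widehat{\bDelta}_k}_F^2 \le \frac{1}{n}\inner{\X\widehat{\bDelta}_k,\,\W_k} + \lambda_k\bigl(\norm{\C_k^*}_1 - \norm{\widehat{\C}_k}_1\bigr),
\]
where $\W_k = \E + \X\sum_{j>k}\C_j^* - \X\sum_{j<k}\widehat{\bDelta}_j$. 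The argument then proceeds by bounding the cross term piece by piece and feeding the outcome into the standard cone/restricted-eigenvalue machinery.

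First I would control the genuine-noise contribution $n^{-1}\X\trans\E$: Condition \ref{cond2}, the exponential tail inequality \eqref{gaus} of Lemma \ref{se_lemma2}, and a union bound over the $pq$ entries give $n^{-1}\norm{\X\trans\E}_{\max} \le \lambda_1/2$ on a single event of probability at least $1-(pq)^{1-\alpha}$; since this event does not depend on $k$, it is what makes the conclusion hold uniformly over $1 \le k \le r^*$. Next, the deeper-signal contribution $n^{-1}\inner{\X\widehat{\bDelta}_k,\,\X\sum_{j>k}\C_j^*}$ is the genuinely new term: using the $\bP$-orthogonal SVD structure $n^{-1/2}\X\C_j^* = d_j^*\a_j\v_j\strans$ (with orthonormal $\a_j = n^{-1/2}\X\u_j^*$), the dominant block $d_k^*\a_k\v_k\strans$ drops out by orthogonality, and what remains measures the alignment of the estimated directions of $\widehat{\C}_k$ with the deeper singular directions; invoking the positive gap $\delta_k^*$ of Condition \ref{cond3} through a Wedin/Davis--Kahan-type bound controls this alignment by the condition-number-like factor $\theta_k = d_k^*/\delta_k^*$. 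Finally, for the propagated error $\X\sum_{j<k}\widehat{\bDelta}_j$ I would apply the cone-invertibility Condition \ref{cond4} entrywise to each $\widehat{\bDelta}_j$, converting $n^{-1}\norm{\X\trans\X\widehat{\bDelta}_j}_{\max}$ into a multiple of $\norm{\widehat{\bDelta}_j}_F/\sqrt{s_j}$ and then inserting the inductive Frobenius bounds.

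Once $n^{-1}\norm{\X\trans\W_k}_{\max}$ is dominated by $\lambda_k/2$, the familiar argument places $\widehat{\bDelta}_k$ in the cone $\{\norm{\bDelta_{J^c}}_1 \le 3\norm{\bDelta_J}_1\}$, so the restricted-eigenvalue Condition \ref{cond1} converts the prediction error $n^{-1}\norm{\X\widehat{\bDelta}_k}_F^2$ into a lower bound proportional to $\rho_l\norm{\widehat{\bDelta}_k}_F^2$ and delivers the Frobenius rate $\norm{\widehat{\bDelta}_k}_F = O(\theta_k\sqrt{s_k}\,\lambda_k)$, while reapplying Condition \ref{cond4} upgrades this to the $\ell_1$ rate $\norm{\widehat{\bDelta}_k}_1 = O(\theta_k s_k\lambda_k)$. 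Tracking the constants through the factor $3$ in the cone, the $2$ in the basic inequality, and the $\phi_u,\rho_l^{-1}$ coming from Conditions \ref{cond4} and \ref{cond1}, the propagated-error term forces the penalty to grow as $\lambda_k = (1+\eta_{k-1})\lambda_{k-1}$ with $\eta_k = c\theta_k$ and $c = 24\phi_u\rho_l^{-1}$; solving this recursion gives exactly $\lambda_k = \gamma_k\lambda_1$ with $\gamma_k = \prod_{\ell=1}^{k-1}(1+\eta_\ell)$, and substituting $\lambda_1 = 2\sigma_{\max}\sqrt{2\alpha\log(pq)/n}$ produces the stated rates.

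I expect the deeper-signal cross term to be the main obstacle. Unlike ordinary Lasso there is an extra signal $\X\sum_{j>k}\C_j^*$ present in the residual response, and unlike the unregularized Eckart--Young setting the $\ell_1$ penalty prevents $\C_k^*$ from being an exact best unit-rank fit, so one cannot simply argue this cross term vanishes. The delicate point is that $\inner{n^{-1/2}\X\widehat{\C}_k,\,\sum_{j>k}d_j^*\a_j\v_j\strans}$ depends on the very quantity $\widehat{\bDelta}_k$ one is trying to bound, through the alignment between the estimated and the deeper singular directions, so the estimate is self-referential and must be closed by showing, via the gap $\delta_k^*$, that this alignment scales with $\theta_k$ rather than with $\norm{\sum_{j>k}\C_j^*}$. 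A secondary difficulty is purely bookkeeping: one must verify that each accumulated error $\widehat{\bDelta}_j$ remains in the cone so that Conditions \ref{cond1} and \ref{cond4} stay applicable, and that the recursion telescopes cleanly into the product $\gamma_k$.
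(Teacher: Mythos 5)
Your overall architecture---the basic inequality against the competitor $\C_k^*$, the three-way split of the effective noise into genuine noise, deeper signal, and propagated error, the Gaussian tail plus union bound on a single event (giving uniformity over $k$), Condition \ref{cond4} plus induction for the propagated error, and the recursion $\lambda_k=(1+\eta_{k-1})\lambda_{k-1}$ with its product solution---coincides with the paper's proof. The gap is at the step you yourself flag as the main obstacle, and it is genuine. First, your reduction ``once $n^{-1}\norm{\X\trans\W_k}_{\max}$ is dominated by $\lambda_k/2$'' cannot be achieved with $\W_k=\E+\X\sum_{j>k}\C_j^*-\X\sum_{j<k}\widehat{\bDelta}_j$: the middle piece is signal, not noise; $n^{-1}\norm{\X\trans\X\C_j^*}_{\max}$ is of the order of the singular value $d_j^*$ and does not shrink with $n$, so no penalty of order $\sqrt{\log(pq)/n}$ can dominate it, and folding it into a max-norm noise bound would destroy the stated rates. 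Second, the alternative you sketch---a Wedin/Davis--Kahan bound on the alignment of $\widehat{\C}_k$'s directions with the deeper singular directions---is circular in exactly the way you worry about: such perturbation bounds presuppose control of the perturbation, i.e.\ of $\widehat{\bDelta}_k$, which is the quantity being estimated. You correctly state that the loop ``must be closed'' via the gap $\delta_k^*$, but you do not supply the closing mechanism, and perturbation theory is not it.

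The paper's mechanism is Lemma \ref{se_lemma1}, which is not a perturbation bound but a deterministic algebraic inequality valid for \emph{every} feasible unit-rank matrix with the $\bP$-orthogonal normalization, however far it is from $\C_k^*$: expanding $n^{-1/2}\X\widehat{\u}_k$ and $\widehat{\v}_k$ in the orthonormal systems $\{n^{-1/2}\X\u_j^*\}$ and $\{\v_j^*\}$ and applying Cauchy--Schwarz together with $\widehat{\tau}_k+\widehat{\tau}_k^{-1}\ge 2$ and $(1-a_k^2)(1-b_k^2)\le(1-a_kb_k)^2$ gives
\begin{equation*}
  2\Bigl|\sum_{j=k+1}^{r^*}\inner{\X\C_j^*,\X\widehat{\C}_k}\Bigr|
  \le \omega_k\norm{\X\widehat{\bDelta}_k}_F^2,
  \qquad \omega_k=1-\delta_k^*/d_k^*<1.
\end{equation*}
Thus the deeper-signal cross term is absorbed \emph{multiplicatively} into the quadratic term of the basic inequality, shrinking its coefficient from $1$ to $1-\omega_k=1/\theta_k$; this degradation---not any bound on the alignment itself---is the sole source of $\theta_k$ in the rates. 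Indeed the alignment is never bounded: it can be arbitrarily bad, but whenever it is bad the prediction error $\norm{\X\widehat{\bDelta}_k}_F^2$ is proportionally large, which is precisely what closes your self-referential loop. Once this absorption is in place, only the genuine noise and the propagated error need to be dominated by $\lambda_k/2$ in max-norm, and the remainder of your argument (cone membership, Condition \ref{cond1}, the recursion for $\lambda_k$) goes through exactly as in the paper.
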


Theorem \ref{se_th} presents the estimation error bounds in terms of both Frobenius and $\ell_1$ norms for the co-sparse unit rank matrix estimators in sequential pursuit under mild and reasonable conditions. This is nontrivial since the subsequent layers play the role of extra noises in the estimation of each unit-rank matrix. We address this issue by showing that its impact is secondary due to the orthogonality between different layers, as demonstrated in Lemma \ref{se_lemma1}. The results hold uniformly over all true layers with a significant probability that approaches one in polynomial orders of the product of the two dimensions $p$ and $q$.

The regularization parameter $\lambda_k$ reflects the minimum penalization level needed to suppress the noise in the $k$th layer, which consists of two parts. One is from the random noise accompanied with the original response variables, while the other is due to the accumulation of the estimation errors from all previous layers. The latter is of a larger magnitude than the former one such that the penalization level increases almost exponentially with $k$. This can be inevitable in the sequential procedure since the $k$th layer estimator is based on the residual response matrix after extracting the previous layers. Hopefully, the estimation consistency is generally guaranteed for all the significant unit-rank matrices as long as the true regression coefficient matrix $\C^*$ is of sufficiently low rank. 

When the singular values are well separated, for the first few layers, the estimation accuracy is about $O(\sqrt{s_k\log(pq)/n})$, which is close to the optimal rate $O(\sqrt{(s_u+s_v)\log(p\vee q)/n})$ for the estimation of $\C^*$ established in \citet{ma2014adaptive}. When the true rank is one, the main difference between the two rates lies in the sparsity factors, where $(s_u + s_v)$ is a sum of the sparsity levels of the left and right singular vectors, while our sparsity factor is the product of them. However, the optimal rate is typically attained through some nonconvex algorithms \citep{ma2014adaptive,yu2018recovery,uematsu2019sofar} that search the optimal solution in a neighborhood of $\C^*$. In contrast, our CURE algorithm enjoys better computational efficiency and stability.

Last but not least, in view of the correlation constraint on the design matrix $\X$ ($s \geq \max_{1\leq k\leq r^*}s_k$), the sequential pursuit is valid as long as the number of nonzero entries in each layer $\C_k^*$ is within the sparsity level $s$ imposed in Conditions \ref{cond1} and \ref{cond4}. It means that in practice, the sequential deflation strategy can recover some complex networks layer by layer, which is not shared by either the parallel pursuit or other methods that directly target on estimating the multi-rank coefficient matrix.

\smallskip

We then turn our attention to the statistical properties of the parallel pursuit. The key point of this deflation strategy is to find a relatively accurate initial estimator such that the signals from the other layers/components except the targeted one can be approximately removed. The reduced-rank regression estimator may not be a good choice under high dimensions, since it does not guarantee the estimation consistency due to the lack of sparsity constraint. Similar to \citet{uematsu2019sofar}, we mainly adopt the following lasso initial estimator,
\begin{equation*}
  \widetilde{\C} = \underset{\C}{\arg\min}~ (2n)^{-1}\norm{\Y - \X\C}_{F}^{2} + \lambda_{0}\norm{\C}_{1},
\end{equation*}
which can be efficiently solved by various algorithms \citep{friedman2010regularization}. Under the same RE condition (Condition \ref{cond1}) as the sequential pursuit on the design matrix $\X$ with a tolerated sparsity level $s \geq s_0$, where $s_0$ indicates the number of nonzero entries in $\C^*$, we show that this lasso initial estimator is consistent with the convergence rate of $O(\sqrt{s_0\log(pq)/n})$ in Lemma \ref{para_th_init}.

Based on $\widetilde{\C}$, we can obtain the initial unit rank estimates $\widetilde{\C}_{k}^0$ for different layers through a $\bP$-orthogonal SVD. However, these unit rank estimates are not guaranteed to be sparse even if the lasso estimator $\widetilde{\C}$ is a sparse one, which causes additional difficulties in high dimensions. Thus, to facilitate the theoretical analysis, our initial $k$th layer estimate $\widetilde{\C}_{k}$ takes the $s$ largest components of $\widetilde{\C}_{k}^0$ in terms of absolute values while sets the others to be zero, where $s$ is the upper bound on the sparsity level defined in Conditions \ref{cond1} and \ref{cond4}. We will show that these $s$-sparse initial estimates $\widetilde{\C}_{k}$ can maintain the estimation accuracy of the initial unit-rank estimates $\widetilde{\C}_{k}^0$ without imposing any signal assumption, regardless of the thresholding procedure. In practice, we can use the SVD estimates directly as the impact of fairly small entries is negligible.



\begin{theorem}[Convergence rates of the parallel pursuit]\label{para_th}
Suppose Conditions \ref{cond2}--\ref{cond4} hold with the sparsity level $s \geq \max_{0\leq k\leq r^*}s_k$. Choose $\lambda_{0} = 2\sigma_{\max} \sqrt{2\alpha\log(pq)/n}$ with constant $\alpha > 1$ for the initial Lasso estimator $\widetilde{\C}$, and $\lambda_{k} = 2C\psi_k\sqrt{\log(pq)/n}$ for some positive constant $C$. The following results hold uniformly over $1 \leq k \leq r^*$ with probability at least $1 - (pq)^{1-\alpha}$,
  \begin{equation*}
  \begin{aligned}
    &\norm{\widehat{\bDelta}_{k}}_{F} = O(\sqrt{s_k}\lambda_{k}) = O(\psi_k\sqrt{s_{k}\log(pq)/n}), \\
    &\norm{\widehat{\bDelta}_{k}}_{1} = O(s_k \lambda_{k}) = O(\psi_k s_{k}\sqrt{\log(pq)/n}),
  \end{aligned}
  \end{equation*}
  where $\psi_k = d_1^*d_c^*/(d_k^*\min[\delta_{k-1}^*, \delta_{k}^*])$ with $d_c^*$ the largest singular value of $\C^*$.
\end{theorem}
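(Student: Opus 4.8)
The plan is to reduce the parallel-pursuit subproblem for each layer to a single co-sparse unit-rank regression with an enlarged \emph{effective noise}, and then to control that noise by separating a stochastic part from a deterministic initialization-bias part. Writing $\Y_k = \Y - \X\sum_{j\neq k}\widetilde{\C}_j$ and using $\Y = \X\C_k^* + \X\sum_{j\neq k}\C_j^* + \E$, the subproblem \eqref{eq:parallel} becomes a CURE problem with response $\Y_k = \X\C_k^* + \widetilde{\E}_k$, where $\widetilde{\E}_k = \E + \X\sum_{j\neq k}(\C_j^* - \widetilde{\C}_j)$. Since $\C_k^*$ is rank-one and therefore feasible, optimality of $\widehat{\C}_k$ gives, after expanding $\|\Y_k - \X\widehat{\C}_k\|_F^2 - \|\Y_k - \X\C_k^*\|_F^2$ and applying H\"older's inequality, the basic inequality
\begin{equation*}
(2n)^{-1}\|\X\widehat{\bDelta}_k\|_F^2 \le n^{-1}\|\X\trans\widetilde{\E}_k\|_{\infty}\,\|\widehat{\bDelta}_k\|_1 + \lambda_k\bigl(\|\C_k^*\|_1 - \|\widehat{\C}_k\|_1\bigr),
\end{equation*}
with $\widehat{\bDelta}_k = \widehat{\C}_k - \C_k^*$.

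On the event $n^{-1}\|\X\trans\widetilde{\E}_k\|_{\infty} \le \lambda_k/2$, the standard decomposition over the support $J_k = \supp(\C_k^*)$ (of size $s_k \le s$) forces the cone constraint $\|(\widehat{\bDelta}_k)_{J_k^c}\|_1 \le 3\|(\widehat{\bDelta}_k)_{J_k}\|_1$. Conditions \ref{cond1} and \ref{cond4} then apply on this cone: the restricted-eigenvalue bound turns $\|\X\widehat{\bDelta}_k\|_F^2$ into a lower bound $n\rho_l\|(\widehat{\bDelta}_k)_{J_k}\|_F^2$, and combining with $\|(\widehat{\bDelta}_k)_{J_k}\|_1 \le \sqrt{s_k}\,\|(\widehat{\bDelta}_k)_{J_k}\|_F$ yields $\|\widehat{\bDelta}_k\|_F = O(\sqrt{s_k}\lambda_k)$ and $\|\widehat{\bDelta}_k\|_1 = O(s_k\lambda_k)$, which are exactly the claimed shapes. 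Thus the entire theorem reduces to verifying the noise event with the stated $\lambda_k \asymp \psi_k\sqrt{\log(pq)/n}$.

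To verify it I would split $n^{-1}\|\X\trans\widetilde{\E}_k\|_{\infty} \le n^{-1}\|\X\trans\E\|_{\infty} + n^{-1}\|\X\trans\X\sum_{j\neq k}(\C_j^* - \widetilde{\C}_j)\|_{\infty}$. The stochastic term is handled by Condition \ref{cond2}: each entry of $n^{-1}\X\trans\E$ is Gaussian with variance at most $\sigma_{\max}^2/n$ because the columns of $\X$ have norm $\sqrt{n}$, so a union bound over the $pq$ entries gives $n^{-1}\|\X\trans\E\|_{\infty} \le \sigma_{\max}\sqrt{2\alpha\log(pq)/n}$ with probability at least $1 - (pq)^{1-\alpha}$; this is a single event serving all layers at once, which is what produces the uniform-in-$k$ statement. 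The deterministic bias term is controlled through Condition \ref{cond4}, which bounds $n^{-1}\|\X\trans\X\bDelta\|_{\infty}$ by $\phi_u|J|^{-1/2}\|\bDelta_J\|_F$, reducing the task to bounding the Frobenius errors of the per-layer initial estimates $\widetilde{\bDelta}_j = \widetilde{\C}_j - \C_j^*$.

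The main obstacle is precisely this last piece: propagating the aggregate lasso accuracy of Lemma \ref{para_th_init}, $\|\widetilde{\C} - \C^*\|_F = O(\sqrt{s_0\log(pq)/n})$, down to the individual layers while tracking the spectral geometry. Here I would invoke a Wedin/Davis--Kahan-type perturbation bound on the $\bP$-orthogonal SVD, viewing $n^{-1/2}\X\widetilde{\C}$ as a perturbation of $n^{-1/2}\X\C^*$, whose singular values are exactly the $d_k^*$ of Condition \ref{cond3}. The perturbation of the $k$th singular subspace is governed by the adjacent gaps, producing the factor $1/\min[\delta_{k-1}^*, \delta_k^*]$, while reconstructing the rank-one error back in the original $\C$-scale and accounting for the singular-value magnitudes introduces the remaining factors $d_1^*d_c^*/d_k^*$; together these assemble $\psi_k = d_1^*d_c^*/(d_k^*\min[\delta_{k-1}^*, \delta_k^*])$. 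I would also check that the hard-thresholding to the $s$ largest entries defining $\widetilde{\C}_j$ at most doubles this rate, since $\C_j^*$ is itself $s$-sparse (because $s \ge s_j$) and $\widetilde{\C}_j$ is the best $s$-sparse approximation of $\widetilde{\C}_j^0$. Finally, the requirement $s \ge \max_{0\le k\le r^*}s_k$ is exactly what makes the initial-estimator consistency ($s \ge s_0$) and every per-layer application of Conditions \ref{cond1} and \ref{cond4} ($s \ge s_k$) simultaneously valid; collecting the two noise bounds fixes $\lambda_k = 2C\psi_k\sqrt{\log(pq)/n}$ and closes the argument.
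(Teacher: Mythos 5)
Your overall skeleton --- the basic inequality from rank-one feasibility of $\C_k^*$, the cone constraint, Conditions \ref{cond1} and \ref{cond4}, a Davis--Kahan/Wedin perturbation bound on the $\bP$-orthogonal SVD of the lasso estimator (this is exactly the paper's Lemmas \ref{para_th_init} and \ref{para_init_svd}), and the observation that thresholding to the top $s$ entries at most doubles the per-layer error --- matches the paper's proof. But there is a genuine gap in how you verify the noise event. You split the effective noise as
\begin{equation*}
n^{-1}\norm{\X\trans(\Y_k - \X\C_k^*)}_{\max} \le n^{-1}\norm{\X\trans\E}_{\max} + n^{-1}\Bigl\|\X\trans\X\sum_{j\neq k}(\C_j^* - \widetilde{\C}_j)\Bigr\|_{\max},
\end{equation*}
and then control the second term \emph{layer by layer} through Condition \ref{cond4} and the per-layer SVD perturbation bounds. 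Each layer $j\neq k$ contributes $O(\phi_u\psi_j\sqrt{\log(pq)/n})$, so the smallest penalty level this argument can certify is of order $\bigl(\sigma_{\max}+\phi_u\sum_{j\neq k}\psi_j\bigr)\sqrt{\log(pq)/n}$, not $\psi_k\sqrt{\log(pq)/n}$. The eigen-factor $\psi_j$ is driven by layer $j$'s own singular value $d_j^*$ and adjacent gaps $\delta_{j-1}^*,\delta_j^*$, so $\sum_{j\neq k}\psi_j$ is in general incomparable to, and can be far larger than, $\psi_k$: one badly separated layer $j\neq k$ would inflate your bound for every other layer. This defeats the very content of the theorem --- that parallel pursuit has no cross-layer noise accumulation and the $k$th error depends only on the $k$th layer's geometry --- so your route, as written, proves a strictly weaker statement.

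The missing idea is the paper's treatment of the residual as a single object. Write
\begin{equation*}
\Y_k - \X\C_k^* = (\Y - \X\widetilde{\C}) + \X(\widetilde{\C}_k - \C_k^*),
\end{equation*}
which is algebraically the same decomposition as yours since $\sum_{j\neq k}(\C_j^*-\widetilde{\C}_j) = (\C^*-\widetilde{\C}) + (\widetilde{\C}_k - \C_k^*)$, and bound the first piece \emph{as a whole} by the KKT (stationarity) condition of the lasso problem \eqref{para_init_estimator_lasso}: $n^{-1}\norm{\X\trans(\Y - \X\widetilde{\C})}_{\max}\le\lambda_0$. Then the only term carrying an eigen-factor is the $k$th one,
\begin{equation*}
n^{-1}\norm{\X\trans\X(\widetilde{\C}_k-\C_k^*)}_{\max} \le \phi_u s^{-1/2}\norm{\widetilde{\C}_k-\C_k^*}_F = O\bigl(\phi_u\psi_k\sqrt{\log(pq)/n}\bigr)
\end{equation*}
using $s\ge s_0$, so $\lambda_k = 2C\psi_k\sqrt{\log(pq)/n}$ suffices and the other layers' $\psi_j$ never appear. (Equivalently, you could note $\sum_{j\neq k}\widetilde{\bDelta}_j = \widetilde{\bDelta} - \widetilde{\bDelta}_k$ and bound the aggregate lasso error $\widetilde{\bDelta}$ once, via its own cone property, rather than summing per-layer bounds.) With that replacement, the rest of your argument goes through as written.
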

Based on the lasso initial estimator, Theorem \ref{para_th} demonstrates that the parallel pursuit achieves accurate estimation of different layers and reduces the sparsity factor in the convergence rates from $s_0$ to $s_k$, where the sparsity level $s_0$ of the entire regression coefficient matrix can be about $r^*$ times of $s_k$. Moreover, as there is no accumulated noises in the parallel estimation of different layers, we do not see an accumulation factor $\gamma_{k}$ here such that the penalization parameters keep around a uniform magnitude. The eigen-factor $\psi_k$ plays a similar role as $\theta_k$ in the convergence rates for sequential pursuit in Theorem \ref{se_th}, both of which can be bounded from above under the low-rank and sparse structures. Thus, the estimation accuracy of all the significant unit-rank matrices is about the same as that of the first layer in the sequential pursuit, which reveals the potential superiority of the parallel pursuit. 

In view of the technical assumptions, the sequential and the parallel pursuits are very similar, while the main difference lies in the requirement on the sparsity level $s$. As discussed after Condition \ref{cond1}, the integer $s$ constrains the correlations between the columns of $\X$ to ensure the identifiability of the true supports, thus can be regarded as fixed for a given design matrix. Since the parallel pursuit requires not only the sparsity level $s_k$ of each individual layer but also the overall sparsity level $s_0$ to be no larger than $s$, it puts a stricter constraint on correlations among the predictors. In other words, when the true coefficient matrix $\C^*$ is not sufficiently sparse, the lasso initial estimator may not be accurate enough to facilitate the subsequent parallel estimation, in which case the sequential pursuit could enjoy some advantage.




The previous two theorems guarantee the statistical properties of the global optimizers of the CURE problems from the two different deflation strategies, respectively. But it is worth pointing out that although the CURE problem adopts a bi-convex form, it is generally not convex such that the computational solution is usually a local optimizer rather than a global one. Fortunately, it has been proved in univariate response settings that any two sparse local optimizers can be close to each other under mild regularity conditions \citep{zhang2012general,fan2013}. Therefore, when the global minimizer is also sparse, the computational solution will share similar asymptotic properties. We illustrate this phenomenon in multi-response settings through the following theorem.

\begin{theorem}\label{partial_vs_global_optimum_th}
  For any $k$, $1\le k \le r^{*}$, let $\widehat{\C}_{k}^L$ be a computable local optimizer which satisfies $\norm{\widehat{\C}_{k}^L}_{0}=O(s_{k})$
  and $n^{-1}\norm{\X\trans(\Y_{k}-\X\widehat{\C}_{k}^L)}_{\max}=O(\lambda_{k})$, $\min_{\norm{\boldsymbol{\gamma}}_{2}=1,\norm{\boldsymbol{\gamma}}_{0}\le Cs_k} n^{-1/2}\norm{\X\boldsymbol{\gamma}}_{2} \ge \kappa_0$ for some positive constant $\kappa_0$ and sufficiently large positive constant $C$, and $\norm{\widehat{\C}_{k}}_{0}=O(s_{k})$. Then under the same assumptions of Theorem \ref{se_th} (Theorem \ref{para_th}), $\widehat{\C}_{k}^L$ achieves the same estimation error bounds as the global optimizer. 
\end{theorem}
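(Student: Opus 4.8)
The plan is to bound the discrepancy between the computable local optimizer $\widehat{\C}_k^L$ and the global optimizer $\widehat{\C}_k$ of the $k$th CURE problem, and then transfer the rates already established for $\widehat{\C}_k$ in Theorem \ref{se_th} (resp.\ Theorem \ref{para_th}) to $\widehat{\C}_k^L$ by the triangle inequality. This follows the principle of \citet{zhang2012general} and \citet{fan2013} that two sufficiently sparse stationary points built on the same residual response $\Y_k$ must be close. Accordingly I set $\boldsymbol{\Theta}_k = \widehat{\C}_k^L - \widehat{\C}_k$ and aim to show $\norm{\boldsymbol{\Theta}_k}_F = O(\sqrt{s_k}\lambda_k)$ and $\norm{\boldsymbol{\Theta}_k}_1 = O(s_k\lambda_k)$.

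First I would record a common first-order (KKT-type) bound for both estimators. The local optimizer satisfies $n^{-1}\norm{\X\trans(\Y_k - \X\widehat{\C}_k^L)}_{\max} = O(\lambda_k)$ by hypothesis; the same entrywise gradient bound holds for $\widehat{\C}_k$ as an intermediate step in the proof of Theorem \ref{se_th}/\ref{para_th}, since the global optimizer is itself a stationary point of the penalized criterion whose loss gradient is matched by a subgradient of the $\ell_1$ penalty, bounded by $\lambda_k$ in the $\norm{\cdot}_{\max}$ norm. Subtracting the two residual identities and applying the triangle inequality then gives $n^{-1}\norm{\X\trans\X\boldsymbol{\Theta}_k}_{\max} = O(\lambda_k)$.

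Next I would exploit sparsity. Since $\norm{\widehat{\C}_k^L}_0 = O(s_k)$ and $\norm{\widehat{\C}_k}_0 = O(s_k)$, the support of $\boldsymbol{\Theta}_k$ has cardinality at most $Cs_k$ for a suitably large constant $C$, placing it exactly in the regime of the assumed sparse minimum-eigenvalue condition $\min_{\norm{\boldsymbol{\gamma}}_2=1,\,\norm{\boldsymbol{\gamma}}_0 \le Cs_k} n^{-1/2}\norm{\X\boldsymbol{\gamma}}_2 \ge \kappa_0$. Treating the matrix columnwise as in the discussion after Condition \ref{cond1}, this yields $n^{-1}\norm{\X\boldsymbol{\Theta}_k}_F^2 \ge \kappa_0^2\norm{\boldsymbol{\Theta}_k}_F^2$. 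Combining with the identity $n^{-1}\norm{\X\boldsymbol{\Theta}_k}_F^2 = n^{-1}\inner{\X\trans\X\boldsymbol{\Theta}_k,\,\boldsymbol{\Theta}_k} \le n^{-1}\norm{\X\trans\X\boldsymbol{\Theta}_k}_{\max}\norm{\boldsymbol{\Theta}_k}_1$ and the bound $\norm{\boldsymbol{\Theta}_k}_1 \le \sqrt{Cs_k}\,\norm{\boldsymbol{\Theta}_k}_F$ gives
\[
\kappa_0^2\,\norm{\boldsymbol{\Theta}_k}_F^2 \;\le\; O(\lambda_k)\,\sqrt{Cs_k}\,\norm{\boldsymbol{\Theta}_k}_F,
\]
so that $\norm{\boldsymbol{\Theta}_k}_F = O(\sqrt{s_k}\lambda_k)$ and hence $\norm{\boldsymbol{\Theta}_k}_1 = O(s_k\lambda_k)$.

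Finally I would assemble the bounds. Because $\theta_k \ge 1$ (resp.\ $\psi_k \ge 1$), the triangle inequality $\norm{\widehat{\C}_k^L - \C_k^*}_F \le \norm{\boldsymbol{\Theta}_k}_F + \norm{\widehat{\bDelta}_k}_F$ together with $\norm{\widehat{\bDelta}_k}_F = O(\theta_k\sqrt{s_k}\lambda_k)$ from Theorem \ref{se_th} shows $\widehat{\C}_k^L$ inherits the Frobenius rate, and the identical argument on $\norm{\cdot}_1$ transfers the $\ell_1$ rate. I expect the main obstacle to lie in the first step: confirming that the global optimizer of the rank-one constrained, multiplicatively penalized CURE problem indeed satisfies the \emph{unfactored}, matrix-form gradient bound $n^{-1}\norm{\X\trans(\Y_k - \X\widehat{\C}_k)}_{\max} = O(\lambda_k)$, since its stationarity conditions are naturally expressed in the factored variables $(d,\u,\v)$. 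Translating those factored optimality conditions into the entrywise $\ell_\infty$ inequality, and verifying that the combined-support constant is compatible with the ``sufficiently large $C$'' demanded by the sparse-eigenvalue hypothesis, is where the care is needed.
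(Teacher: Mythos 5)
Your proposal is correct and follows essentially the same route as the paper's proof: define the difference $\widehat{\C}_k^L-\widehat{\C}_k$, combine the KKT bound for the global optimizer with the assumed gradient bound for the local one to get $n^{-1}\norm{\X\trans\X(\widehat{\C}_k^L-\widehat{\C}_k)}_{\max}=O(\lambda_k)$, invoke the combined $O(s_k)$ sparsity and the sparse-eigenvalue condition to conclude $\norm{\widehat{\C}_k^L-\widehat{\C}_k}_F=O(\sqrt{s_k}\lambda_k)$, and finish by the triangle inequality. The only (immaterial) difference is that the paper runs the eigenvalue argument columnwise through the restricted Gram matrices $\X_{A_j}\trans\X_{A_j}$ while you use a global H\"{o}lder plus Cauchy--Schwarz step; your closing caveat about justifying the unfactored KKT bound for the rank-one constrained global optimizer is apt, as the paper asserts that inequality without further argument.
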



The regularity conditions on the computable local optimizer and the design matrix are basically the same as those in \citet{fan2013}. The assumption on the sparsity of the global optimizer is generally easy to satisfy as we adopt the $\ell_1$-penalization, which was shown in \citet{bickel2009re} to generate a sparse model of size $O(\phi^{\max} s)$ under certain regularity conditions, where $\phi^{\max}$ is the largest eigenvalue of the Gram matrix and $s$ is the true sparsity level. In view of Theorem \ref{partial_vs_global_optimum_th}, the sparse computational solutions we obtain in practice can also satisfy the desirable statistical properties.


\section{Divide-and-Conquer through Stagewise Learning} \label{sec:unitrank}



\subsection{Contended Stagewise Learning for CURE}\label{sec:stagecure}

It remains to solve the CURE problem in \eqref{eq:unitoptC} or \eqref{eq:unitopt}. The alternating convex
search (ACS) algorithm, i.e., block coordinate descent \citep{minasian2014energy},  is natural and commonly-used for solving (\ref{eq:unitopt}), in which
the objective function is alternately optimized with respect to a (overlapping) block of parameters, $(d,\u)$ or $(d, \v)$, with the rest
held fixed. Since the objective is a function of $(d, \u)$ or $(d, \v)$ only through the products $d\u$ or $d\v$, the norm constraints are
avoided in the sub-routines which then become exactly lasso problems. \citet{MishraDeyChen2017} showed that ACS converges to a coordinate-wise minimum point of \eqref{eq:unitopt} and the sequence of solutions along the iterations is uniformly bounded. With such a typical approach, however, the optimization needs to be repeated for a grid of $\lambda$ values, and even with a warm-start strategy, the computation can still be expensive.


 Inspired by the stagewise learning paradigm \citep{efron2004least,zhao2007,tibshirani2015stagewise} and as the second stage of our divide-and-conquer strategy, we innovate a \textit{contended stagewise learning strategy} (CostLes) to trace out the entire solution paths of CURE. Although CURE is non-convex, we come to realize that efficient and principled stagewise learning remains possible. Our idea is simple yet elegant: when determining the update at each stagewise step, all the proposals from the subproblems of the potential blockwise updates have to compete with each other, and only the winner gets executed. This is in contrast to the classical setup, where in each update there is only a single proposal that is determined in a global fashion \citep{zhao2007,tibshirani2015stagewise}. 

We now present in detail our proposed stagewise procedure for CURE. For improving the estimation stability and facilitate calculations in practice, we consider a more general CURE problem with an added $\ell_2$ penalty term and adopt an alternative set of normalization constraints,
\begin{equation}\label{elas_cure}
\begin{aligned}
    \min_{d,\u,\v}~ Q(d\u\v\trans) = \left\{L(d\u\v\trans) + \lambda \rho(d\u\v\trans) \right\}, \qquad \mbox{s.t.}\, d\geq 0, \|\u\|_1=1, \|\v\|_1 = 1,
\end{aligned}
\end{equation}
where $L(d\u\v\trans) = (2n)^{-1}\|\Y-d\X\u\v\trans\|_F^2 + (\mu/2)\norm{d\u\v\trans}_{F}^{2}$ and  $\rho(\d\u\v\trans) = \lambda d \|\u\|_1 \|\v\|_1$.
This is the same as using a strictly convex elastic net penalty on $\C = \d\u\v\trans$ whenever $\mu>0$ \citep{zouElas}. Here we consider $\mu$ as a pre-specified fixed constant, and our goal remains to trace out the solution paths with respect to varying $\lambda$. In Theorem \ref{th:algorithm:converge}, we show that the $\ell_2$ term can asymptotically vanish, so the CURE problems with or without the $\ell_2$ penalty are asymptotically equivalent. Since the objective is a function of $d\u\v\trans$, the normalization of $\u$ and $\v$ can be arbitrary and can always be absorbed to $d$; we thus use $\ell_1$ normalization here, and the solution can be easily re-normalized to satisfy the original set of constraints in \eqref{eq:unitopt}, corresponding to the $\bP$-orthogonal SVD.

In what follows, we first present the general structure of the contended stagewise learning algorithm, and then provide the resulting simple problems/solutions for the incremental updates. The detailed derivations are given in Section \ref{sec:app:cure} of the Supplementary Materials. 


\noindent \textbf{(I) Initialization}. 
Set $t=0$. This step is based on searching for the initial non-zero entry of $\C$ as follows:
\begin{equation}\label{eq:ini}
\begin{split}
(\widehat{j},\widehat{k},\widehat{s}) & = \arg\min_{(j,k);s=\pm\epsilon} L(d\u\v\trans) \mbox{ s.t. } du_jv_k=s, u_{j'}v_{k'}=0, \forall j'\neq j, k'\neq k;\\
& = \arg\min_{(j,k);s=\pm\epsilon} L(s\1_j\1_k\trans);\\
 \widehat{\u}^0 & = \1_{\widehat{j}}; \widehat{\v}^0 = \sgn(\widehat{s})\1_{\widehat{k}}; \widehat{d}^0 = \epsilon;\\
 \mathcal{A}^0 & = \{\widehat{j}\}, \mathcal{B}^0 = \{\widehat{k}\}; \lambda^0 = \frac{1}{\epsilon}\{L(\0)-L(\widehat{d}^0\widehat{\u}^0\widehat{\v}\ztrans)\}.
\end{split}
\end{equation}
Here $\1_j$ ($\1_k$) is an $p\times 1$ ($q \times 1$) standard basis vector with all zeros except for a one in its $j$th ($k$th) coordinate. The $\mathcal{A}^0$ and $\mathcal{B}^0$ are the initial row and column active sets of non-zero indices, respectively.


\noindent \textbf{(II) Backward Update}. The update is about searching for the best row or column index to move ``backward'' within the current active sets in a blockwise fashion. At the $(t+1)$th step, there are two options:
\begin{equation}\label{eq:back:du}
\begin{split}
\what{j} & = \arg\min_{j\in\mathcal{A}^t} L(d\u\v\trans) \mbox{ s.t. } (d\u) = (d\u)^t-\mbox{sgn}(u_j^t)\epsilon\1_j, \v = \v^{t};\\
(d\u)^{t+1} & = (d\u)^t - \mbox{sgn}(u_{\what{j}}^t)\epsilon\1_{\what{j}}; d^{t+1} = \|(d\u)^{t+1}\|_1; \u^{t+1} = (d\u)^{t+1}/d^{t+1}; \v^{t+1} = \v^{t}; \mbox{(scaling)}\\
\mA^{t+1}  & = \mA^{t} \mbox{ if } u_{\what{j}}^{t+1} \neq 0; \mA^{t+1}  = \mA^{t} - \{\what{j}\} \mbox{ if } u_{\what{j}}^{t+1} = 0; \mB^{t+1} = \mB^{t}; \\
\lambda^{t+1} & = \lambda^t;
\end{split}
\end{equation}
or
\begin{equation}\label{eq:back:dv}
\begin{split}
\what{k} & = \arg\min_{k\in\mathcal{B}^t} L(d\u\v\trans) \mbox{ s.t. } (d\v) = (d\v)^t-\mbox{sgn}(v_k^t)\epsilon\1_k, \u = \u^{t};\\
(d\v)^{t+1} & = (d\v)^t - \mbox{sgn}(v_{\what{k}}^t)\epsilon\1_{\what{k}}; d^{t+1} = \|(d\v)^{t+1}\|_1; \v^{t+1} = (d\v)^{t+1}/d^{t+1}; \u^{t+1} = \u^{t};\\
\mA^{t+1} & = \mA^{t}; \mB^{t+1}  = \mB^{t} \mbox{ if } v_{\what{k}}^{t+1} \neq 0; \mB^{t+1}  = \mB^{t} - \{\what{k}\} \mbox{ if } v_{\what{k}}^{t+1} = 0;\\
\lambda^{t+1} & = \lambda^t.
\end{split}
\end{equation}
The penalty term will be decreased by a fixed amount $\lambda^t\epsilon$ (as $\|\u^t\|_1=\|\v^t\|_1=1$); the option with a lower $L$ value wins the bid. To ensure the objective in \eqref{eq:unitopt} is reduced, the winning option is executed only if $L(d^{t+1}\u^{t+1}\v\t1trans) - L(d^t\u^t\v\ttrans) <  \lambda^t\epsilon - \xi$, 
where $\xi = o(\epsilon) >0$ is a tolerance level.

\noindent \textbf{(III) Forward Update.} When the backward update can no longer proceed, a forward update is carried out, which searches for the best row or column index to move ``forward'' over all indices, again, in a blockwise fashion. At the $(t+1)$th step, the update chooses between two proposals:
\begin{equation}\label{eq:forw:du}
\begin{split}
(\what{j}, \what{s})& = \arg\min_{j;s=\pm\epsilon} L(d\u\v\trans) \mbox{ s.t. } (d\u) = (d\u)^t+s\1_j, \v = \v^{t};\\
(d\u)^{t+1} & = (d\u)^t+\what{s}\1_{\what{j}};  d^{t+1} = \|(d\u)^{t+1}\|_1; \u^{t+1} = (d\u)^{t+1}/d^{t+1}; \v^{t+1} = \v^{t};\\
\mA^{t+1} & = \mA^{t} \cup {\what{j}}, \mB^{t+1} = \mB^{t};\\
\lambda^{t+1} & =  \min(\lambda^t,\{L(d^t\u^t\v\ttrans)-L(d^{t+1}\u^{t+1}\v\t1trans)-\xi\}/\epsilon),
\end{split}
\end{equation}
or
\begin{equation}\label{eq:forw:dv}
\begin{split}
(\what{k},\what{h}) & = \arg\min_{k;h=\pm\epsilon} L(d\u\v\trans) \mbox{ s.t. } (d\v) = (d\v)^t+h\1_k, \u = \u^{t};\\
(d\v)^{t+1} & = (d\v)^t+\what{h}\1_{\what{k}};  d^{t+1} = \|(d\v)^{t+1}\|_1; \v^{t+1} = (d\v)^{t+1}/d^{t+1}; \u^{t+1} = \u^{t};\\
\mA^{t+1} & = \mA^{t}, \mB^{t+1} = \mB^{t} \cup {\what{k}};\\
\lambda^{t+1} & = \min(\lambda^t,\{L(d^t\u^t\v\ttrans)-L(d^{t+1}\u^{t+1}\v\t1trans)-\xi\}/\epsilon).
\end{split}
\end{equation}
As $\rho(d^{t+1}\u^{t+1}\v\t1trans)-\rho(d^t\u^t\v\ttrans) = \epsilon$, the option that can decrease $L$ more is executed.

The structure of the proposed stagewise learning procedure is summarized in Algorithm \ref{alg:cure}. It is intuitive how this procedure works. A forward update is taken only when backward updates can no longer proceed, i.e., the regularized loss at $\lambda^t$ can no longer be reduced by searching over the ``backward directions'' in a blockwise fashion within the current active sets. When a forward step is taken, the search expends to all possible blockwise directions, and the parameter $\lambda^{t}$ gets reduced only when the corresponding regularized loss can not be further reduced by any incremental update on any block of parameters. Indeed, we show in Section \ref{sec:stage:conv} that when $\epsilon$ and $\xi = o(\epsilon)$ go to zero at the time $\lambda^t$ gets reduced to $\lambda^{t+1}$, a coordinatewise minimum point of the regularized loss with $\lambda^{t}$ can be reached. 

\begin{algorithm}
  \caption{Contended Stagewise Learning for CURE (Pseudo code)}\label{alg:cure}
  \begin{algorithmic}
  \State \textbf{Initialization step}: based on searching for the first nonzero entries in $\u$ and $\v$ by \eqref{eq:ini}. Set $\epsilon$ and $\xi$. Set $t = 0$. 
  \Repeat
  \State \textbf{Backward step}: compare the proposals of blockwise updates within the current active sets, and choose the one that induces the least increment in $L$ (or the most reduction):
  \begin{itemize}
    \setlength{\itemindent}{.25in}
    \item Proposal 1: update $(d,\u)$, $\mA^{t+1}$ and $\lambda^{t+1}$ by \eqref{eq:back:du}.
    \item Proposal 2: update $(d,\v)$, $\mB^{t+1}$ and $\lambda^{t+1}$ by \eqref{eq:back:dv}.
  \end{itemize}
  \If{the induced increment in $L$ is less than $\lambda^t\epsilon-\xi$,}
  \State Execute the chosen proposal of backward update.
  \Else
  \State \textbf{Forward step}: compare the proposals of blockwise updates over all row/column indices, choose the one that reduces the loss $L$ the most, and directly execute:
  \begin{itemize}
    \setlength{\itemindent}{.5in}
      \item Proposal 1: update $(d,\u)$, $\mA^{t+1}$ and $\lambda^{t+1}$ by \eqref{eq:forw:du}.
      \item Proposal 2: update $(d,\v)$, $\mB^{t+1}$ and $\lambda^{t+1}$ by \eqref{eq:forw:dv}.
      \end{itemize}      
  \EndIf
  \State $t \rightarrow t + 1$.
  \Until{$\lambda_{t}\le 0$.}
  \end{algorithmic}\label{stagewise_algorithm}
\end{algorithm}

The computations involved in the contended stagewise learning are straightforward. Define the current residual matrix as $\E^t = \Y - d^t\X\u^t\v^t\trans$ and denote its $k$th column as $\widetilde{\e}_k$. Recall that $\widetilde{\x}_{j}$ and $\widetilde{\y}_k$ are the $j$th and $k$th column of $\X$ and $\Y$, respectively, as defined in Section \ref{sec:ssvd}. Then, the initialization step boils down to the following,
\begin{equation}\label{implementation:init}
  \begin{aligned}
      (\what{j},\what{k})
      = \underset{j,k}{\arg\min}~\left\{(2n)^{-1}\epsilon\norm{\widetilde{\x}_j}_2^2 - n^{-1}\abs{\widetilde{\x}_{j}\trans\widetilde{\y}_{k}}\right\},\qquad
      \widehat{s}
      = \mbox{sgn}(\widetilde{\x}_{j}\trans\widetilde{\y}_{k})\epsilon. 
  \end{aligned}
\end{equation} 
The problem of the $(t+1)$th backward step is 
\begin{equation}\label{implementation:backward}
  \begin{aligned}
      \what{j} & = \underset{j\in\mathcal{A}^t}{\arg\min}~ (2n)^{-1}\epsilon\norm{\widetilde{\x}_j}_2^2\norm{\v^{t}}_{2}^{2} +
      n^{-1}\mbox{sgn}(u_{j}^t)\widetilde{\x}_{j}\trans\E^t\v^t - \mu d^t\abs{u_j^t}\norm{\v^t}_2^2,               \\
      \what{k} & = \underset{k\in\mathcal{B}^t}{\arg\min}~ n^{-1}\mbox{sgn}(v_k^t)\u^t\trans\X\trans\widetilde{\e}_k^t -
      \mu d^t\abs{v_k^t}\norm{\u^t}_2^2.
  \end{aligned}
\end{equation}
The problem of the $(t+1)$th forward step is 
\begin{equation}\label{implementation:forward}
  \begin{aligned}
      \what{j} & = \underset{j}{\arg\max}~  \abs{n^{-1}\widetilde{\x}_j\trans\E^t\v^t - \mu d^tu_j^t\norm{\v^t}_2^2} 
      -(2n)^{-1}\epsilon\norm{\widetilde{\x}_j}_2^2\norm{\v^t}_2^2;\\
      & \widehat{s} = \mbox{sgn}(n^{-1}\widetilde{\x}_{\widehat{j}}\trans\E^t\v^t - \mu d^tu_{\widehat{j}}^t\norm{\v^t}_2^2)\epsilon,\\                                           
      \what{k} & = \underset{k}{\arg\max}~ \abs{n^{-1}\u^t\trans\X\trans\widetilde{\e}_k^t - \mu d^t v_k^t\norm{\u^t}_2^2};\\
      & \widehat{h}  = \mbox{sgn}(n^{-1}\u^t\trans\X\trans\widetilde{\e}_{\widehat{k}}^t - \mu d^t v_{\widehat{k}}^t\norm{\u^t}_2^2)\epsilon.
  \end{aligned}
\end{equation}
Lastly, for choosing between updating $d\u$ or $d\v$ or between proceeding with backward or forward update, the change in the loss function are computed as  
\begin{equation*}
  \begin{aligned}
      &L\left(\left[(d\u)^t+\widehat{s}\1_{\widehat{j}}\right]\v^t\trans\right) - L(d^t\u^t\v^t\trans)
      = \frac{\epsilon^2}{2n}\norm{\widetilde{\x}_{\widehat{j}}}_2^2\norm{\v^t}_2^2 - \frac{\widehat{s}}{n}\widetilde{\x}_{\widehat{j}}\trans\E^t\v^t 
      + \frac{\mu}{2}\epsilon^2\norm{\v^t}_2^2 + \mu\widehat{s}d^tu_{\widehat{j}}^t\norm{\v^t}_2^2, \\
      &L\left(\u^t\left[(d\v)^t+\widehat{h}\1_{\widehat{k}}\right]\trans\right) - L(d^t\u^t\v^t\trans) 
      = \frac{\epsilon^2}{2n}\norm{\X\u^t}_2^2 - \frac{\widehat{h}}{n}\u^t\trans\X\trans\widetilde{\e}_{\widehat{k}}^t
      + \frac{\mu\epsilon^2}{2}\norm{\u^t}_2^2 + \mu\widehat{h}d^t v_{\widehat{k}}^t\norm{\u^t}_2^2.
  \end{aligned}
\end{equation*} 


To streamline the main ideas, we haven't discussed much on the handling of missing values. In fact, our stagewise methods can conveniently handle missing values in the response matrix, which makes it applicable for large-scale matrix completion \citep{Candes2009,Candes2010}. We shall briefly describe the setup. Let $\mathcal{H}$ be the index set of all observed values in $\Y$, i.e., $\mathcal{H}= \{(i,j); y_{ij} \ \mbox{is observed}, i=1,\ldots,n, j=1,\ldots, q\}$. Define 
$P_{\mathcal{H}}:\mathbb{R}^{n\times q}\rightarrow \mathbb{R}^{n\times q}$ be the projection operator onto $\mathcal{H}$, so that for any matrix $\Z \in \mathbb{R}^{n\times q}$, the entries of $P_{\mathcal{H}}(\Z)$ equal to those of $\Z$ on $\mathcal{H}$ and otherwise equal to zero. Then, corresponding to the complete data case in \eqref{elas_cure}, the CURE problem with incomplete data can be expressed as 
\begin{equation*}
\begin{aligned}    \underset{d,\u,\v}{\min}~&(2n)^{-1}\norm{P_{\mathcal{H}}(\Y)-P_{\mathcal{H}}(d\X\u\v\trans)}_{F}^{2} + 
    \frac{\mu}{2}\norm{d\u\v\trans}_{F}^{2} + \lambda d\norm{\u}_{1}\norm{\v}_{1} \\
    &\text{s.t.}~d\geq 0, \|\u\|_1 = 1,~\|\v\|_1= 1.
\end{aligned}
\end{equation*}
We can show that the proposed methods still work with slight modification and maintain computational efficiency; the only change is to replace the residual matrix $\E^t$ with its projection $P_{\mathcal{H}}(\E^t)$ during the iterations. We have implemented all the proposed computational methods in a user-friendly R package.

\subsection{Computational Complexity and Convergence}\label{sec:stage:conv}



Theorem \ref{th:algorithm:complexity} presents the computational complexity of the proposed stagewise procedure. 

\begin{theorem}\label{th:algorithm:complexity}
  Consider the $(t+1)$th step of Algorithm \ref{alg:cure} for contended stagewise learning of \eqref{elas_cure}. Let $\mA^t = a^t$ and $\mB^t=b^t$. The computational complexity of the update is $O(a^tnq + b^tnp)$.
\end{theorem}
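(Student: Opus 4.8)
The plan is to bound the cost of one iteration by charging separately the work spent evaluating the two competing blockwise proposals, and to show that every underlying product is either an $\X$-product of order $np$ (or $na^t$) or an $\E^t$-product of order $nq$ (or $nb^t$), each of which is dominated by $O(a^tnq + b^tnp)$ once we invoke $1\le a^t\le p$ and $1\le b^t\le q$. The central bookkeeping observation is that the algorithm maintains the residual matrix $\E^t = \Y - d^t\X\u^t(\v^t)\trans$ explicitly and never forms the $p\times q$ matrix $\X\trans\E^t$ (which would cost $O(npq)$); instead, every score required in \eqref{implementation:backward} and \eqref{implementation:forward} is assembled by contracting $\E^t$ with a single vector, with the order of multiplication chosen so that no intermediate object exceeds size $n\max(p,q)$.

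First I would treat the $(d\u)$-block proposals \eqref{eq:back:du} and \eqref{eq:forw:du}, whose criteria hinge on the scores $\widetilde{\x}_j\trans\E^t\v^t$ together with the precomputed norms $\norm{\widetilde{\x}_j}_2^2 = n$ and $\norm{\v^t}_2^2$. The key step is to form $\bw_v := \E^t\v^t\in\mathbb{R}^n$ once, at cost $O(nb^t)$ since $\v^t$ is supported on $\mB^t$; the backward search then evaluates $\widetilde{\x}_j\trans\bw_v$ only for $j\in\mA^t$ ($O(na^t)$), while the forward search evaluates $\X\trans\bw_v$ over all rows ($O(np)$). Symmetrically, the $(d\v)$-block criteria reduce to $(\X\u^t)\trans\widetilde{\e}_k$; forming $\bw_u := \X\u^t$ once costs $O(na^t)$, after which the backward search contracts $\bw_u$ against the $b^t$ active columns of $\E^t$ ($O(nb^t)$) and the forward search contracts $\bw_u\trans\E^t$ against all of $\E^t$ ($O(nq)$). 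The $\mu$-terms and the loss-change formulas add only lower-order work of size $O(p+q)$ beyond these products.

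Next I would account for the state refresh. Because a winning move perturbs $d\u\v\trans$ by a single rank-one term supported on one row (a $(d\u)$-move) or one column (a $(d\v)$-move), the residual is updated by $\E^{t+1} = \E^t - \what{s}\,\widetilde{\x}_{\what j}(\v^t)\trans$ or $\E^{t+1} = \E^t - \what{h}\,\bw_u\1_{\what k}\trans$, touching only the $b^t$ columns in $\supp(\v^t)$ or the single column $\what k$, hence $O(nb^t)$ or $O(n)$ respectively; the renormalizations of $\u,\v$ and the updates of the active sets and of $\lambda$ are all $O(p+q)$. Collecting the terms, the iteration costs $O(na^t + nb^t + np + nq)$, and since $a^t\ge 1$ and $b^t\ge 1$ yield $nq\le a^tnq$, $np\le b^tnp$, $na^t\le a^tnq$ and $nb^t\le b^tnp$, this is $O(a^tnq + b^tnp)$, as claimed.

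The main obstacle is not any single estimate but the disciplined use of associativity and incremental maintenance: one must verify that every score in \eqref{implementation:backward}–\eqref{implementation:forward} is assembled from the two precomputed contractions $\E^t\v^t$ and $\X\u^t$ without ever materializing $\X\trans\E^t$ or recomputing $\E^t$ from scratch, and that the forward sweep over all $p$ (resp. $q$) indices is charged only once to an $\X$-product (resp. $\E^t$-product), so that it stays at $O(np)+O(nq)$ rather than the naive $O(pnq)$. Keeping the full-index forward search at this order is the one place where the ordering of the matrix–vector multiplications genuinely matters.
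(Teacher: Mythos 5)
Your proof is correct, and it reaches the stated bound by an accounting that is genuinely different from the paper's. The paper evaluates the forward-step scores index by index: for each of the $p$ row candidates it computes $\widetilde{\x}_j\trans\E^t\v^t$ by contracting over the $b^t$ active columns, at $O(nb^t)$ per index, so the row sweep costs $O(b^tnp)$; symmetrically the column sweep costs $O(a^tnq)$, and adding the $O(a^tn+b^tn)$ residual refresh gives exactly the stated $O(a^tnq+b^tnp)$. You instead precompute the two contractions $\E^t\v^t$ (cost $O(nb^t)$) and $\X\u^t$ (cost $O(na^t)$) once and reuse them across the whole sweep, so the row and column searches cost only $O(np)$ and $O(nq)$, for a per-step total of $O(n(p+q+a^t+b^t))$, which you then relax to $O(a^tnq+b^tnp)$ via $a^t,b^t\ge 1$. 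The two accountings differ only in the association order of the same triple products, but they are not equivalent: yours establishes a strictly stronger conclusion (the theorem's bound is not tight for your implementation), while the paper's per-index evaluation matches the stated bound exactly. Everything else --- the incremental rank-one update of $\E^t$ touching only active coordinates, the observation that the backward search and the $\mu$/norm bookkeeping are lower order, and the insistence on never materializing $\X\trans\E^t$ --- coincides with the paper's argument, so both proofs are valid.
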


The results show that the stagewise learning can be much more efficient than the ACS approach. ACS needs to be run for a grid of $\lambda$ values; for each fixed $\lambda$, the algorithm alternates between two lasso problems with computational complexity $O(npq)$  \citep{friedman2010regularization} until convergence. In contrast, our method costs $O(npq)$ operations in the initialization step and $O(a^tnq + b^tnp)$ operations in the subsequent steps, and the solution paths are traced out in a single run. 


We now attempt to quantify the proximity between the stagewise approximated solution and that from fully optimizing the regularized loss function through ACS.

\begin{lemma}\label{lemma:algorithm:converge}
  Consider Algorithm \ref{alg:cure} for the contended stagewise learning of \eqref{elas_cure}. When $\lambda^{t+1} < \lambda^{t}$, that is, whenever $\lambda^{t}$ gets reduced during the stagewise learning, we have 
  \begin{equation*}
    \max\left[\norm{(d\u)^{t*}\v^t\trans - d^t\u^t\v^t\trans}_F,~\norm{\u^t(d\v)^{t*}\trans - d^t\u^t\v^t\trans}_F\right] 
    \leq 2\sqrt{pq}\left(\frac{M\epsilon}{2\mu}+\frac{\xi}{\epsilon\mu}\right),
  \end{equation*}
  where $M$ is a constant, and 
  \begin{equation*}
    \begin{aligned}
      (d\u)^{t*} = \arg\min_{d\u}~Q(d\u\v^t\trans;\lambda^t),~  
      (d\v)^{t*} = \arg\min_{d\v}~Q(\u^t(d\v)\trans;\lambda^t).
    \end{aligned}
  \end{equation*}
\end{lemma}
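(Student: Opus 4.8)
The plan is to treat the two blocks symmetrically and argue only for the $(d\u)$-block, the $(d\v)$-block being identical after interchanging $\u\leftrightarrow\v$ and $p\leftrightarrow q$. Fix a step $t$ at which $\lambda^{t}$ is reduced. Since $\lambda^{t}$ can change only during a forward update, and a forward update is attempted only after the backward update can no longer proceed, \emph{both} of the following hold at this step: (i) no backward proposal is admissible, so for every active $j\in\mA^{t}$ the induced increment in $L$ is at least $\lambda^{t}\epsilon-\xi$; and (ii) the event $\lambda^{t+1}<\lambda^{t}$ in \eqref{eq:forw:du}--\eqref{eq:forw:dv} means the executed forward decrease is below $\lambda^{t}\epsilon+\xi$, and as the executed block is the one with the \emph{larger} decrease, the $(d\u)$-block best decrease is also below $\lambda^{t}\epsilon+\xi$. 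I would feed the closed-form $L$-changes reported just before Section~\ref{sec:stage:conv} into (i)--(ii). Writing $\w^{t}=(d\u)^{t}=d^{t}\u^{t}$ and $g_{j}:=\mu d^{t}u_{j}^{t}\norm{\v^{t}}_{2}^{2}-n^{-1}\widetilde{\x}_{j}\trans\E^{t}\v^{t}=\partial_{w_{j}}L(\w\v^{t}\trans)\big|_{\w^{t}}$, each $L$-change along $s\1_{j}$ equals $s g_{j}+\tfrac{\epsilon^{2}}{2}(n^{-1}\norm{\widetilde{\x}_{j}}_{2}^{2}+\mu)\norm{\v^{t}}_{2}^{2}$ with $s=\pm\epsilon$, whose second term is $\le M_{1}\epsilon^{2}$ for a constant $M_{1}$ because the columns of $\X$ are normalized ($\norm{\widetilde{\x}_{j}}_{2}^{2}=n$) and $\norm{\v^{t}}_{2}\le1$.

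Next I would convert (i)--(ii) into an approximate stationarity condition for the convex subproblem $\w\mapsto Q(\w\v^{t}\trans;\lambda^{t})=L(\w\v^{t}\trans)+\lambda^{t}\norm{\w}_{1}$, where the normalization forces $\rho=d=\norm{\w}_{1}$ (recall $\norm{\v^{t}}_{1}=1$). For active $j$, the backward inequality (i) at $s=-\sgn(w_{j}^{t})\epsilon$ gives $\sgn(w_{j}^{t})g_{j}\le-\lambda^{t}+\xi/\epsilon+M_{1}\epsilon$, while the forward inequality (ii), optimized at $s=-\sgn(g_{j})\epsilon$, gives $\abs{g_{j}}\le\lambda^{t}+\xi/\epsilon+M_{1}\epsilon$ for \emph{every} $j$. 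Together these yield $\abs{g_{j}+\lambda^{t}\sgn(w_{j}^{t})}\le\xi/\epsilon+M_{1}\epsilon$ on $\mA^{t}$ and $\abs{g_{j}}\le\lambda^{t}+\xi/\epsilon+M_{1}\epsilon$ off it. Consequently I can select a subgradient $\z^{t}\in\partial\norm{\w^{t}}_{1}$ (taking $z_{j}^{t}=\sgn(w_{j}^{t})$ on $\mA^{t}$, and $z_{j}^{t}=-g_{j}/\lambda^{t}$ clipped to $[-1,1]$ otherwise) so that the residual $\r^{t}:=\nabla_{\w}L(\w^{t}\v^{t}\trans)+\lambda^{t}\z^{t}\in\partial_{\w}Q(\w^{t}\v^{t}\trans;\lambda^{t})$ obeys $\norm{\r^{t}}_{\infty}\le\xi/\epsilon+M_{1}\epsilon$.

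Finally I would invoke strong convexity. Because $\mu>0$, the map $\w\mapsto Q(\w\v^{t}\trans;\lambda^{t})$ is strongly convex with modulus $\mu\norm{\v^{t}}_{2}^{2}$ (the ridge term contributes $\mu\norm{\v^{t}}_{2}^{2}\I$ to the Hessian, the least-squares part only adding more). With $\w^{*}=(d\u)^{t*}$ the minimizer, so $\0\in\partial_{\w}Q(\w^{*}\v^{t}\trans;\lambda^{t})$, convexity together with strong convexity give $\tfrac{\mu}{2}\norm{\v^{t}}_{2}^{2}\norm{\w^{t}-\w^{*}}_{2}^{2}\le Q(\w^{t}\v^{t}\trans)-Q(\w^{*}\v^{t}\trans)\le\inner{\r^{t},\w^{t}-\w^{*}}\le\norm{\r^{t}}_{\infty}\norm{\w^{t}-\w^{*}}_{1}$. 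Using $\norm{\w^{t}-\w^{*}}_{1}\le\sqrt{p}\,\norm{\w^{t}-\w^{*}}_{2}$ gives $\norm{\w^{t}-\w^{*}}_{2}\le 2\sqrt{p}\,\norm{\r^{t}}_{\infty}/(\mu\norm{\v^{t}}_{2}^{2})$; then $\norm{\w^{t}\v^{t}\trans-\w^{*}\v^{t}\trans}_{F}=\norm{\w^{t}-\w^{*}}_{2}\norm{\v^{t}}_{2}$ combined with the elementary lower bound $\norm{\v^{t}}_{2}\ge1/\sqrt{q}$ (from $\norm{\v^{t}}_{1}=1$) yields $\norm{(d\u)^{t*}\v^{t}\trans-d^{t}\u^{t}\v^{t}\trans}_{F}\le 2\sqrt{pq}\,\norm{\r^{t}}_{\infty}/\mu\le 2\sqrt{pq}\big(\tfrac{M_{1}\epsilon}{\mu}+\tfrac{\xi}{\epsilon\mu}\big)$. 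Setting $M=2M_{1}$ matches the claimed bound, and the $(d\v)$-block estimate follows verbatim (its analogous constant controlled by $\norm{\X\u^{t}}_{2}^{2}\le n\,\lambda_{\max}(\bP)$ and $\norm{\u^{t}}_{2}\ge1/\sqrt{p}$); taking the maximum over the two blocks finishes the proof.

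The hard part will be the second step: turning the purely algorithmic admissibility thresholds into a \emph{uniform} $\ell_{\infty}$ bound on a genuine subgradient residual, with all $O(\epsilon)$ and $O(\xi/\epsilon)$ remainders tracked explicitly and the design-dependent quantities ($\norm{\widetilde{\x}_{j}}_{2}^{2}=n$, $\lambda_{\max}(\bP)$, and the lower bounds on $\norm{\v^{t}}_{2}$ and $\norm{\u^{t}}_{2}$) safely absorbed into the single constant $M$. Everything after that is a standard strong-convexity estimate.
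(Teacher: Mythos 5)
Your proposal is correct and follows essentially the same route as the paper's proof: both arguments convert the backward-inadmissibility and forward $\lambda$-reduction conditions at the moment $\lambda^t$ drops into an entrywise bound of order $\frac{M\epsilon}{2}+\frac{\xi}{\epsilon}$ on $\nabla_{d\u}L(\w^t\v^t\trans)+\lambda^t\z$ for a suitable subgradient $\z$ of $\norm{\cdot}_1$ (the paper's vector $\bdelta$), and then use the $\mu\norm{\v^t}_2^2$-strong convexity of the blockwise objective together with $\norm{\cdot}_1\le\sqrt{p}\norm{\cdot}_2$ and $\norm{\v^t}_2\ge 1/\sqrt{q}$ to obtain the $2\sqrt{pq}\bigl(\frac{M\epsilon}{2\mu}+\frac{\xi}{\epsilon\mu}\bigr)$ bound. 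The only differences are cosmetic: you derive the approximate coordinatewise stationarity inline from the algorithm's thresholds (the paper isolates it as a separate auxiliary lemma), and you invoke the subgradient inequality at $\w^t$ where the paper uses a Taylor expansion of $Q$ combined with $Q((d\u)^{t*}\v^t\trans;\lambda^t)\le Q(d^t\u^t\v^t\trans;\lambda^t)$ --- the resulting algebra is identical.
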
 

Let $\mathbb{B}(d^t\u^t\v^t\trans;2\sqrt{pq}[\frac{M\epsilon}{2\mu}+\frac{\xi}{\epsilon\mu}])$ be a $(pq-1)$-dimensional ball with center $d^t\u^t\v\ttrans$ and 
radius $2\sqrt{pq}[\frac{M\epsilon}{2\mu}+\frac{\xi}{\epsilon\mu}]$. By Lemma \ref{lemma:algorithm:converge}, the center will converge to the corresponding solution of ACS when the radius converges to zero, because $(d\u)^{t*}\v^t\trans$ and $\u^t(d\v)^{t*}\trans$ are always in the ball. In addition, the inequality in Lemma \ref{lemma:algorithm:converge} holds for any $t\geq 0$, so we can directly obtain the pathwise convergence as shown in the following theorem.


\begin{theorem}\label{th:algorithm:converge}
  Consider Algorithm \ref{alg:cure} for the contended stagewise learning of \eqref{elas_cure}. When $\epsilon\rightarrow 0$ and $\xi=o(\epsilon)$, the 
  solution paths of the stagewise learning converge to those of the ACS uniformly. Moreover, the results remain hold when $\mu\rightarrow 0$, $\epsilon=o(\mu)$ and $\xi=o(\epsilon\mu)$.
\end{theorem}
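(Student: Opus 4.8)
The plan is to use Lemma \ref{lemma:algorithm:converge} as the main engine. Write the radius bound as $r(\epsilon,\xi,\mu) = 2\sqrt{pq}\left(\frac{M\epsilon}{2\mu}+\frac{\xi}{\epsilon\mu}\right)$, and recall that a limit point of ACS is by definition a coordinatewise minimum of $Q(\cdot;\lambda)$, i.e. a point $d\u\v\trans$ that simultaneously minimizes $Q$ over the $(d\u)$-block (with $\v$ fixed) and over the $(d\v)$-block (with $\u$ fixed). Since the $\ell_2$ term makes $Q$ strictly convex in each block whenever $\mu>0$, the blockwise minimizers $(d\u)^{t*}$ and $(d\v)^{t*}$ in the lemma are unique. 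My strategy is therefore: (i) show $r(\epsilon,\xi,\mu)\to 0$ under each stated scaling; (ii) deduce that at every reduction $\lambda^{t+1}<\lambda^t$ the stagewise iterate $d^t\u^t\v^t\trans$ converges to a point satisfying both blockwise optimality conditions, i.e. a coordinatewise minimum; and (iii) upgrade pointwise convergence in $t$ to uniform convergence along the whole path in $\lambda$.

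For the first part, I fix $\mu>0$ and let $\epsilon\to 0$ with $\xi=o(\epsilon)$. Then $\frac{M\epsilon}{2\mu}\to 0$ and $\frac{\xi}{\epsilon\mu}=\frac{1}{\mu}\cdot\frac{\xi}{\epsilon}\to 0$, so $r(\epsilon,\xi,\mu)\to 0$. Following the remark after the lemma, both $(d\u)^{t*}\v^t\trans$ and $\u^t(d\v)^{t*}\trans$ lie in the ball $\mathbb{B}(d^t\u^t\v^t\trans;r)$, so as $r\to 0$ the current iterate is squeezed onto both blockwise minimizers; passing to the limit, $d^t\u^t\v^t\trans$ satisfies $d\u=\arg\min_{d\u}Q(d\u\v\trans;\lambda^t)$ and $d\v=\arg\min_{d\v}Q(\u(d\v)\trans;\lambda^t)$ simultaneously, hence is a coordinatewise minimum of $Q(\cdot;\lambda^t)$ and coincides with the ACS solution at $\lambda^t$. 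Because the bound holds for every $t\ge 0$ with the same $M$ (taken uniform via the uniform boundedness of the stagewise solution sequence), the convergence is uniform over all knots $\lambda^t$; and since $\epsilon\to 0$ densifies the grid $\{\lambda^t\}$ so that it fills the range of $\lambda$, the stagewise path converges uniformly to the ACS path.

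For the ``moreover'' part, I let $\mu\to 0$ with $\epsilon=o(\mu)$ and $\xi=o(\epsilon\mu)$ and recheck that the radius still vanishes: $\frac{M\epsilon}{2\mu}=\frac{M}{2}\cdot\frac{\epsilon}{\mu}\to 0$ since $\epsilon=o(\mu)$, and $\frac{\xi}{\epsilon\mu}\to 0$ since $\xi=o(\epsilon\mu)$. The same squeezing argument then shows the stagewise path converges to the coordinatewise-minimum (ACS) path of the elastic-net CURE \eqref{elas_cure}. It remains to shed the $\ell_2$ term in the limit: as $\mu\to 0$ the strictly convex elastic-net objective converges to the $\ell_1$-penalized CURE objective \eqref{eq:unitopt}, so by the asymptotic equivalence of the two CURE problems noted in Section \ref{sec:stagecure}, the limiting path is exactly that of the original (pure $\ell_1$) CURE.

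The main obstacle I anticipate is steps (ii)--(iii): turning the two-sided proximity bound into genuine convergence to a coordinatewise minimum, uniformly along the path. The delicate points are (a) ensuring $M$ does not grow with $t$, which I secure from the uniform boundedness of the iterates already established for this algorithm; (b) justifying that a point lying within a shrinking ball around both unique blockwise minimizers must in the limit satisfy both first-order optimality conditions exactly, which I would obtain from continuity of the blockwise $\arg\min$ maps guaranteed by strict convexity when $\mu>0$; and (c) matching the discrete knots $\lambda^t$ to a continuum as $\epsilon\to 0$ so that ``uniform convergence of paths'' is well defined. The $\mu\to 0$ regime is the most delicate, since the radius trades off competing rates in $\epsilon$, $\xi$, and $\mu$; the prescribed scaling $\epsilon=o(\mu)$, $\xi=o(\epsilon\mu)$ is precisely what keeps both terms vanishing while simultaneously letting the $\ell_2$ penalty disappear.
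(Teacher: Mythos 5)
Your proposal is correct and follows essentially the same route as the paper: the paper's own argument is precisely to invoke Lemma \ref{lemma:algorithm:converge}, note that both blockwise ACS minimizers lie in the ball of radius $2\sqrt{pq}\bigl(\frac{M\epsilon}{2\mu}+\frac{\xi}{\epsilon\mu}\bigr)$ around the stagewise iterate, check that this radius vanishes under each of the two scalings (including the $\mu\rightarrow 0$ regime with $\epsilon=o(\mu)$, $\xi=o(\epsilon\mu)$), and conclude uniform pathwise convergence since the bound holds at every $t$. Your additional care about the uniformity of $M$ and the continuity of the blockwise $\arg\min$ maps only makes explicit what the paper leaves implicit.
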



\subsection{Tuning \& Early Stopping}


In practice, it is often required to identify the optimal solution along the paths. The $K$-fold cross validation can be used to evaluate the models, for which the stagewise learning procedure has to be run multiple times. Alternatively, selecting tuning parameters by information criterion can be more efficient. In our implementation, the default choice is to use the generalized information criterion (GIC) \citep{fan2013tuning}:
\begin{equation*}
\mbox{GIC}(\lambda^t) = \mbox{log}\|\bY - d^{t}\bX\bu^{t}\bv\ttrans\|_F^2 + \frac{\mbox{loglog}(nq)\mbox{log}(pq)}{nq}\widehat{df}(\lambda^t)
\end{equation*}
where $\widehat{df}(\lambda^t) = \|\u^t\|_0 +  \|\v^t\|_0 -1$, is the estimated degrees of freedom for the stagewise model. GIC is shown to perform well in our numerical studies. Other implemented criteria including AIC, BIC, and out-of-sample prediction error (when additional testing samples are available) are optional. 

Due to the nature of stagewise learning, an early stopping mechanism can be implemented based on monitoring the information criterion during the learning process. As the model complexity gradually increases during the stagewise learning, the information criterion, which balances model fitting and model complexity, is expected to first decrease and then increase. As such, the stagewise learning can be terminated early if such a convex pattern of the information criterion is detected. In our implementation, the default is to stop the stagewise learning if the information criterion has not being decreasing for 300 consecutive steps. This strategy avoids the fitting of overly-complex models. 


\section{Simulation}\label{sec:simulation}

\subsection{Setups}\label{sim:setup}

We conduct simulation studies with data generated from the co-sparse factor regression model as specified in \eqref{eq:decomposition1} and \eqref{eq:samplemodelC}. We consider three simulation setups, which differ mainly on the generation of the true coefficient matrix $\C^* = \U^*\D^*\V\strans \in \mathbb{R}^{p\times q}$, where $\U^* = \left[\u_1^*,\dots, \u_{r^*}^*\right]$, $\V^*=\left[\v_1^*,\dots, \v_{r^*}^*\right]$, and $\D^* = \mbox{diag}\{d_1^*\ldots, d_{r^*}^*\}$.

Model \upperroman{1} is a unit-rank model mainly for analyzing the properties of CURE, in which we set
\begin{align*}
     \u_1^* & = \bar{\u}_1/\norm{\bar{\u}_1}_2 \mbox{ where } \bar{\u}_1  = \left[10, -10, 8, -8, 5, -5, \text{rep}(3, 5), \text{rep}(-3, 5), \text{rep}(0, p - 16)\right]\trans,\\
    \v_1^* & = \bar{\v}_1/\norm{\bar{\v}_1}_2 \mbox{ where } \bar{\v}_1  = \left[10, -9, 8, -7, 6, -5, 4, -3, \text{rep}(2, 17), \text{rep}(0, q - 25)\right]\trans,
\end{align*}
and $d_1^*=20$, where $\text{rep}(a,b)$ represent a $1\times b$ vector with all entries equaling to $a$.

Models II and III are multi-rank models. In Model II, the singular values and singular vectors are generated as follows,
\begin{align*}
\u_k^* & = \bar{\u}_k / \norm{\bar{\u}_k}_2 \mbox{ where }
\bar{\u}_k = [\text{rep}(0, k-1),\text{unif}(\mathcal{Q}_u, s_u),\text{rep}(0, p-s_u-k+1)]\trans,\\
\v_k^* & = \bar{\v}_k / \norm{\bar{\v}_k}_2 \mbox{ where }
\bar{\v}_k = [\text{rep}(0, k-1),\text{unif}(\mathcal{Q}_v, s_v),\text{rep}(0, q-s_v-k+1)]\trans,\\
d_k^* & = 5 + 5(r^* - k + 1),
\end{align*}
for $k = 1,\ldots, r^*$, where $s_u = 3$, $s_v=4$, and $\text{unif}(\mathcal{Q}, s)$ denotes a vector of length $s$ whose entries are i.i.d. uniformly distributed on the set $\mathcal{Q}$; here we set
$\mathcal{Q}_u = \left\{1, -1\right\}$ and $\mathcal{Q}_v = [-1,-0.3]\cup[0.3,1]$ and adopt an additional Gram-Schmidt orthogonalization on $\bar{\v}_{k}$ to ensure the orthogonality.
Model \upperroman{3} is similar to Model \upperroman{2}, and the only difference is that we generate $\bar{\u}_k$ and $\bar{\v}_k$ by
\begin{equation*}
\begin{aligned}
    &\bar{\u}_k = [\text{rep}(0, s_u(k-1)),\text{unif}(\mathcal{Q}_u, s_u),\text{rep}(0, p-ks_u)]\trans, \\
    &\bar{\v}_k = [\text{rep}(0, s_v(k-1)),\text{unif}(\mathcal{Q}_v, s_v),\text{rep}(0, q-ks_v)]\trans.
\end{aligned}
\end{equation*}
In view of $\C^* = \sum_{k=1}^{r^*}d_{k}^*\u_{k}^*\v_{k}\strans$, the coefficient matrix
in Model \upperroman{2} is more sparse than that in Model \upperroman{3} due to the overlap of nonzero components in different unit-rank matrices.

We then generate the design matrix $\X$, following the same procedure as in \citet{MishraDeyChen2017}. Specifically, let $\x \sim N(\0, \bGamma)$, where $\bGamma = (\gamma_{ij})_{p\times p}$ with $\gamma_{ij} = 0.5^{\abs{i-j}}$. Given $\U^* = \left[\u_1^*,\dots,\u_{r^*}^*\right]$, we can find $\U_{\bot}^* \in \mathbb{R}^{p\times (p-r^*)}$ such that ${\bf P} = [\U^*, \U_{\bot}^*]\in\mathbb{R}^{p\times p}$ and $\text{rank}({\bf P})=p$. Denote $\x_1=\U\strans\x$ and $\x_2 = \U_{\bot}\strans\x$. We first generate a matrix $\X_1\in\R^{n\times r^*}$ whose entries are from $N(\0,\I_{r^*})$ and then we generate $\X_2\in \mathbb{R}^{n\times (p-r^*)}$ by drawing $n$ random samples
from the conditional distribution of $\x_2$ given $\x_1$. The predictor matrix is then set as $\X=\left[\X_1,\X_2\right]{\bf P}^{-1}$ such that model assumption \eqref{eq:decomposition1} holds. The rows of the error matrix $\E$ are generated as i.i.d. samples from $N(0, \sigma^2\bDelta)$
where $\bDelta = (\delta_{ij})_{q\times q}$ with $\delta_{ij} = \rho^{\abs{i-j}}$. The response $\Y$ is then generate by $\Y = \X\C^* + \E$. We set $\sigma$ to control the signal-to-noise ratio (SNR), defined as
$\text{SNR} = \norm{d_{r^*}^*\X\u_{r^*}^*\v_{r^*}\strans}_{2}/\norm{\E}_F$.

\subsection{Path Convergence of Stagewise CURE}


We demonstrate that the stagewise paths of CURE closely mimic the solutions of ACS as the step size becomes sufficiently small. We use Model \upperroman{1}, with $n=p=q=200$, $\rho = 0.3$ and $\text{SNR}=0.25$. Figure \ref{fig:duv-path} shows the solution paths of $\widehat{d}\widehat{\u}$ and $\widehat{d}\widehat{\v}$ with varying $\epsilon$ values. Indeed, the stagewise paths approximately trace out the solution paths of ACS, and their discrepancies vanish as the step size gets smaller.

\begin{figure}[htp]
    \centering
    \subfloat[$\epsilon=2$]{
    \includegraphics[width=.23\columnwidth]{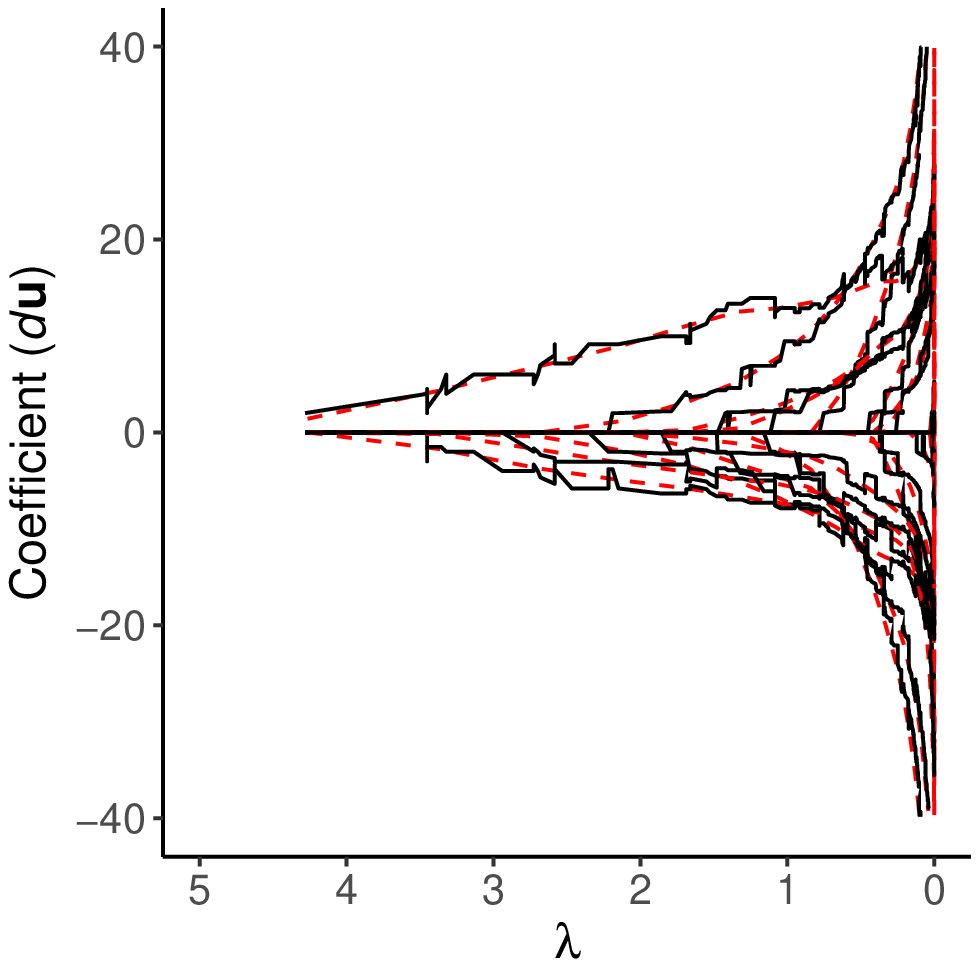}
    \includegraphics[width=.23\columnwidth]{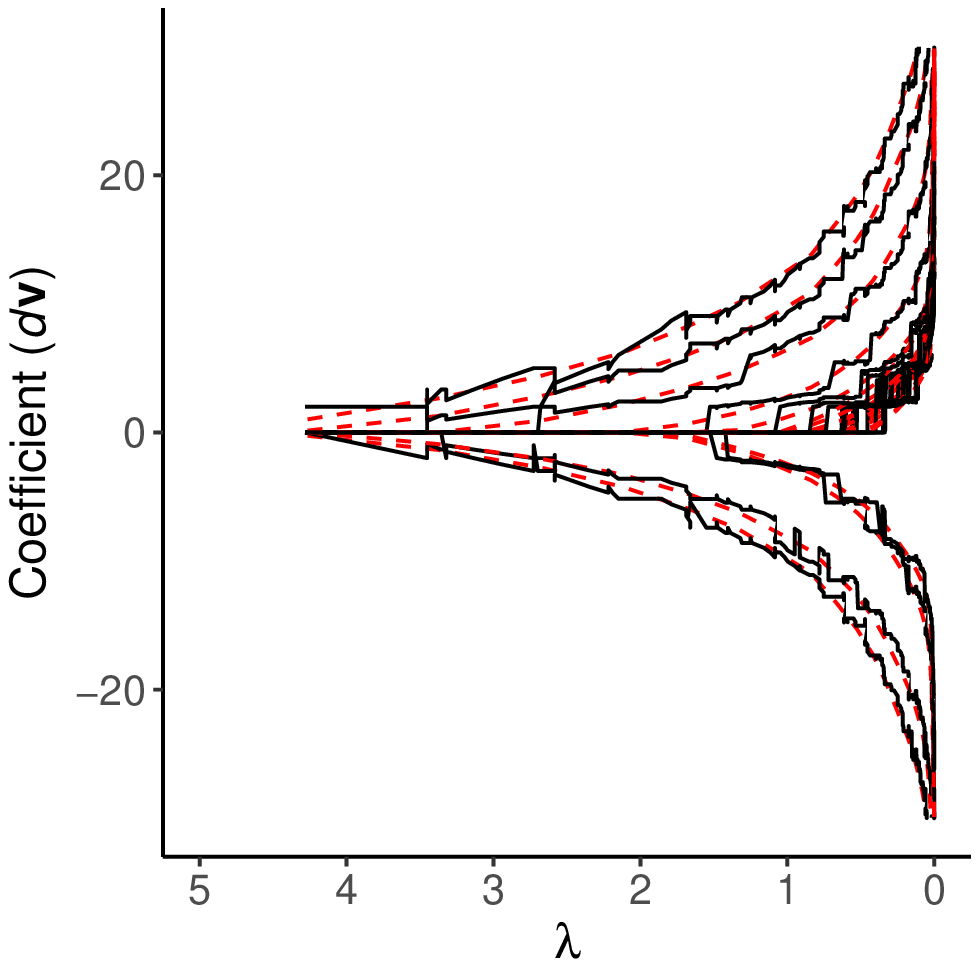}}\hfill
	\subfloat[$\epsilon=1.5$]{
    \includegraphics[width=.23\columnwidth]{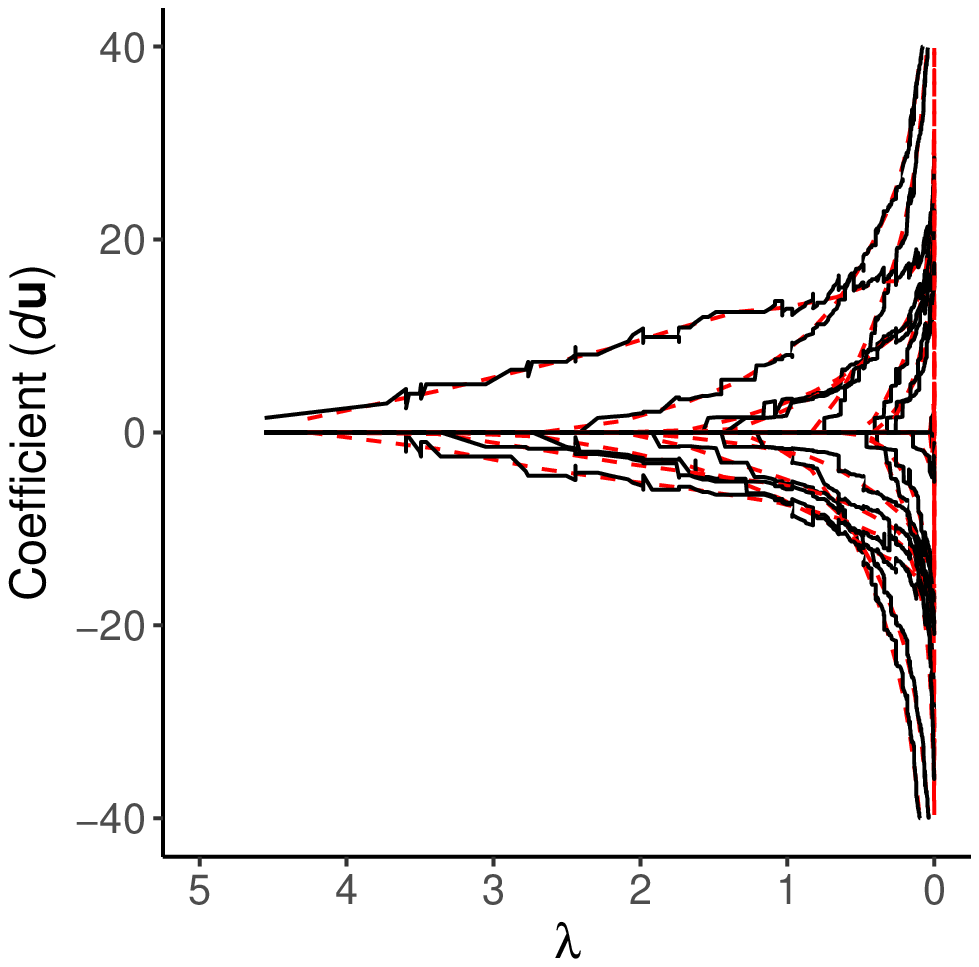}
    \includegraphics[width=.23\columnwidth]{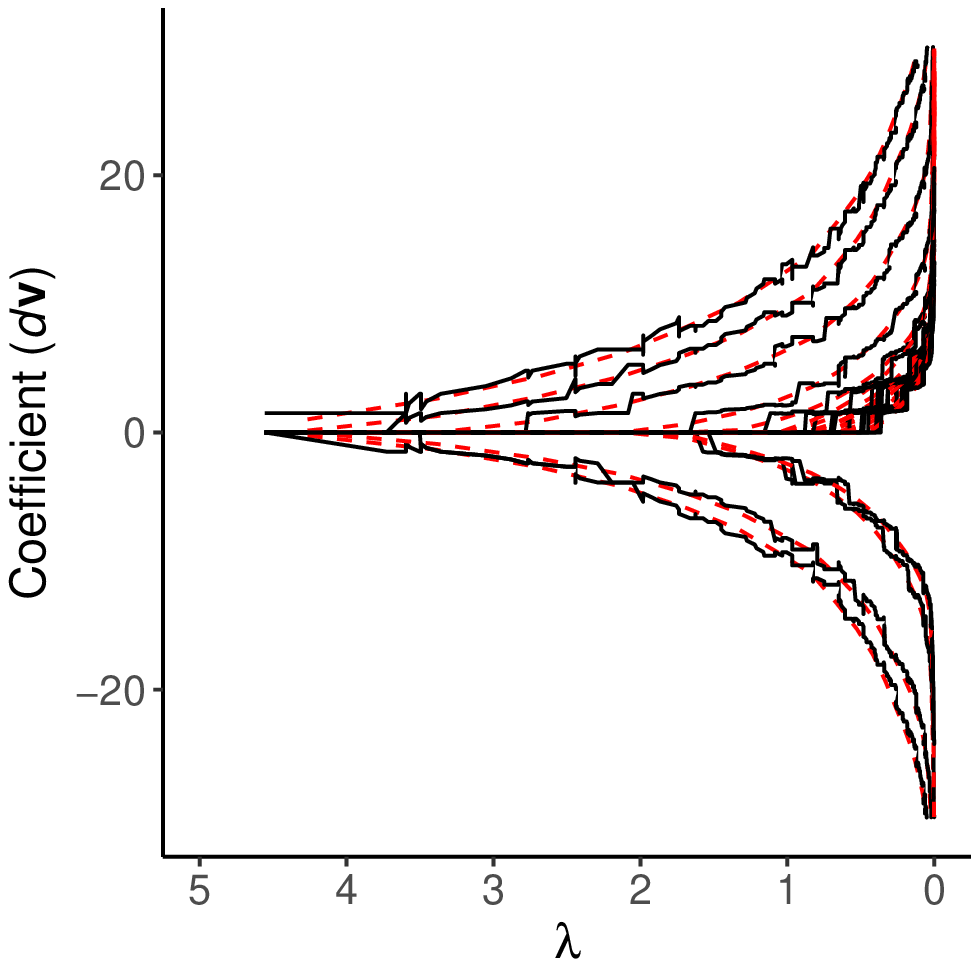}} \\
    \subfloat[$\epsilon=1$]{
    \includegraphics[width=.23\columnwidth]{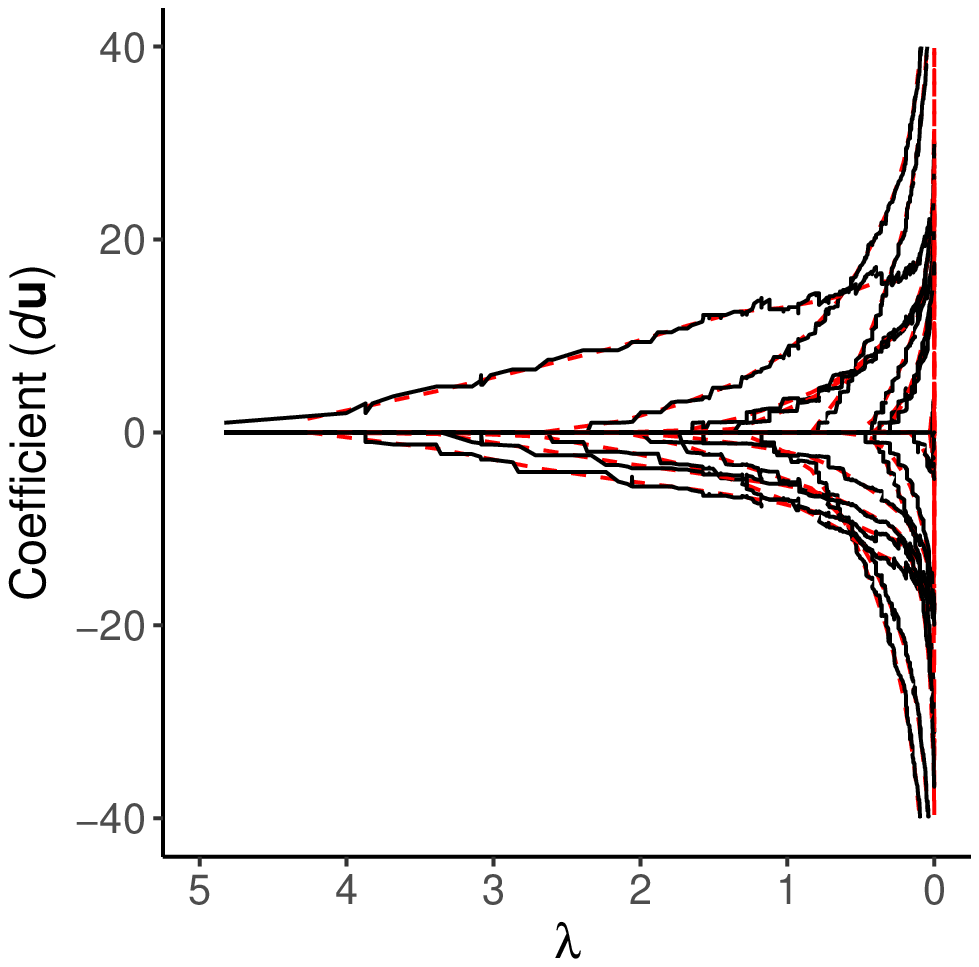}
    \includegraphics[width=.23\columnwidth]{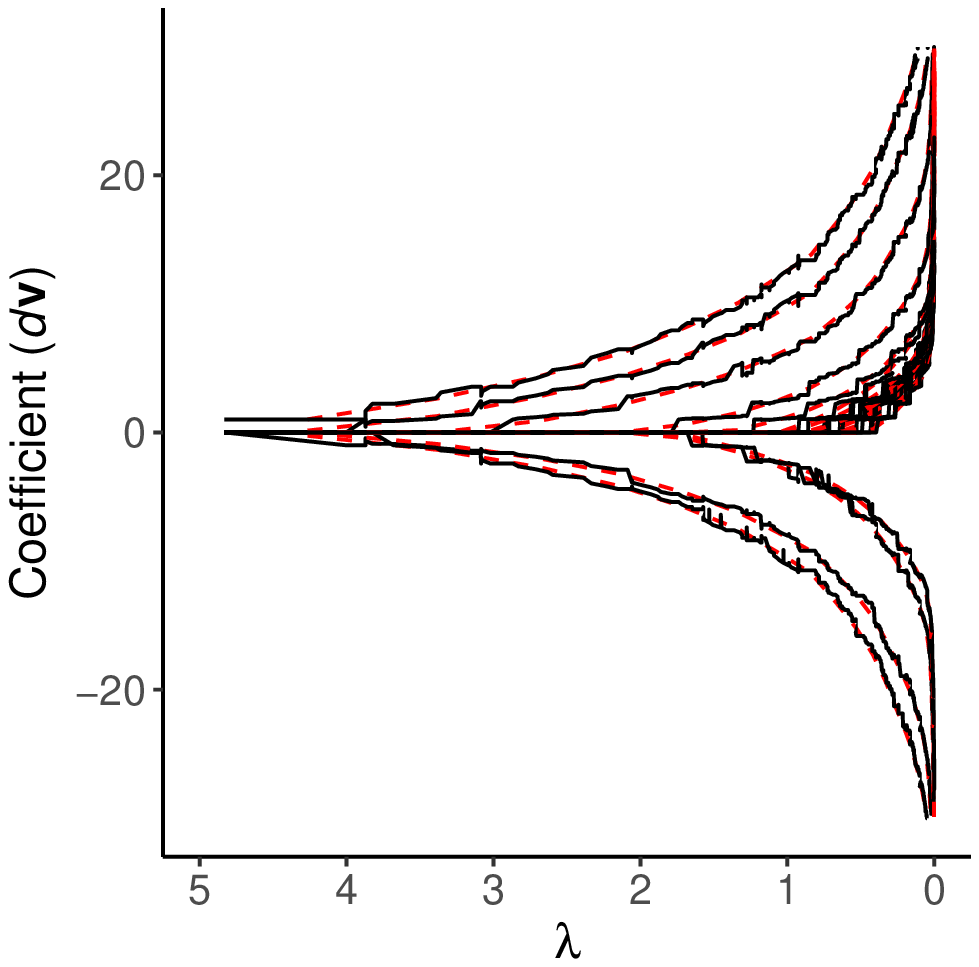}}\hfill
	\subfloat[$\epsilon=0.1$]{
    \includegraphics[width=.23\columnwidth]{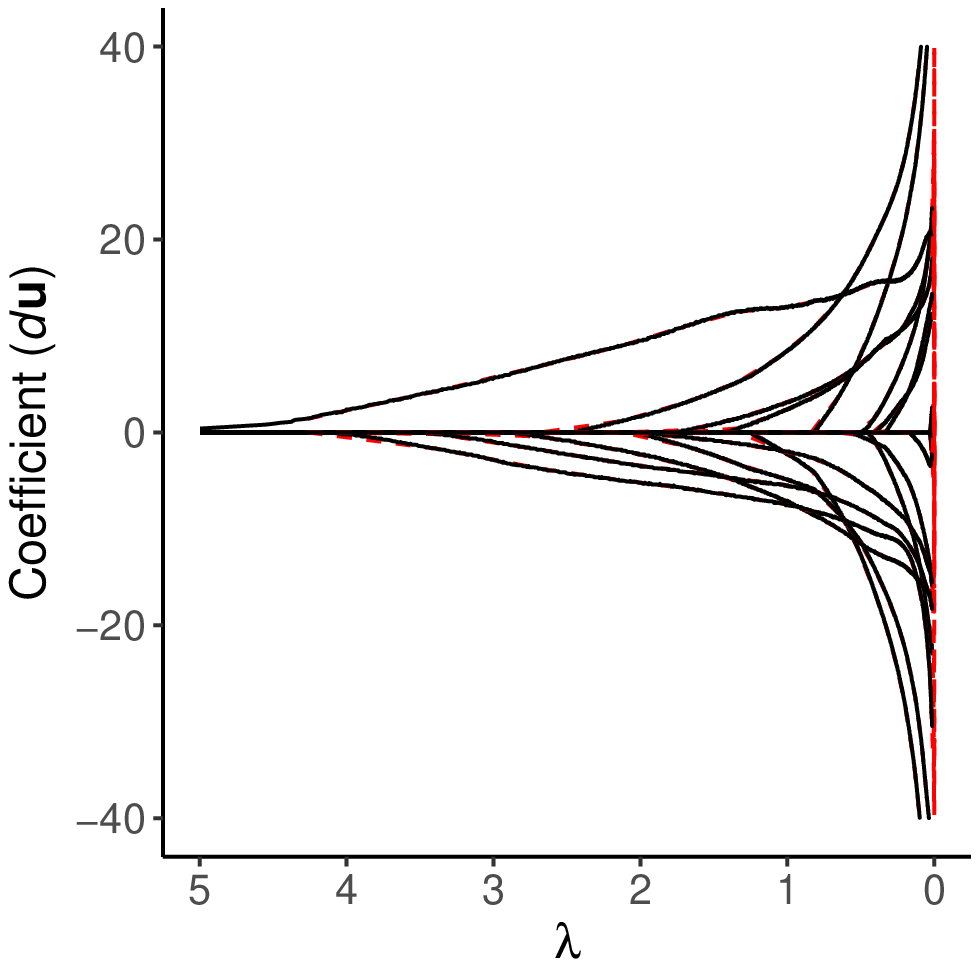}
    \includegraphics[width=.23\columnwidth]{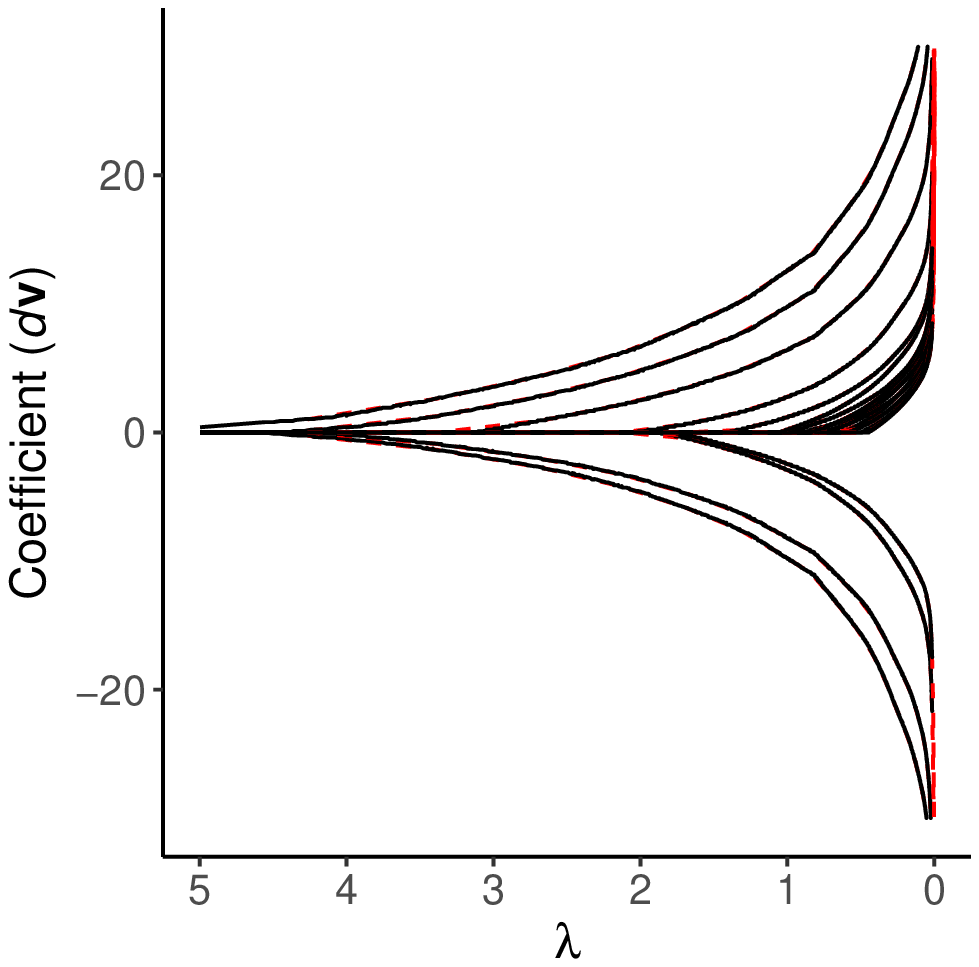}}
  \caption{Stagewsie paths of $\widehat{d}\widehat{\u}$ and $\widehat{d}\widehat{\v}$ for different step sizes. Stagewise paths are shown as black solid lines, and the exact solution paths from ACS are shown in red dashed lines.}\label{fig:duv-path}
\end{figure}

\subsection{Estimation Performance}

We compare the estimation accuracy of our proposed approaches to several competing regularized regression methods, including reduced-rank regression (RRR),
row-sparse reduced rank regression via adaptive group lasso (SRRR) \citep{chen2012sparse}, sparse orthogonal factor regression (SOFAR) \citep{uematsu2019sofar}.
For the two deflation based methods, we try both stagewise learning (STL) and ACS for solving CURE, resulting in sequential pursuit with stagewise learning (SeqSTL),
sequential pursuit with ACS (SeqACS), parallel pursuit with stagewise learning initialized by lasso (ParSTL(L)), parallel pursuit with stagewise learning initialized by
RRR (ParSTL(R)), parallel pursuit with ACS initialized by lasso (ParACS(L)), and parallel pursuit with ACS initialized by RRR (ParACS(R)). The ParACS(R) and
SeqACS are essentially corresponding to the reduced-rank regression with a sparse singular value decomposition (RSSVD) proposed by \citet{chen2012jrssb} and the sequential
co-sparse factor regression (SeFAR) proposed by \citet{MishraDeyChen2017}, respectively.

Models \upperroman{2} and \upperroman{3} are considered here with $n=q=100$, $p \in \{100, 200, 400\}$, $r^*\in \{3,6\}$, $\text{SNR}\in \{0.25, 0.5, 1\}$, $\rho=0.3$ and
$\epsilon=1$. The experiment under each setting is repeated 200 times. The estimation accuracy is measured by $\text{Er}(\widehat{\C}) = \norm{\widehat{\C} - \C^*}_{F}^2/(pq)$ and
$\text{Er}(\X\widehat{\C}) = \norm{\X(\widehat{\C} - \C^*)}_{F}^2/(nq)$.
The variable selection performance is characterized by the false positive rate (FPR) and false negative rate (FNR) in recovering the
sparsity patterns of the SVD structure, where $\text{FPR} = \text{FP}/(\text{TN}+\text{FP})$ and
$\text{FNR} = \text{FN}/(\text{TP}+\text{FN})$. Here, TP, FP, TN, and FN are the numbers of true
nonzeros, false nonzeros, true zeros, and false zeros of $\widehat{\U}$ and $\widehat{\V}$, respectively. Table \ref{tab:model-1} reports the results for Model \upperroman{2} with $\text{SNR} = 0.5$ and
$p \in \{ 200,400\}$, and Table \ref{tab:model-2} reports the results for Model \upperroman{3} under the same settings.



First of all, the stagewise methods are much faster than the other methods, and the efficiency gain in computation can be dramatic for models of large dimensions;
more results regarding computation time can be found in Section \ref{sec:sim:eff}. In general the estimation and prediction performance of stagewise method are
better or comparable to the other competing methods. RRR performs the worst, as it does not consider sparse estimation at all. SOFAR may perform unsatisfactorily
because its optimization is highly non-convex and it targets on the sparsity pattern in the SVD of $\C^*$ rather than its $\bf P-$orthogonal SVD. SRRR considers row-wise sparsity in $\C^*$ only.
In terms of the ACS solutions for CURE, with a more sparse $\C^*$, parallel pursuit usually performs better than sequential pursuit, which is consistent with our theoretical
results. When $\C^*$ becomes less sparse, the difference between parallel and sequential methods becomes negligible. It is also interesting to notice that stagewise methods may even slightly outperform their ACS counterparts. This may be due to the additional regularization effects of the stagewise approximation.


\singlespacing

\begin{table}[H]
  \caption{Results for Model II with $\text{SNR} = 0.5$, $p \in \{100,200,400\}$ and $r^* \in \{3,6\}$; here Er(C) and Er(XC) are rescaled by multiplying $10^3$.}\label{tab:model-1}
  \centering
  \resizebox{\columnwidth}{!}{
    \subfloat[$\text{SNR} = 0.5$, $r=3$]{
      \begin{tabular}{lccccc}
        \hline
        Method    & Er(C)                         & Er(XC) & FPR (\%) & FNR (\%) & Time (s) \\ \hline
                  & \multicolumn{5}{c}{$p = 100$}                                           \\ \hline
        RRR       & 4478.89                       & 246.22 & 100.00   & 0.00     & 0.08     \\
        SRRR      & 1.66                          & 139.64 & 57.32    & 0.00     & 5.28     \\
        SOFAR     & 6.44                          & 202.75 & 24.59    & 4.83     & 10.70    \\
        SeqACS    & 1.34                          & 85.83  & 0.96     & 4.54     & 0.50     \\
        ParACS(L) & 0.97                          & 70.50  & 2.10     & 3.56     & 8.08     \\
        ParACS(R) & 1.54                          & 92.22  & 3.24     & 5.92     & 7.00     \\
        SeqSTL    & 0.65                          & 42.58  & 0.85     & 4.52     & 0.38     \\
        ParSTL(L) & 0.70                          & 52.01  & 1.12     & 3.94     & 1.07     \\
        ParSTL(R) & 0.75                          & 46.16  & 1.06     & 3.96     & 0.15     \\ \hline
                  & \multicolumn{5}{c}{$p = 200$}                                           \\ \hline
        RRR       & 18.99                         & 247.89 & 100.00   & 0.00     & 0.12     \\
        SRRR      & 0.85                          & 160.50 & 64.31    & 0.00     & 6.98     \\
        SOFAR     & 3.45                          & 142.55 & 29.23    & 3.87     & 13.51    \\
        SeqACS    & 0.78                          & 99.79  & 0.67     & 4.46     & 1.68     \\
        ParACS(L) & 0.55                          & 80.38  & 1.58     & 3.33     & 8.84     \\
        ParACS(R) & 0.77                          & 95.32  & 2.03     & 8.71     & 8.08     \\
        SeqSTL    & 0.41                          & 50.72  & 0.59     & 4.12     & 0.47     \\
        ParSTL(L) & 0.42                          & 61.85  & 0.79     & 3.77     & 1.03     \\
        ParSTL(R) & 0.46                          & 54.22  & 0.75     & 3.85     & 0.17     \\ \hline
                  & \multicolumn{5}{c}{$p = 400$}                                           \\ \hline
        RRR       & 13.86                         & 243.25 & 100.00   & 0.00     & 0.47     \\
        SRRR      & 0.49                          & 194.07 & 50.02    & 0.00     & 21.94    \\
        SOFAR     & 5.61                          & 158.58 & 32.99    & 3.27     & 30.08    \\
        SeqACS    & 0.46                          & 115.37 & 0.44     & 5.17     & 12.47    \\
        ParACS(L) & 0.31                          & 87.26  & 1.13     & 4.12     & 12.41    \\
        ParACS(R) & 0.46                          & 106.39 & 1.22     & 6.40     & 12.83    \\
        SeqSTL    & 0.24                          & 57.48  & 0.37     & 4.87     & 1.03     \\
        ParSTL(L) & 0.24                          & 67.41  & 0.50     & 4.48     & 1.41     \\
        ParSTL(R) & 0.26                          & 61.04  & 0.45     & 4.67     & 0.51     \\ \hline
      \end{tabular}
    }
    \subfloat[$\text{SNR} = 0.5$, $r=6$]{
      \begin{tabular}{ccccc}
        \hline
        Er(C)   & Er(XC) & FPR (\%) & FNR (\%) & Time (s) \\ \hline
        \multicolumn{5}{c}{$p = 100$}                     \\ \hline
        4788.37 & 478.79 & 100.00   & 0.00     & 0.09     \\
        4.09    & 302.34 & 56.12    & 0.00     & 7.26     \\
        29.85   & 855.91 & 22.83    & 7.75     & 22.13    \\
        4.83    & 289.24 & 2.45     & 9.01     & 1.56     \\
        6.02    & 402.34 & 4.82     & 6.50     & 8.55     \\
        4.71    & 269.37 & 8.38     & 6.11     & 7.45     \\
        3.76    & 215.73 & 2.37     & 10.90    & 0.76     \\
        5.83    & 380.29 & 3.19     & 8.87     & 1.17     \\
        3.57    & 188.62 & 3.17     & 8.03     & 0.13     \\ \hline
        \multicolumn{5}{c}{$p = 200$}                     \\ \hline
        91.45   & 472.36 & 100.00   & 0.00     & 0.11     \\
        1.70    & 287.74 & 65.43    & 0.00     & 14.38    \\
        20.80   & 744.53 & 21.12    & 7.25     & 25.80    \\
        2.87    & 340.08 & 1.56     & 10.45    & 5.17     \\
        2.42    & 316.90 & 3.58     & 6.79     & 9.50     \\
        2.25    & 273.33 & 4.51     & 8.90     & 8.45     \\
        2.25    & 250.90 & 1.53     & 12.55    & 1.01     \\
        2.45    & 306.59 & 2.14     & 9.33     & 1.11     \\
        2.46    & 244.62 & 2.18     & 8.96     & 0.18     \\ \hline
        \multicolumn{5}{c}{$p = 400$}                     \\ \hline
        66.92   & 478.95 & 100.00   & 0.00     & 0.43     \\
        0.99    & 355.65 & 57.26    & 0.00     & 40.66    \\
        29.44   & 554.55 & 26.58    & 7.45     & 10.38    \\
        1.80    & 404.84 & 0.92     & 10.43    & 30.04    \\
        1.22    & 305.48 & 2.42     & 7.29     & 12.99    \\
        1.43    & 306.86 & 2.68     & 9.18     & 14.19    \\
        1.42    & 281.50 & 0.94     & 12.76    & 1.77     \\
        1.57    & 349.37 & 1.36     & 9.97     & 1.66     \\
        1.56    & 278.45 & 1.36     & 9.49     & 0.54     \\ \hline
      \end{tabular}
    }
  }
\end{table}

\begin{table}[H]
  \caption{Results for Model III with $\text{SNR} = 0.5$, $p \in \{ 100, 200,400\}$ and $r^* \in \{3,6\}$; here Er(C) and Er(XC) are rescaled by multiplying $10^3$.}\label{tab:model-2}
  \centering
  \resizebox{\columnwidth}{!}{
    \subfloat[$\text{SNR} = 0.5$, $r=3$]{
      \begin{tabular}{lccccc}
        \hline
        Method    & Er(C)                          & Er(XC) & FPR (\%) & FNR (\%) & Time (s) \\ \hline
                  & \multicolumn{5}{c}{$p  = 100$}                                           \\ \hline
        RRR       & 5628.05                        & 241.69 & 100.00   & 0.00     & 0.08     \\
        SRRR      & 1.86                           & 148.12 & 62.70    & 0.00     & 4.82     \\
        SOFAR     & 3.88                           & 150.65 & 23.48    & 0.00     & 11.27    \\
        SeqACS    & 1.15                           & 76.76  & 1.07     & 0.31     & 0.38     \\
        ParACS(L) & 1.17                           & 73.86  & 4.12     & 0.00     & 8.11     \\
        ParACS(R) & 2.05                           & 115.89 & 9.37     & 0.64     & 7.05     \\
        SeqSTL    & 0.64                           & 43.43  & 1.23     & 0.57     & 0.35     \\
        ParSTL(L) & 0.94                           & 61.82  & 2.65     & 0.26     & 1.13     \\
        ParSTL(R) & 0.90                           & 57.82  & 3.22     & 1.12     & 0.15     \\ \hline
                  & \multicolumn{5}{c}{$p = 200$}                                            \\ \hline
        RRR       & 18.05                          & 246.27 & 100.00   & 0.00     & 0.12     \\
        SRRR      & 0.98                           & 166.66 & 66.31    & 0.00     & 6.54     \\
        SOFAR     & 1.82                           & 114.63 & 27.10    & 0.00     & 12.91    \\
        SeqACS    & 0.62                           & 83.42  & 0.73     & 0.62     & 1.15     \\
        ParACS(L) & 0.64                           & 80.90  & 2.87     & 0.00     & 9.02     \\
        ParACS(R) & 0.62                           & 75.61  & 2.81     & 0.00     & 8.25     \\
        SeqSTL    & 0.36                           & 47.35  & 0.67     & 0.67     & 0.43     \\
        ParSTL(L) & 0.53                           & 66.74  & 1.71     & 1.10     & 0.94     \\
        ParSTL(R) & 0.56                           & 67.50  & 2.09     & 0.76     & 0.17     \\ \hline
                  & \multicolumn{5}{c}{$p = 400$}                                            \\ \hline
        RRR       & 13.53                          & 247.87 & 100.00   & 0.00     & 0.46     \\
        SRRR      & 0.58                           & 205.47 & 51.19    & 0.00     & 20.32    \\
        SOFAR     & 4.09                           & 158.56 & 31.15    & 0.26     & 31.04    \\
        SeqACS    & 0.35                           & 95.23  & 0.40     & 0.33     & 7.20     \\
        ParACS(L) & 0.37                           & 94.57  & 1.73     & 0.00     & 12.67    \\
        ParACS(R) & 0.37                           & 89.29  & 1.89     & 0.00     & 12.65    \\
        SeqSTL    & 0.20                           & 52.19  & 0.36     & 0.67     & 0.96     \\
        ParSTL(L) & 0.29                           & 75.08  & 0.92     & 0.50     & 1.33     \\
        ParSTL(R) & 0.32                           & 75.86  & 1.16     & 0.81     & 0.53     \\ \hline
      \end{tabular}
    }
    \subfloat[$\text{SNR} = 0.5$, $r=6$]{
      \begin{tabular}{ccccc}
        \hline
        Er(C)    & Er(XC) & FPR (\%) & FNR (\%) & Time (s) \\ \hline
        \multicolumn{5}{c}{$p = 100$}                      \\ \hline
        10574.61 & 472.34 & 100.00   & 0.00     & 0.08     \\
        5.76     & 365.04 & 66.86    & 0.00     & 6.63     \\
        12.38    & 538.21 & 16.28    & 0.25     & 20.74    \\
        5.13     & 334.67 & 2.60     & 1.85     & 0.88     \\
        5.36     & 331.75 & 12.05    & 0.27     & 8.61     \\
        7.77     & 380.04 & 23.52    & 2.74     & 7.64     \\
        3.87     & 246.22 & 3.42     & 17.39    & 0.88     \\
        5.33     & 336.18 & 8.73     & 5.58     & 1.15     \\
        4.34     & 230.55 & 10.15    & 4.70     & 0.13     \\ \hline
        \multicolumn{5}{c}{$p = 200$}                      \\ \hline
        83.35    & 479.98 & 100.00   & 0.00     & 0.11     \\
        2.42     & 328.58 & 71.09    & 0.00     & 10.14    \\
        5.92     & 401.01 & 15.58    & 0.00     & 27.03    \\
        2.87     & 365.76 & 1.72     & 2.81     & 2.64     \\
        2.96     & 317.90 & 8.30     & 0.06     & 9.25     \\
        2.23     & 224.24 & 8.33     & 0.00     & 8.37     \\
        2.09     & 258.97 & 2.09     & 17.87    & 1.12     \\
        3.00     & 319.34 & 5.64     & 7.87     & 1.17     \\
        2.85     & 272.97 & 6.75     & 5.62     & 0.20     \\ \hline
        \multicolumn{5}{c}{$p = 400$}                      \\ \hline
        64.31    & 478.36 & 100.00   & 0.00     & 0.45     \\
        1.48     & 384.30 & 59.62    & 0.00     & 31.72    \\
        15.87    & 504.32 & 22.85    & 5.14     & 10.49    \\
        1.54     & 396.73 & 1.01     & 2.49     & 19.71    \\
        1.73     & 337.57 & 5.16     & 0.05     & 12.67    \\
        1.54     & 270.46 & 6.17     & 0.00     & 13.25    \\
        1.12     & 276.18 & 1.12     & 15.96    & 1.80     \\
        1.84     & 358.94 & 3.25     & 8.52     & 1.66     \\
        1.87     & 316.16 & 4.02     & 6.83     & 0.59     \\ \hline
      \end{tabular}
    }
  }
\end{table}

\doublespacing

\subsection{Impact of Step Size}

We investigate the impact of the step size $\epsilon$ in stagewise learning on the performance of the proposed methods.
The data are simulated from Model \upperroman{3} with $p=600$, $q=n=200$, $r^*=3$ and $\rho=0.3$. We consider different step sizes,
i.e., $\epsilon$ in $\left\{0.5, 1, 1.5, 2, 2.5\right\}$. The experiment is replicated 200 times. Figure \ref{fig:step} shows the boxplots of the estimation error,
prediction error, and computation time. The performance of the stagewise methods is stabilized when $\epsilon$ is small enough, i.e., $\epsilon \leq 1$ in this 
example. But the computational cost will increase if $\epsilon$ is too small and it is clear that there is a tradeoff between stepsize and accuracy.
In practice, we suggest to conduct some pilot numerical analysis to identify a proper step size.


\begin{figure}[H]
    {\includegraphics[width=.32\textwidth,keepaspectratio=true]{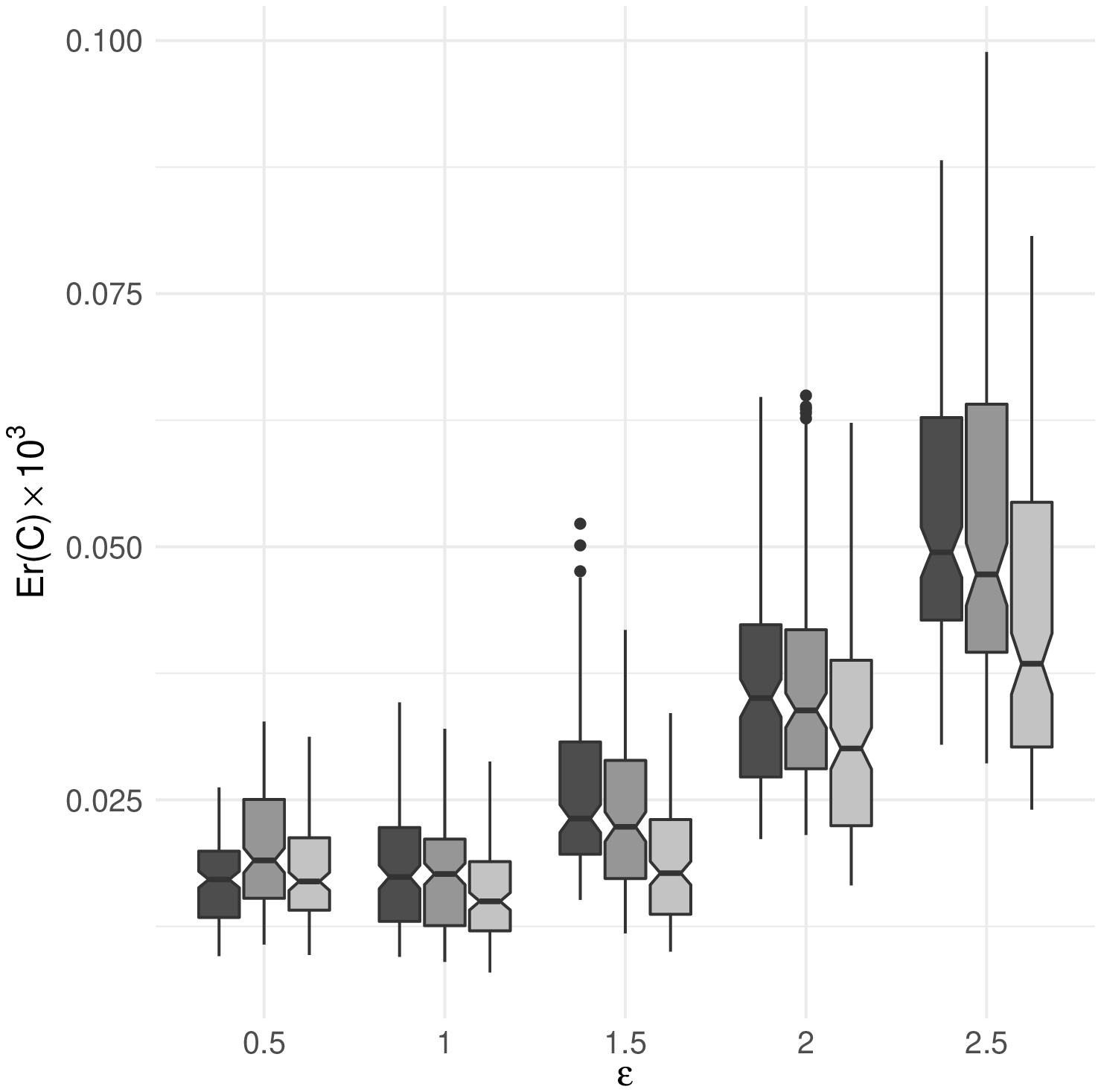}}
    {\includegraphics[width=.32\textwidth,keepaspectratio=true]{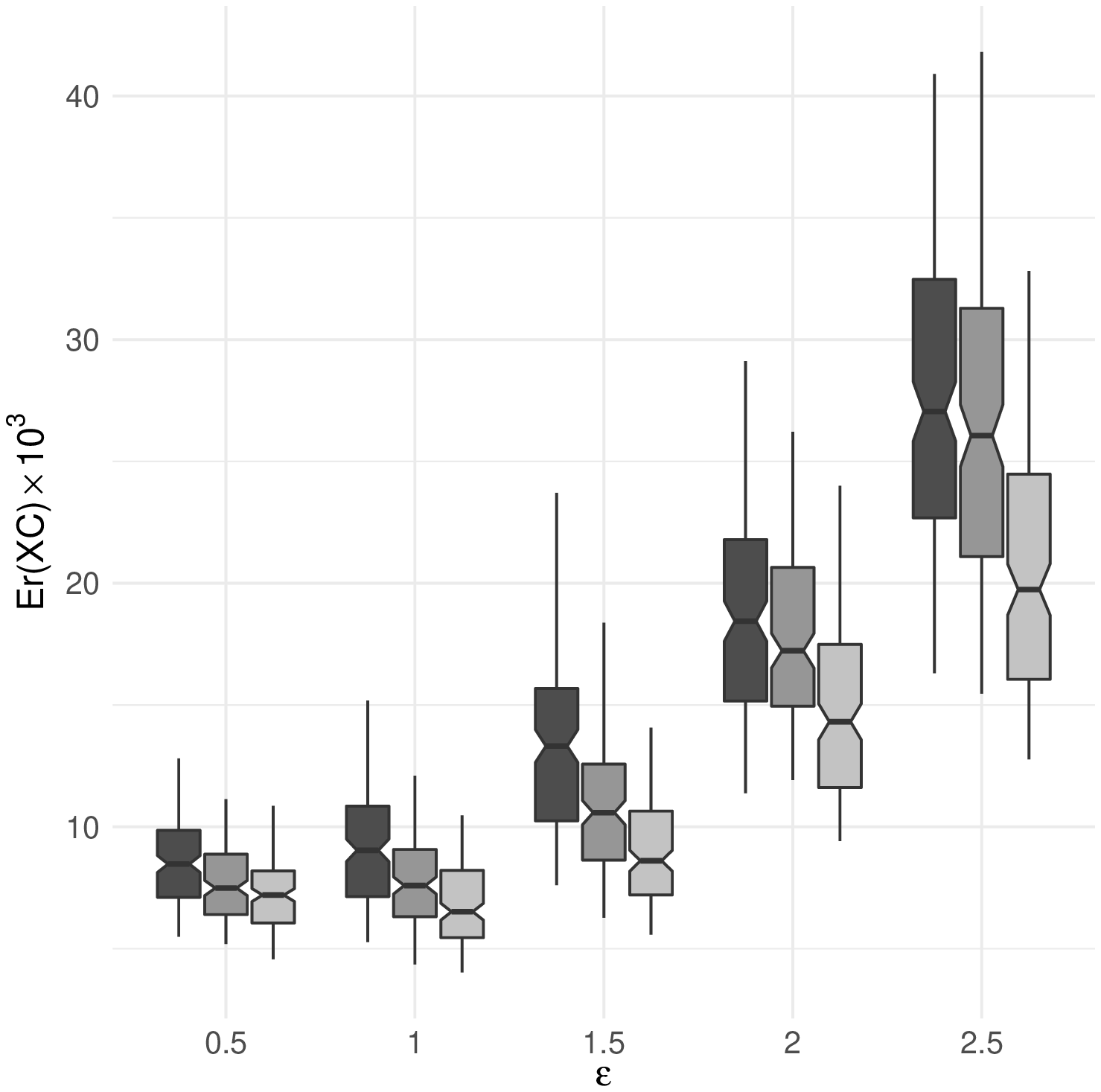}}
    {\includegraphics[width=.32\textwidth,keepaspectratio=true]{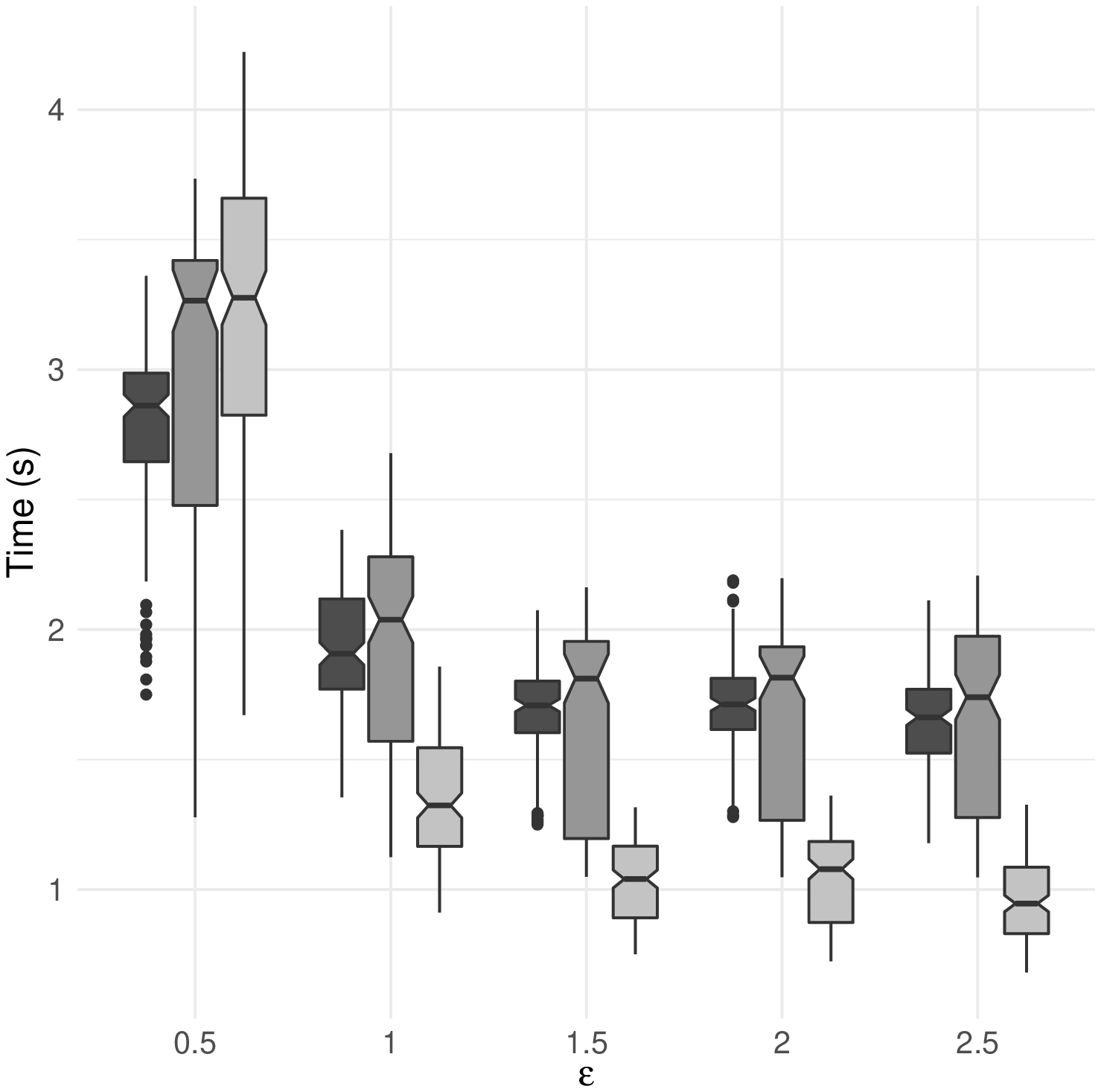}}
    \caption{Impact of the step size $\epsilon$ on the performance of the stagewise learning methods.The three colors, dark, medium and light grey correspond to ParSTL(L), ParSTL(R), and SeqSTL respectively.}\label{fig:step}
\end{figure}

\subsection{Computational Efficiency}\label{sec:sim:eff}

We report the computational time of our proposed methods as functions of $p$, $q$ and $n$ separately. The data are simulated from Model \upperroman{3} with $\text{SNR} = 0.25$, $r^*=3$ and $\rho=0.3$. In each setup, we let one of the three model dimensions, i.e., the sample size $n$, the number of predictors $p$, and the number of responses $q$, to vary from $200$ to $2000$, while holding the other two at a constant value of $200$. The experiment is repeated 100 times under each setting. Figure \ref{fig:time} reports the average computation times as functions of $p$, $q$ and $n$ in three panels from the left to the right, respectively. It is evident that our proposed approaches are scalable to large-scale problems, and the gain over the ACS-based approaches is dramatic.

\begin{figure}[H]
    \subfloat{\includegraphics[width=\textwidth,keepaspectratio=true]{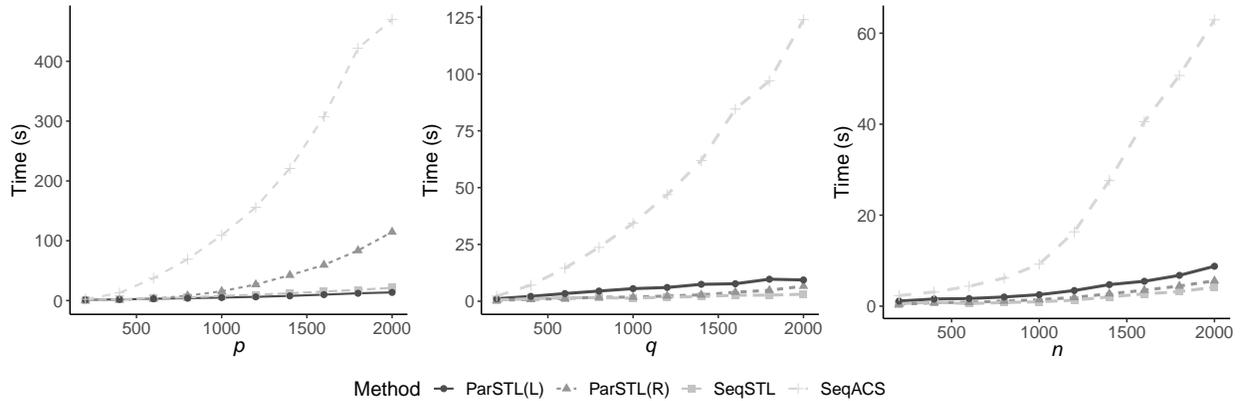}}
    \caption{Simulation: Computation cost with increasing $p$, $q$ and $n$.}\label{fig:time}
\end{figure}


\section{Yeast eQTL Mapping Analysis}\label{sec:yeast}

In an expression quantitative trait loci (eQTLs) mapping analysis, the main objective is to examine the association between the eQTLs, i.e., regions of the genome containing DNA sequence variants, and the expression levels of the genes in certain signaling pathways. Biochemical evidence often suggests that there exist a few functionally distinct signaling pathways of genes, each of which may involve only a subset of genes and correspondingly a subset of eQTLs. Therefore, the recovery of such association structure can be formulated as a sparse factor regression problem, with the gene expressions being the responses and the eQTLs being the predictors. Here, we analyze the yeast eQTL data set described by \citet{brem2005landscape} and \citet{storey2005multiple}, to illustrate the power and scalability of the proposed approaches for estimating the associations between $p=3244$ genetic markers and $q=54$ genes that belong to the yeast Mitogen-activated protein kinases (MAPKs) signaling pathway \citep{kanehisa2009kegg}, with data collected from $n=112$ yeast samples. 

\citet{uematsu2019sofar} used the same data set to showcase their SOFAR method. In their work, a marginal screening approach was first used to reduce the number of marker locations from $p = 3244$ to $p = 605$, which greatly alleviated the computational burden. However, since our proposed approaches are more scalable, it is worth trying to include all the markers for a joint regression. We thus try both approaches in this analysis.

Under either setting, i.e., with or without marginal screening, we perform a random splitting procedure to compare the performance of different methods. To make the comparison fair, all the methods have the same pre-specified rank, which is selected from RRR via 10-fold cross validation. Specifically, each time the data set is randomly split into $80\%$ for model fitting and $20\%$ for computing the out-sample mean squared error (MSE) of the fitted model. Also recorded are the computation time (in seconds), the number of nonzero entries in $\widehat{\U}$ ($\|\widehat{\U}\|_0$), the number of nonzero rows in $\widehat{\U}$ ($\|\widehat{\U}\|_{2,0}$), the number of nonzero entries in $\widehat{\V}$ ($\|\widehat{\V}\|_0$), and the number of nonzero rows in $\widehat{\V}$ ($\|\widehat{\V}\|_{2,0}$). The procedure is repeated 100 times, and to make a robust comparison we compute the $10\%$ trimmed means and standard deviations of the above performance measures.

Table \ref{tab:pre_trimmed} reports the results for the setting with marginal screening. All the methods that pursue sparse and low rank structures outperform the benchmark RRR in term of out-of-sample prediction performance; the ParSTL(R) method performs the best, although the improvement is not substantial comparing to other close competitors. Our proposed methods are no doubt the most computationally efficient among all the sparse and low-rank methods. The gain in computational efficiency is even more revealing and dramatic in the setting of $p = 3244$ without marginal screening, for which the results are reported in Table \ref{tab:full_trimmed}. For example, ParSTL can be more than 100 times faster than competitors such as RSSVD (similar to ParACS(R)) or SeqACS. It is also interesting to see that the predictive performance of all methods becomes slightly worse comparing to the setting with marginal screening. The performance of RRR deteriorates the most since it lacks the power of eliminating noise variables. The results suggest that in this particular application the potential benefit of jointly considering all the markers is exceeded by the loss due to noise accumulation. Nevertheless, we see that the proposed methods are still able to perform competitively in such a high-dimensional problem with excellent scalability.

\begin{table}[H]
\centering
\caption{Yeast eQTL mapping analysis: Results with marginal screening.}
\label{tab:pre_trimmed}
\scalebox{.75}{
\begin{tabular}{lcccccc}
\hline
Method    & $\|\widehat{\U}\|_0$ & $\|\widehat{\U}\|_{2,0}$ & $\|\widehat{\V}\|_0$ & $\|\widehat{\V}\|_{2,0}$ & MSE         & Time(s)       \\ \hline
RRR       & 1815 (0)             & 605 (0)                  & 162 (0)              & 54 (0)                   & 0.34 (0.03) & 0.52 (0.11)   \\
SOFAR     & 128.09 (23.85)       & 44.26 (7.64)             & 43.5 (7.38)          & 14.96 (1.41)             & 0.26 (0.03) & 85.35 (18.55) \\
RSSVD     & 61.39 (10.94)        & 58.49 (10.52)            & 6.74 (2.13)          & 6.2 (1.16)               & 0.3 (0.03)  & 38.14 (13.93) \\
SRRR      & 522.45 (18.28)       & 174.15 (6.09)            & 162 (0)              & 54 (0)                   & 0.24 (0.02) & 76.67 (13.78) \\
SeqACS    & 38.61 (3.34)         & 38.6 (3.35)              & 14.51 (1.44)         & 11.11 (0.93)             & 0.26 (0.02) & 29.84 (10.16) \\
SeqSTL    & 67.42 (8.03)         & 64.49 (7.52)             & 31.64 (4.28)         & 19.06 (1.9)              & 0.23 (0.02) & 3.26 (0.82)   \\
ParSTL(L) & 59.92 (9.39)         & 45.85 (6.21)             & 20.43 (2.61)         & 13.5 (1.66)              & 0.29 (0.1)  & 1.34 (0.33)   \\
ParSTL(R) & 72.71 (8.84)         & 69.62 (8.17)             & 34.50 (4.68)         & 21.25 (1.92)             & 0.21 (0.02) & 1.12 (0.21)   \\ \hline
\end{tabular}

}
\end{table}

\begin{table}[H]
\centering
\caption{Yeast eQTL mapping analysis: Results with full data (without marginal screening).}
\label{tab:full_trimmed}
\scalebox{.7}{
\begin{tabular}{lcccccc}
\hline
Method    & $\|\widehat{\U}\|_0$ & $\|\widehat{\U}\|_{2,0}$ & $\|\widehat{\V}\|_0$ & $\|\widehat{\V}\|_{2,0}$ & MSE         & Time(s)          \\ \hline
RRR       & 9732 (1788.02)       & 3244 (0)                 & 162 (29.76)          & 54 (0)                   & 0.44 (0.04) & 32.35 (2.60)   \\
SOFAR     & 156.62 (84.95)       & 61.1 (25.4)              & 40.09 (13.07)        & 16.02 (2.54)             & 0.28 (0.03) & 283.51 (102.01)  \\
RSSVD     & 52 (9.4)             & 51.09 (9.17)             & 5.4 (0.7)            & 5.35 (0.62)              & 0.3 (0.02)  & 1330.67 (541.91) \\
SRRR      & 1332.09 (359.29)     & 437.06 (44.94)           & 162 (29.76)          & 54 (0)                   & 0.24 (0.02) & 900.95 (248.2)   \\
SeqACS    & 38.74 (3.7)          & 38.67 (3.74)             & 13.29 (1.35)         & 10.31 (0.91)             & 0.26 (0.02) & 1378.98 (415.3)  \\
SeqSTL    & 57.56 (9.09)         & 56.46 (8.5)              & 24.18 (5.37)         & 15.65 (2.37)             & 0.24 (0.02) & 16 (5.04)        \\
ParSTL(L) & 57.69 (12.03)        & 43.67 (5.58)             & 19.21 (3.34)         & 12.36 (1.5)              & 0.31 (0.12) & 5.41 (1.7)       \\
ParSTL(R) & 67.46 (13.01)        & 64 (8.35)                & 30.91 (7.85)         & 19.25 (2.7)              & 0.22 (0.02) & 36.02 (3.16)     \\ \hline
\end{tabular}

}
\end{table}

Lastly, we examine the genes selected in the estimated pathways (low-rank components or layers) using our proposed methods with the full data set. Since each $\widehat{\v}_k$ vector is normalized to have unit $\ell_1$ norm, we define top genes to be those with entries larger than $1/q$ in magnitude. The selected genes are summarized in Table \ref{tab:gene_sum2}, in which the results in \citet{uematsu2019sofar} are reproduced for each of comparison. Figure \ref{fig:latent} shows the scatterplots of the latent responses $\Y\widehat{\v}_k$ versus the latent predictors $\X\widehat{\u}_k$ from the three stagewise methods fitted on the full data. The results from the three proposed methods are mostly consistent with \citet{uematsu2019sofar}. The patterns recovered by the three methods are similar, except that ParSTL(L) appears to reveal a slight different association comparing to the other two methods.
As explained in \citet{gustin1998map}, pheromone induces mating and nitrogen starvation induces filamentation are two pathways in yeast cells. The identified genes STE2, STE3 and GPA1 are receptors required for mating pheromone, which bind the cognate lipopeptide pheromones (MFA2, MFA1 etc). Besides, another top gene FUS3 is used to phosphorylate several downstream targets, including FAR1 which mediates various responses required for successful mating. Interestingly, our analysis identified an additional gene, TEC1, that was not reported in previous analysis. TEC1, as the transcription factor specific to the filamentation pathway, has been shown to have FUS3-dependent degradation induced by pheromone signaling \citep{bao2004pheromone}. 

\begin{table}[H]
\caption{Yeast eQTL mapping analysis: Top genes in the estimated pathways. The first row reproduces the results of \citet{uematsu2019sofar}. The first panel is for the setting with marginal screening, and the second panel is for the setting with full data.}\label{tab:gene_sum2}
\scalebox{0.65}{
\begin{tabular}{llll}
\hline
  Method    & Layer 1                                                                                   & Layer 2                                                                             & Layer 3
\\ \hline
 \multicolumn{4}{c}{With marginal screening ($p=605$)} 
  \\ \hline
\rowcolor[HTML]{EFEFEF} 
SOFAR     & STE3, STE2, MFA2, MFA1                                                                    & CTT1, SLN1, SLT2, MSN4, GLO1                                                        & FUS1, FAR1, STE2, STE3, GPA1, FUS3, STE12                                                                    \\
  SeqSTL    & \begin{tabular}[c]{@{}l@{}}STE3, STE2, MFA2, MFA1, \\ CTT1, FUS1\end{tabular}             & CTT1, FUS1, MSN4, GLO1, SLN1                                                        & \begin{tabular}[c]{@{}l@{}}FUS1, FAR1, STE2, GPA1, FUS3, STE3, TEC1, \\ SLN1, STE12, MFA2, CTT1\end{tabular} \\
\rowcolor[HTML]{EFEFEF} 
ParSTL(R) & \begin{tabular}[c]{@{}l@{}}STE3, STE2, MFA2, MFA1, \\ CTT1, FUS1, TEC1\end{tabular}       & \begin{tabular}[c]{@{}l@{}}CTT1, FUS1, MSN4, STE3, GLO1, \\ SLN1, MFA2\end{tabular} & \begin{tabular}[c]{@{}l@{}}FUS1, FAR1, STE2, GPA1, FUS3, STE3, TEC1, \\ CTT1, SLN1, STE12\end{tabular}       \\
  ParSTL(L) & \begin{tabular}[c]{@{}l@{}}STE3, STE2, MFA2, MFA1, \\ CTT1, FUS1\end{tabular}             & CTT1, FUS1, MSN4, GLO1                                                              & CTT1, MSN4, STE2                                                                                             
\\ \hline
 \multicolumn{4}{c}{Full data ($p=3244$)} 
\\ \hline  
SeqSTL    & \begin{tabular}[c]{@{}l@{}}STE3, STE2, MFA2, MFA1, \\ CTT1, FUS1\end{tabular}             & \begin{tabular}[c]{@{}l@{}}FUS1, CTT1, FAR1, SLN1, STE2, \\ MFA2, GPA1\end{tabular} & TEC1                                                                                                         \\
\rowcolor[HTML]{EFEFEF} 
ParSTL(R) & \begin{tabular}[c]{@{}l@{}}STE3, STE2, MFA2, MFA1, \\ CTT1, FUS1, TEC1, WSC2\end{tabular} & \begin{tabular}[c]{@{}l@{}}CTT1, FUS1, MSN4, GLO1, MFA2,\\  STE3, SLN1\end{tabular} & \begin{tabular}[c]{@{}l@{}}FUS1, FAR1, STE2, STE3, GPA1, FUS3, \\ TEC1, CTT1\end{tabular}                    \\
ParSTL(L) & \begin{tabular}[c]{@{}l@{}}STE3, STE2, MFA2, MFA1, \\ CTT1, FUS1\end{tabular}             & CTT1, FUS1, MSN4                                                                    & CTT1, MSN4, STE2                                                                                             \\ \hline
\end{tabular}
}
\end{table}

\begin{figure}[H]
\subfloat[Layer 1]{\includegraphics[width=.33\columnwidth]{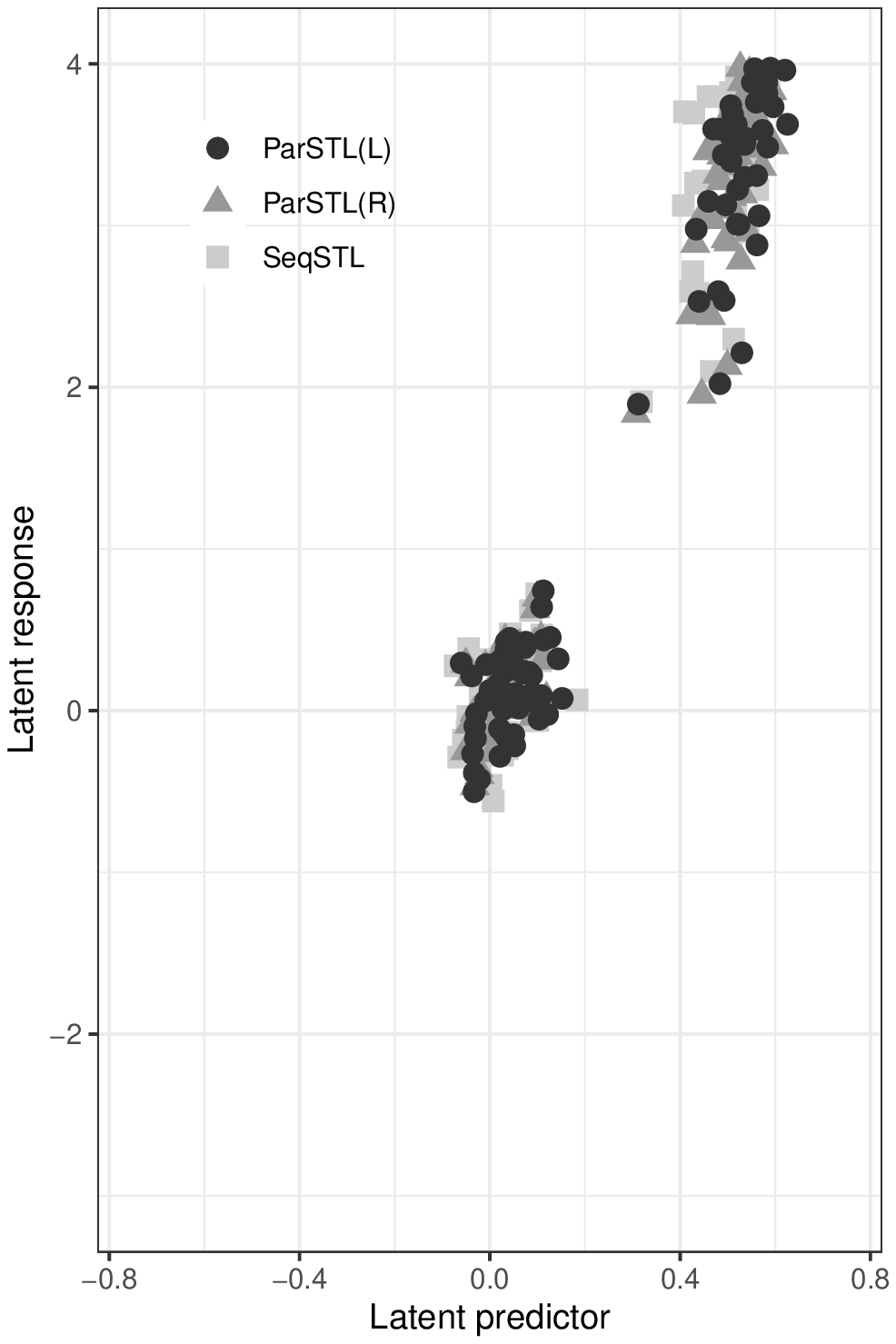}}
\subfloat[Layer 2]{\includegraphics[width=.33\columnwidth]{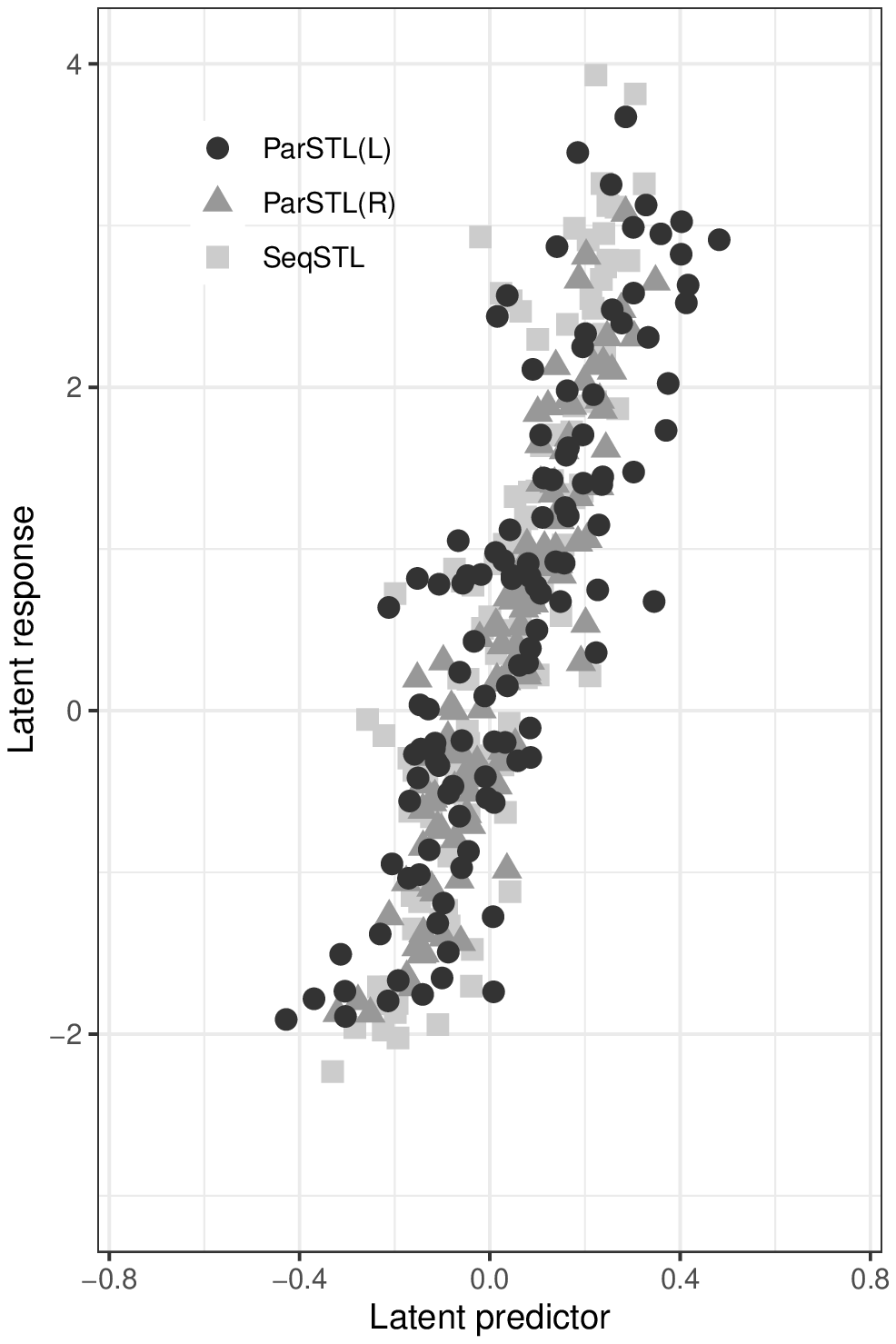}}
\subfloat[Layer 3]{\includegraphics[width=.33\columnwidth]{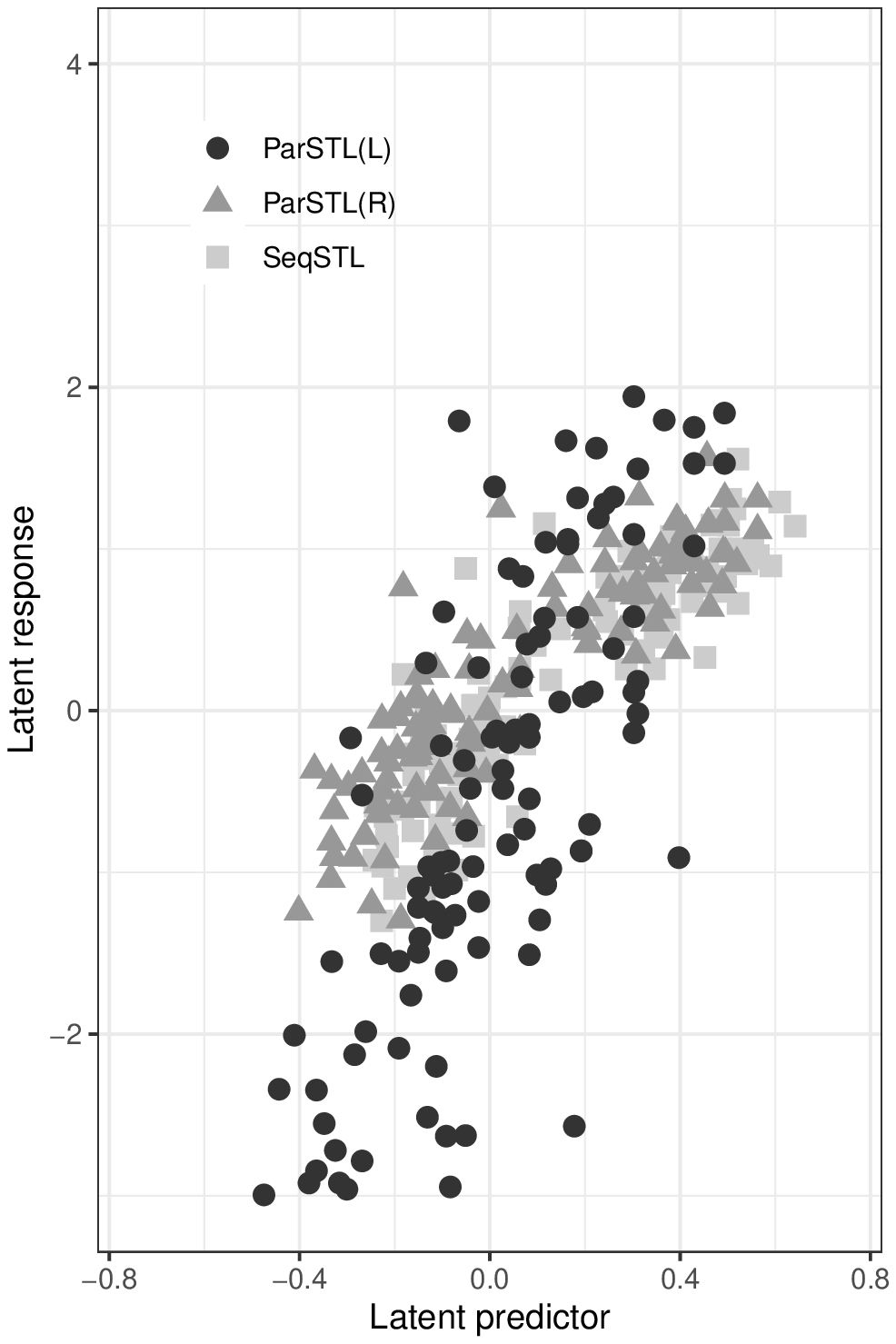}}
\caption{Yeast eQTL mapping analysis: Scatter plots of the estimated latent responses and latent predictors.}
\label{fig:latent}
\end{figure}

\section{Discussion} \label{sec6}
We have proposed a statistically guided divide-and-conquer approach for sparse factorization of large matrix. Both sequential and parallel deflation strategies are analyzed with corresponding statistical underpinnings. Moreover, a contended stagewise learning technique is developed to efficiently trace out the whole solution paths, which enjoys a much lower computational complexity than the alternating convex search. Our work is among the first to enable stagewise learning for non-convex problems, and extensive numerical studies demonstrate the effectiveness and scalability of our approach. There are several future research directions. Our approach can be applicable in many multi-convex problems including sparse factorization of a tensor \citep{HeChen2018}, and it can also be extended to more general model settings such as generalized linear models and mixture models. Building upon the architecture of the contended stagewise learning procedure, it is hopeful that a general framework of non-convex stagewise learning can be developed.  
\section*{Acknowledgments} \label{acknowledgments}
Chen's research is partially supported by NSF grants DMS-1613295 and IIS-1718798. Zheng's research is supported by National Natural Science Foundation of China grants 11601501, 11671374, and 71731010, and Fundamental Research Funds for the Central Universities grant WK2040160028. 
\bibliography{rui-bibtex}

\clearpage
\appendix

 \begin{center}
 {\Large\bf Supplementary Materials}
 \end{center}

  We present the proofs of the theoretical results in this section. For matrices $\bf A$ and $\bf B$, $\inner{{\bf A},{\bf B}}$ means the Frobenius inner product of ${\bf A}$ and ${\bf B}$, $\norm{{\bf A}}_F$ is the Forbenius norm of ${\bf A}$, and $\norm{{\bf A}}_1 = \sum_{i,j}\abs{A_{ij}}$. Moreover, denote by $J_k$ the support set of $\C_k^*$ and $\widehat{\bDelta}_k = \widehat{\C}_k - \C_k^*$ with $\widehat{\C}_{k}$ the $k$th layer estimator of $\C_k^*$ in either sequential or parallel pursuit.

  \section{Proof for the sequential pursuit}
  Recall that the sequential pursuit with lasso penalty sequentially solves
  \begin{align}\label{opt_se}
    (\widehat{d}_{k},\widehat{\u}_{k},\widehat{\v}_{k})
     & = \underset{(d,\u,\v)}{\arg\min}~(2n)^{-1}\norm{\Y_{k} - d\X\u\v\trans}_{F}^{2}
    + \lambda_{k}\norm{d\u\v\trans}_{1} \nonumber                                      \\
     & ~\mbox{s.t. } d > 0,n^{-1}\u\trans\X\trans\X\u=\v\trans\v=1,
  \end{align}
  where $\Y_{k} = \Y -\X\sum_{\ell=0}^{k-1}\widehat{\C}_{\ell}$, $\widehat{\C}_{0}=\0$ and
  $\widehat{\C}_{\ell} = \widehat{d}_{\ell}\widehat{\u}_{\ell}\widehat{\v}_{\ell}\trans$. We need two lemmas before showing the main theorem.

  \subsection{Lemma \ref{se_lemma1} and its proof}

  \begin{lemma}\label{se_lemma1}
    Under Condition \ref{cond3}, for any $k$, $1 \leq k \leq r^{*} - 1$, it holds that
    \begin{equation*}
      \frac{\norm{\X\widehat{\bDelta}_{k}}_{F}^{2}}
      {2\abs{\sum_{j=k+1}^{r^{*}}\inner{\X\C_{j}^{*}, \X\widehat{\C}_{k}}}} \geq \omega_{k}^{-1} > 1,
    \end{equation*}
    where $\omega_{k}=1-\delta_{k}^*/d_{k}^* > 0$.
  \end{lemma}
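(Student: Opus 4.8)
The plan is to turn the inner products into products of ``cosines'' against the orthonormal systems supplied by the $\bf P$-orthogonal SVD, and then to show that the natural upper bound on the denominator and the natural lower bound on the numerator carry a \emph{common factor} that cancels, leaving exactly the singular-value ratio $d_k^*/d_{k+1}^*=\omega_k^{-1}$.

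First I would set up the geometry. By the normalization in \eqref{eq:decomposition}, the vectors $\{n^{-1/2}\X\u_j^*\}_{j=1}^{r^*}$ and $\{\v_j^*\}_{j=1}^{r^*}$ are orthonormal systems, and by the constraints in \eqref{opt_se} both $n^{-1/2}\X\widehat\u_k$ and $\widehat\v_k$ are unit vectors. Writing $a_j=n^{-1}(\X\u_j^*)\trans\X\widehat\u_k$ and $b_j=\v_j\strans\widehat\v_k$, Bessel's inequality gives $\sum_{j=1}^{r^*}a_j^2\le1$ and $\sum_{j=1}^{r^*}b_j^2\le1$, hence $\sum_{j>k}a_j^2\le1-a_k^2$ and $\sum_{j>k}b_j^2\le1-b_k^2$. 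Since $n^{-1}\inner{\X\C_j^*,\X\widehat\C_k}=d_j^*\widehat d_k\,a_jb_j$ (the orthogonality of layers is exactly what makes these cross terms small, as $\widehat\u_k,\widehat\v_k$ are nearly aligned with the $k$th directions), I obtain
\[ n^{-1}\sum_{j=k+1}^{r^*}\inner{\X\C_j^*,\X\widehat\C_k}=\widehat d_k\sum_{j=k+1}^{r^*}d_j^*a_jb_j. \]
Using $d_j^*\le d_{k+1}^*$ for $j\ge k+1$ (Condition \ref{cond3}), Cauchy--Schwarz, and the two Bessel bounds, the denominator is controlled by
\[ \Bigl|\,n^{-1}\sum_{j=k+1}^{r^*}\inner{\X\C_j^*,\X\widehat\C_k}\Bigr|\le \widehat d_k\,d_{k+1}^*\sqrt{(1-a_k^2)(1-b_k^2)}. \]

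For the numerator I would expand the rank-one difference, using the unit-norm facts above, to get $n^{-1}\norm{\X\widehat\bDelta_k}_F^2=\widehat d_k^2+(d_k^*)^2-2\widehat d_k d_k^*a_kb_k$. The crux is the elementary inequality $1-a_kb_k\ge\sqrt{(1-a_k^2)(1-b_k^2)}$, which follows from $(1-a_kb_k)^2-(1-a_k^2)(1-b_k^2)=(a_k-b_k)^2\ge0$ together with $a_kb_k\le1$. Combining this with the AM--GM bound $\widehat d_k^2+(d_k^*)^2\ge2\widehat d_k d_k^*$ yields
\[ n^{-1}\norm{\X\widehat\bDelta_k}_F^2\ge 2\widehat d_k d_k^*(1-a_kb_k)\ge 2\widehat d_k d_k^*\sqrt{(1-a_k^2)(1-b_k^2)}, \]
i.e.\ precisely the denominator bound with $d_{k+1}^*$ replaced by the larger $d_k^*$.

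Dividing the numerator lower bound by the denominator upper bound, the common factor $\widehat d_k\sqrt{(1-a_k^2)(1-b_k^2)}$ cancels, giving $\norm{\X\widehat\bDelta_k}_F^2/\bigl(2|\sum_{j>k}\inner{\X\C_j^*,\X\widehat\C_k}|\bigr)\ge d_k^*/d_{k+1}^*=\omega_k^{-1}$, which exceeds $1$ strictly since $\delta_k^*>0$ by Condition \ref{cond3}. The degenerate case where the denominator vanishes (forced when $a_k^2=1$ or $b_k^2=1$) makes the ratio infinite and is trivial. I expect the main obstacle to be organizing the two crude bounds so that the $\sqrt{(1-a_k^2)(1-b_k^2)}$ factors match and cancel; the whole argument hinges on recognizing that the scalar inequality $1-a_kb_k\ge\sqrt{(1-a_k^2)(1-b_k^2)}$ is exactly strong enough to make the numerator bound align with the denominator bound, so that naive Cauchy--Schwarz and Bessel estimates suffice without any finer spectral perturbation analysis.
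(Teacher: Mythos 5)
Your proof is correct and follows essentially the same route as the paper's: expand $n^{-1/2}\X\widehat{\u}_k$ and $\widehat{\v}_k$ in the $\bf P$-orthogonal singular systems, lower-bound the numerator via $\widehat{d}_k/d_k^*+d_k^*/\widehat{d}_k\geq 2$, upper-bound the cross terms via Cauchy--Schwarz together with $d_j^*\le d_{k+1}^*$, and close with the key scalar inequality $1-a_kb_k\ge\sqrt{(1-a_k^2)(1-b_k^2)}$. The only (cosmetic) differences are that you invoke Bessel's inequality where the paper completes the expansion with an extra $(r^*+1)$th orthogonal direction to get exact unit sums, and you pull out $d_{k+1}^*$ before applying Cauchy--Schwarz whereas the paper uses a weighted Cauchy--Schwarz and bounds the weights by $\omega_k$ afterward.
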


  \begin{proof}[Proof of Lemma \ref{se_lemma1}]
    Since the following argument applies to any $k$, $1 \leq k \leq r^{*}-1$, we consider a fixed $k$ here. First of all,
    there exists the following decomposition for $\X\widehat{\u}_{k}$ and $\widehat{\v}_{k}$ that
    \begin{align}\label{se_lemma1_eq1}
      \frac{1}{\sqrt{n}}\X\widehat{\u}_{k} = \sum_{j=1}^{r^{*} + 1}a_{j}\frac{1}{\sqrt{n}}\X\u_{j}^{*}, \ \
      \widehat{\v}_{k} = \sum_{j=1}^{r^{*} + 1}b_{j}\v_{j}^{*},
    \end{align}
    where $\frac{1}{\sqrt{n}}\X\u_{r^{*} + 1}^{*}$ and $\v_{r^{*} + 1}^{*}$ are some unit length vectors orthogonal to the sets of unit
    length orthogonal vectors $\{\frac{1}{\sqrt{n}}\X\u_{j}^{*}\}_{j=1}^{r^{*}}$ and $\{\v_{j}^{*}\}_{j=1}^{r^{*}}$, respectively,
    so that the coefficients satisfy $\sum_{j=1}^{r^{*}+1}a_{j}^{2} = \sum_{j=1}^{r^{*}+1}b_{j}^{2} = 1$.

    \smallskip

    If $a_k = 1$ or $b_k = 1$, it is a relatively trivial case. So we consider cases where $a_k < 1$ and $b_k < 1$. Based on decomposition \eqref{se_lemma1_eq1}, direct calculation yields that
    \begin{align*}
      n^{-1}\norm{\X\widehat{\bDelta}_{k}}_{F}^{2} = \widehat{d}_{k}^{2} - 2\widehat{d}_{k}d_{k}^{*}a_{k}b_{k} + d_{k}^{*2}, \\
      n^{-1}\sum_{j=k+1}^{r^{*}}\inner{\X\C_{j}^{*},\X\widehat{\C}_{k}} =
      \sum_{j=k+1}^{r^{*}}d_{j}^{*}\widehat{d}_{k}a_{j}b_{j}.
    \end{align*}
    It gives
    \begin{align*}
       & \frac{\norm{\X\widehat{\bDelta}_{k}}_{F}^{2}}{2\abs{\sum_{j=k+1}^{r^{*}}\inner{\X\C_{j}^{*},\X\widehat{\C}_{k}}}}
      = \frac{\widehat{\tau}_{k}^{2} - 2\widehat{\tau}_{k}a_{k}b_{k} + 1}
      {2\widehat{\tau}_{k}\abs{\sum_{j=k+1}^{r^{*}}(d_{j}^{*}/d_{k}^{*})a_{j}b_{j}}}                                       \\
       & = \frac{\widehat{\tau}_{k} - 2a_{k}b_{k} + \widehat{\tau}_k^{-1}}
      {2\abs{\sum_{j=k+1}^{r^{*}}(d_{j}^{*}/d_{k}^{*})a_{j}b_{j}}}
      \geq \frac{1-a_{k}b_{k}}{\abs{\sum_{j=k+1}^{r^{*}}(d_{j}^{*}/d_{k}^{*})a_{j}b_{j}}},
    \end{align*}
    where $\widehat{\tau}_{k} = \widehat{d}_{k} / d_{k}^{*} > 0$ and we make use of the fact that
    $\widehat{\tau}_{k} + \widehat{\tau}_k^{-1} \geq 2$.

    Then by Condition \ref{cond3}, we have
    \begin{equation*}
      \max_{k+1\leq j \leq r^*}d_{j}^{*}/d_{k}^{*} = d_{k+1}^{*} / d_{k}^{*}
      = (d_{k}^* - \delta_{k}^*) / d_{k}^* = 1 - \delta_{k}^* / d_{k}^* = \omega_{k} < 1.
    \end{equation*}
    Denote by $\widetilde{\omega}_{j} = d_j^* / d_k^*$. It follows that
    \begin{equation*}
      \begin{aligned}
        \frac{(1 - a_{k}b_{k})^{2}}{(\sum_{j=k+1}^{r^{*}}\widetilde{\omega}_ja_{j}b_{j})^{2}}
         & \geq \frac{(1 - a_{k}b_{k})^{2}}{(\sum_{j=k+1}^{r^{*}}\widetilde{\omega}_ja_{j}^{2})(\sum_{j=k+1}^{r^{*}}\widetilde{\omega}_jb_{j}^{2})} \\
         & \geq \frac{(1 - a_{k}b_{k})^{2}}{\omega_{k}^2(\sum_{j=k+1}^{r^{*}}a_{j}^{2})(\sum_{j=k+1}^{r^{*}}b_{j}^{2})}
        \geq \frac{(1-a_{k}b_{k})^{2}}{\omega_{k}^2(1-a_{k}^{2})(1-b_{k}^{2})} \geq \omega_{k}^{-2}.
      \end{aligned}
    \end{equation*}

    Thus, it yields
    \begin{equation*}
      \frac{\norm{\X\widehat{\bDelta}_{k}}_{F}^{2}}{2\abs{\sum_{\ell=k+1}^{r^{*}}\inner{\X\C_{\ell}^{*},\X\widehat{\C}_{k}}}} \geq \frac{1-a_{k}b_{k}}{\abs{\sum_{j=k+1}^{r^{*}}(d_{j}^{*}/d_{k}^{*})a_{j}b_{j}}}
      \geq \omega_{k}^{-1} > 1,
    \end{equation*}
    which concludes the results of Lemma \ref{se_lemma1}.
  \end{proof}

  \subsection{Lemma \ref{se_lemma2} and its proof}

  \begin{lemma}\label{se_lemma2}
    Under Conditions \ref{cond2}--\ref{cond1} with $s \geq s_1$ and $\lambda_{1} = 2\sigma_{\max} \sqrt{2\alpha\log(pq)/n}$ for some constant $\alpha > 1$, we have with probability at least
    $1 - (pq)^{1-\alpha}$,
    \begin{align*}
       & \norm{\widehat{\bDelta}_{1}}_{F} = O(\sqrt{s_{1}} \lambda_{1}) = O\left(\theta_1\sqrt{s_{1}\log(pq)/n}\right), \\
       & \norm{\widehat{\bDelta}_{1}}_{1} = O(s_{1} \lambda_{1}) = O\left(\theta_1s_{1}\sqrt{\log(pq)/n}\right),
    \end{align*}
    where $\theta_1 = (1-\omega_1)^{-1} = d_{1}^*/\delta_{1}^*$. 
  \end{lemma}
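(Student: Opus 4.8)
The plan is to run a lasso-type basic-inequality argument for the unit-rank estimator $\widehat{\C}_1 = \widehat{d}_1\widehat{\u}_1\widehat{\v}_1\trans$, the one genuinely new ingredient being the control of the contamination from the not-yet-extracted layers $\C_2^*,\ldots,\C_{r^*}^*$. Since $k=1$ we have $\Y_1 = \Y = \X\C_1^* + \X\sum_{j=2}^{r^*}\C_j^* + \E$. Because $\C_1^* = d_1^*\u_1^*\v_1\strans$ is feasible in \eqref{opt_se}, optimality of $\widehat{\C}_1$ gives the basic inequality $(2n)^{-1}\norm{\Y_1-\X\widehat{\C}_1}_F^2 + \lambda_1\norm{\widehat{\C}_1}_1 \le (2n)^{-1}\norm{\Y_1-\X\C_1^*}_F^2 + \lambda_1\norm{\C_1^*}_1$. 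First I would substitute the decomposition of $\Y_1$ and expand the squares; the common term $\norm{\Y_1-\X\C_1^*}_F^2$ cancels, and after rearranging one is left with
\begin{equation*}
(2n)^{-1}\norm{\X\widehat{\bDelta}_1}_F^2 \le n^{-1}\inner{\X\widehat{\bDelta}_1, \E} + n^{-1}\sum_{j=2}^{r^*}\inner{\X\widehat{\bDelta}_1, \X\C_j^*} + \lambda_1\bigl(\norm{\C_1^*}_1 - \norm{\widehat{\C}_1}_1\bigr),
\end{equation*}
so three terms must be controlled: a stochastic term, a bias term from the remaining layers, and the penalty difference.

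For the bias term I would first use the $\bf P$-orthogonality of the layers: since $\inner{\X\C_1^*, \X\C_j^*}=0$ for $j\ne1$, it equals $n^{-1}\sum_{j=2}^{r^*}\inner{\X\C_j^*, \X\widehat{\C}_1}$, which is exactly the quantity that Lemma \ref{se_lemma1} bounds at $k=1$ by $\tfrac{\omega_1}{2}n^{-1}\norm{\X\widehat{\bDelta}_1}_F^2$. Absorbing it into the left-hand side produces the factor $1-\omega_1 = \delta_1^*/d_1^*$, which is the source of $\theta_1 = (1-\omega_1)^{-1}$ in the stated rate. For the stochastic term I would write $n^{-1}\inner{\X\widehat{\bDelta}_1,\E} = n^{-1}\inner{\widehat{\bDelta}_1,\X\trans\E} \le \norm{\widehat{\bDelta}_1}_1\, n^{-1}\norm{\X\trans\E}_{\max}$ and, via a Gaussian tail bound on each of the $pq$ entries of $\X\trans\E$ (each centered Gaussian with variance at most $n\sigma_{\max}^2$, since the columns of $\X$ have $\ell_2$ norm $\sqrt n$) together with a union bound, show that $n^{-1}\norm{\X\trans\E}_{\max}\le\lambda_1/2$ on an event of probability at least $1-(pq)^{1-\alpha}$; this Gaussian concentration step is precisely what fixes the scale of $\lambda_1 = 2\sigma_{\max}\sqrt{2\alpha\log(pq)/n}$.

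On this event I would combine the two bounds with the standard support split $\norm{\C_1^*}_1 - \norm{\widehat{\C}_1}_1 \le \norm{(\widehat{\bDelta}_1)_{J_1}}_1 - \norm{(\widehat{\bDelta}_1)_{J_1^c}}_1$ to reach $(2n)^{-1}(1-\omega_1)\norm{\X\widehat{\bDelta}_1}_F^2 \le \tfrac{3\lambda_1}{2}\norm{(\widehat{\bDelta}_1)_{J_1}}_1 - \tfrac{\lambda_1}{2}\norm{(\widehat{\bDelta}_1)_{J_1^c}}_1$. Since $1-\omega_1>0$ the left side is nonnegative, which forces the cone condition $\norm{(\widehat{\bDelta}_1)_{J_1^c}}_1 \le 3\norm{(\widehat{\bDelta}_1)_{J_1}}_1$. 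With $s\ge s_1=\abs{J_1}$, the restricted-eigenvalue Condition \ref{cond1} then yields $\norm{\X\widehat{\bDelta}_1}_F^2 \ge n\rho_{l}\norm{(\widehat{\bDelta}_1)_{J_1}}_F^2$, and Cauchy--Schwarz gives $\norm{(\widehat{\bDelta}_1)_{J_1}}_1 \le \sqrt{s_1}\norm{(\widehat{\bDelta}_1)_{J_1}}_F$; solving the resulting quadratic inequality produces $\norm{(\widehat{\bDelta}_1)_{J_1}}_F = O(\theta_1\rho_{l}^{-1}\sqrt{s_1}\lambda_1)$. Finally I would pass from the on-support norm to the full norms using the cone condition and a standard $\ell_1$-to-$\ell_2$ blockwise-tail bound on the sorted off-support entries (using $s_1\le s$), giving $\norm{\widehat{\bDelta}_1}_F = O(\norm{(\widehat{\bDelta}_1)_{J_1}}_F)$ and $\norm{\widehat{\bDelta}_1}_1 \le 4\norm{(\widehat{\bDelta}_1)_{J_1}}_1$; treating $\rho_{l}$ as a fixed constant and substituting $\lambda_1$ delivers $\norm{\widehat{\bDelta}_1}_F = O(\theta_1\sqrt{s_1}\lambda_1) = O(\theta_1\sqrt{s_1\log(pq)/n})$ and the matching $\ell_1$ rate.

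The main obstacle, and the point where this departs from a textbook lasso bound, is the bias term: the unextracted layers form a structured, non-vanishing perturbation whose inner product with $\X\widehat{\C}_1$ could a priori dominate $\norm{\X\widehat{\bDelta}_1}_F^2$ and destroy the argument. The crux is that Lemma \ref{se_lemma1} certifies this bias is strictly smaller than $\norm{\X\widehat{\bDelta}_1}_F^2$, with the deficit governed by the singular-value gap, so it can be safely absorbed into the curvature term; once that is done, everything downstream is routine, and the gap enters the final bound only through the multiplicative factor $\theta_1$.
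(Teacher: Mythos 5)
Your proposal is correct and follows essentially the same route as the paper's own proof: the basic inequality from optimality, conversion of the bias term $\sum_{j\ge 2}\inner{\X\widehat{\bDelta}_1,\X\C_j^*}$ into $\sum_{j\ge 2}\inner{\X\widehat{\C}_1,\X\C_j^*}$ via $\bP$-orthogonality (a step the paper leaves implicit in ``after some simplification''), absorption of that term through Lemma \ref{se_lemma1} to produce the factor $1-\omega_1$, the H\"{o}lder/Gaussian union bound fixing $\lambda_1$, and then the standard cone--RE--Cauchy--Schwarz chain with the three-piece split of the off-support error. The only cosmetic differences are that you force the cone condition via nonnegativity of the quadratic term rather than by dropping it, and you compress the paper's explicit bounds on $\widehat{\bDelta}_{J_{1,s_1}^c}$ and $\widehat{\bDelta}_{\tilde{J}_{1,s_1}^c}$ into the phrase ``blockwise-tail bound,'' neither of which changes the argument.
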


  \begin{proof}[Proof of Lemma \ref{se_lemma2}]
    Because $(\widehat{d}_{1},\widehat{\u}_{1}, \widehat{\v}_{1})$ is the optimal solution of \eqref{opt_se} for $k=1$, we have
    \begin{equation*}
      (2n)^{-1}\norm{\Y -\X\widehat{\C}_{1} }_{F}^{2} + \lambda_{1}\norm{\widehat{\C}_{1}}
      \leq (2n)^{-1}\norm{\Y -\X\C_{1}^{*}}_{F}^{2} + \lambda_{1}\norm{\C_{1}^{*}}_{1},
    \end{equation*}
    where $\widehat{\C}_{1} = \widehat{d}_{1}\widehat{\u}_{1}\widehat{\v}_{1}\trans $ and $\C_{1}^{*} = d_{1}^{*}\u_{1}^{*}\v_{1}\strans$. After some simplification, the above inequality gives that
    \begin{equation*}
      (2n)^{-1}\norm{\X\widehat{\bDelta}_{1}}_{F}^{2} + \lambda_{1}\norm{\widehat{\C}_{1}}_{1}
      \leq n^{-1}\inner{\X\trans \E,\widehat{\bDelta}_{1}} + n^{-1}\sum_{k=2}^{r^{*}}\inner{\X\C_{k}^{*},\X\widehat{\C}_{1}} +
      \lambda_{1}\norm{\C_{1}^{*}}_{1}.
    \end{equation*}

    Applying Lemma \ref{se_lemma1}, we get
    \begin{align}\label{proof_se_lemma2_inq1}
      (2n)^{-1}(1- \omega_1)\norm{\X\widehat{\bDelta}_{1}}_{F}^{2} + \lambda_{1}\norm{\widehat{\C}_{1}}_{1}
      \leq n^{-1}\inner{\X\trans \E,\widehat{\bDelta}_{1}} + \lambda_{1}\norm{\C_{1}^{*}}_{1} \nonumber \\
      \leq n^{-1}\norm{\X\trans \E}_{\max}\norm{\widehat{\bDelta}}_{1} + \lambda_{1}\norm{\C_{1}^{*}}_{1},
    \end{align}
    where the last inequality holds because of the H\"{o}lder's inequality.
    Then on event $\mathcal{A} = \{n^{-1}\norm{\X\trans \E}_{\max} \leq \lambda_{1}/2\}$, by inequality \eqref{proof_se_lemma2_inq1}, we have
    \begin{equation*}
      \lambda_{1}\norm{\widehat{\C}_{1}}_{1} \leq \frac{\lambda_{1}}{2}\norm{\widehat{\bDelta}_{1}}_{1} + \lambda_{1}\norm{\C_{1}^{*}}_{1}.
    \end{equation*}
    It yields that
    \begin{equation}\label{proof_se_lemma2_inq2}
      \norm{\widehat{\bDelta}_{J_1^c}}_{1} \leq 3\norm{\widehat{\bDelta}_{J_1}}_{1}.
    \end{equation}
    Therefore, by Condition \ref{cond1}, we can get
    \begin{equation}\label{proof_se_lemma2_inq3}
      \rho_{l}(\norm{\widehat{\bDelta}_{J_{1}}}_{F}^{2}\vee\norm{\widehat{\bDelta}_{J_{1,s_{1}}^{c}}}_{F}^{2})
      \leq n^{-1}\norm{\X\widehat{\bDelta}_{1}}_{F}^{2},
    \end{equation}
    where $\widehat{\bDelta}_{J_{1,s_{1}}^{c}}$ is the submatrix of
    $\widehat{\bDelta}_{J_{1}^{c}}$ as defined in Condition \ref{cond1}. Our discussion will be conditioning on the event $\mathcal{A}$ hereafter.

    On the other hand, combining inequalities \eqref{proof_se_lemma2_inq1} and \eqref{proof_se_lemma2_inq2} yields that
    \begin{equation*}
      (1 - \omega_1)(2n)^{-1}\norm{\X\widehat{\bDelta}_{1}}_{F}^{2} \leq (3\lambda_{1}/2)\norm{\widehat{\bDelta}_{1}}_{1} \leq
      6\lambda_{1}\norm{\widehat{\bDelta}_{J_1}}_{1}.
    \end{equation*}
    Together with \eqref{proof_se_lemma2_inq3}, we have
    \begin{equation}\label{proof_se_lemma2_inq4}
      \frac{1-\omega_1}{2}\rho_{l}(\norm{\widehat{\bDelta}_{J_{1}}}_{F}^{2}\vee\norm{\widehat{\bDelta}_{J_{1,s_{1}}^{c}}}_{F}^{2})
      \leq (1 - \omega_1)(2n)^{-1}\norm{\X\widehat{\bDelta}_{1}}_{F}^{2}
      \leq 6\lambda_{1}\norm{\widehat{\bDelta}_{J_1}}_{1}
      \leq 6\lambda_{1}\sqrt{s_{1}}\norm{\widehat{\bDelta}_{J_1}}_{F}.
    \end{equation}
    It follows that
    \begin{equation}\label{proof_se_lemma2_inq5}
      \norm{\widehat{\bDelta}_{J_{1}}}_{F} \leq 12(\rho_{l} - \rho_{l}\omega_1)^{-1}\lambda_{1}\sqrt{s_{1}}.
    \end{equation}

    Note that the $k$th largest component of $\widehat{\bDelta}_{J_{1}^{c}}$ in terms of absolute value is bounded from above by
    $\norm{\widehat{\bDelta}_{J_{1}^{c}}}_{1}/k$, then we have
    \begin{equation*}
      \norm{\widehat{\bDelta}_{\tilde{J}_{1,s_{1}}^{c}}}_{F}^{2} \leq
      \sum_{k=s_{1}+1}^{pq-s_{1}}\norm{\widehat{\bDelta}_{J_{1}^{c}}}_{1}^{2}/k^{2}
      \leq s_{1}^{-1}\norm{\widehat{\bDelta}_{J_{1}^{c}}}_{1}^{2},
    \end{equation*}
    where $\widehat{\bDelta}_{\tilde{J}_{1,s_{1}}^{c}}$ is a submatrix of $\widehat{\bDelta}_{J_{1}^{c}}$
    consisting of the components excluding those with the $s_1$ largest absolute values. Then by inequalities \eqref{proof_se_lemma2_inq2} and \eqref{proof_se_lemma2_inq5},
    we get
    \begin{equation}\label{proof_se_lemma2_inq6}
      \norm{\widehat{\bDelta}_{\tilde{J}_{1,s_{1}}^{c}}}_{F} \leq
      s_{1}^{-1/2}\norm{\widehat{\bDelta}_{J_{1}^{c}}}_{1} \leq
      3s_{1}^{-1/2}\norm{\widehat{\bDelta}_{J_{1}}}_{1} \leq
      3\norm{\widehat{\bDelta}_{J_{1}}}_{F} \leq
      36(\rho_{l} - \rho_{l}\omega_1)^{-1}\lambda_{1}\sqrt{s_{1}}.
    \end{equation}

In view of inequalities \eqref{proof_se_lemma2_inq4} and \eqref{proof_se_lemma2_inq5}, we can also get
    \begin{equation}\label{proof_se_lemma2_inq7}
      \norm{\widehat{\bDelta}_{J_{1,s_{1}}^{c}}}_{F} \leq
      \{12\lambda_{1}(\rho_{l}-\rho_{l}\omega_1)^{-1}\sqrt{s_{1}}\norm{\widehat{\bDelta}_{J_{1}}}_{F}\}^{1/2} \leq
      12(\rho_{l}-\rho_{l}\omega_1)^{-1}\lambda_{1}\sqrt{s_{1}}.
    \end{equation}
    Finally, combining \eqref{proof_se_lemma2_inq5}
    , \eqref{proof_se_lemma2_inq6} and \eqref{proof_se_lemma2_inq7}, we have
    \begin{equation*}
      \norm{\widehat{\bDelta}_{1}}_{F} \leq
      \norm{\widehat{\bDelta}_{J_{1}}}_{F} + \norm{\widehat{\bDelta}_{\tilde{J}_{1,s_{1}}^{c}}}_{F}
      + \norm{\widehat{\bDelta}_{J_{1,s_{1}}^{c}}}_{F} \leq 5M_1\sqrt{s_1}\lambda_1 =
      O( \theta_1\sqrt{s_{1}} \lambda_{1}) = O\left(\theta_1\sqrt{s_{1}\log(pq)/n}\right),
    \end{equation*}
    where $M_1 = 12(\rho_{l}-\rho_{l}\omega_1)^{-1}$ and $\theta_1 = (1-\omega_1)^{-1}$. For $\ell_{1}$ loss bound, from inequalities
    \eqref{proof_se_lemma2_inq2} and \eqref{proof_se_lemma2_inq5},
    we directly get
    \begin{equation*}
      \norm{\widehat{\bDelta}_{1}}_{1}  \leq 4\norm{\widehat{\bDelta}_{J_{1}}}_{1}
      \leq 4\sqrt{s_{1}}\norm{\widehat{\bDelta}_{J_{1}}}_{F} \leq 4 M_1 s_{1} \lambda_{1} =
      O(\theta_{1}s_{1} \lambda_{1}) = O\left(\theta_{1}s_{1}\sqrt{\log(pq)/n}\right).
    \end{equation*}

    At last, we will derive the probability of event $\mathcal{A}$. First, it follows from the union bound that
    \begin{equation*}
      P\{n^{-1}\norm{\X\trans \E}_{\max} > \lambda_{1}/2\} \leq
      \sum_{i=1}^{p}\sum_{j=1}^{q}P\{n^{-1}\abs{\x_{i}\trans \e_{j}} > \lambda_{1}/2\}.
    \end{equation*}
    Since $\text{Var}(\x_{i}\trans \e_{j}) = \sigma_{j}^{2}\norm{\x_{i}}_{2}^{2} \leq n\sigma_{\max}^{2}$, under Condition \ref{cond2}, applying the tail probability bound of Gaussian distribution, we get
    \begin{equation}\label{gaus}
      P\{n^{-1}\norm{\X\trans \E}_{\max} > \lambda_{1}/2\} \leq 2pq\exp\{-\frac{n\lambda_{1}^2}{8\sigma_{\max}^2}\}.
    \end{equation}
    Therefore, when $\lambda_{1}^{2} = 8\alpha\sigma_{\max}^{2}n^{-1}\log(pq)$, we know that event $\mathcal{A}$ holds with probability at least $1 - 2(pq)^{1-\alpha}$. It completes the proof of Lemma \ref{se_lemma2}. 
  \end{proof}

  \subsection{Proof of Theorem \ref{se_th}}

  When $k=1$, the results of Theorem \ref{se_th} are established in Lemma \ref{se_lemma2}. Now we do some preparation before showing the results for $k>1$.
  Note that
  \begin{equation*}
    \begin{aligned}
      \norm{\Y_{k} - d\X\u\v\trans }_{F}^{2}
       & = \norm{\Y - \X\sum_{\ell=0}^{k-1}\widehat{\C}_{\ell} - d\X\u\v\trans }_{F}^{2}
      = \norm{\widetilde{\Y}_{k} - \sum_{\ell=0}^{k-1}\X\widehat{\bDelta}_{\ell} - d\X\u\v\trans }_{F}^{2}                                 \\
       & = \norm{\widetilde{\Y}_{k} - d\X\u\v\trans }_{F}^{2} + 2\sum_{\ell=0}^{k-1}\inner{d\X\u\v\trans ,\X\widehat{\bDelta}_{\ell}} + T,
    \end{aligned}
  \end{equation*}
  where $\widetilde{\Y}_{k} = \X\sum_{\ell=k}^{r^{*}}\C_{\ell} + \E$, $\widetilde{\bDelta}_{\ell} = \widehat{\C}_{\ell} - \C_{\ell}^{*}$
  and $T =  -2\sum_{\ell = 0}^{k-1}\inner{\X\widehat{\bDelta}_{\ell},\widetilde{\Y}_k}$.
  Because $T$ doesn't change with the triple $(d,\u,\v)$, the optimization problem \eqref{opt_se} is equivalent to
  \begin{equation}\label{opt_se_equivalent}
    \begin{aligned}
      (\widehat{d}_{k},\widehat{\u}_{k},\widehat{\v}_{k}) = \underset{d,\u,\v}{\arg\min}~
       & (2n)^{-1}\norm{\widetilde{\Y}_{k} - d\X\u\v\trans }_{F}^{2} + n^{-1}\sum_{\ell=0}^{k-1}\inner{d\X\u\v\trans ,\X\widehat{\bDelta}_{\ell}} \\
       & +\lambda_{k}\norm{d\u\v\trans}_{1}.
    \end{aligned}
  \end{equation}

  Because $(\widehat{d}_{k},\widehat{\u}_{k},\widehat{\v}_{k})$ is the optimal solution of \eqref{opt_se_equivalent}, we have
  \begin{equation*}
    \begin{aligned}
      (2n)^{-1} & \norm{\widetilde{\Y}_{k} - \X\widehat{\C}_{k}}_{F}^{2} + n^{-1}\sum_{\ell=0}^{k-1}\inner{\X\widehat{\C}_{k},\X\widehat{\bDelta}_{\ell}}
      + \lambda_{k}\norm{\widehat{\C}_{k}}_{1}                                                                                                              \\
                & \leq (2n)^{-1}\norm{\widetilde{\Y}_{k} - \X\C_{k}^{*}}_{F}^{2} + n^{-1}\sum_{\ell=0}^{k-1}\inner{\X\C_{k}^{*},\X\widehat{\bDelta}_{\ell}}
      + \lambda_{k}\norm{\C_{k}^{*}}_{1}.
    \end{aligned}
  \end{equation*}
  Plugging in $\widetilde{\Y}_{k} - \X\widehat{\C}_{k} = \widetilde{\Y}_{k} - \X\C_{k}^{*} - \X\widehat{\bDelta}_{k}$, direct calculation yields that
  \begin{equation*}
    \begin{aligned}
      (2n)^{-1} & \norm{\X\widehat{\bDelta}_{k}}_{F}^{2} + n^{-1}\sum_{\ell=0}^{k-1}\inner{\X\widehat{\bDelta}_{k},\X\widehat{\bDelta}_{\ell}} + \lambda_{k}\norm{\widehat{\C}_{k}}_{1} \\
                & \leq n^{-1}\inner{\X\widehat{\bDelta}_{k},\E}
      + n^{-1}\sum_{\ell=k+1}^{r^{*}}\inner{\X\widehat{\bDelta}_{k},\X\C_{\ell}^{*}} +
      \lambda_{k}\norm{\C_{k}^{*}}_{1}.
    \end{aligned}
  \end{equation*}

  Utilizing Lemma \ref{se_lemma1}, we can derive that for $k = 2,\dots, r^{*}$,
  \begin{equation}\label{proof_se_th_ineq2}
    \begin{aligned}
      (2n)^{-1}(1 - \omega_k)\norm{\X\widehat{\bDelta}_{k}}_{F}^{2}
       & + n^{-1}\sum_{\ell=0}^{k-1}\inner{\X\widehat{\bDelta}_{k},\X\widehat{\bDelta}_{\ell}} + \lambda_{k}\norm{\widehat{\C}_{k}}_{1} \\
       & \leq n^{-1}\inner{\X\widehat{\bDelta}_{k},\E}
      + \lambda_{k}\norm{\C_{k}^{*}}_{1}.
    \end{aligned}
  \end{equation}
  Hereafter, our discussion will be conditioning on the following event
  $$\mathcal{A} = \{n^{-1}\norm{\X\trans \E}_{\max} \leq \lambda_{1}/2\},$$
  which has been shown to hold with a significant probability in the proof of Lemma \ref{se_lemma2}.

  \smallskip

  We then prove the results for $k > 1$ by mathematical induction. Assume that
  \begin{align*}
     & \norm{\widehat{\bDelta}_{J_{\ell}^c}}_1 \leq 3\norm{\widehat{\bDelta}_{J_{\ell}}}_{1}, \ \ \norm{\widehat{\bDelta}_{J_\ell}}_F \leq M_{\ell}\sqrt{s_{\ell}}\lambda_{\ell},                \\
     & \norm{\widehat{\bDelta}_{\tilde{J}_\ell^c,s_\ell}}_F \leq 3M_{\ell}\sqrt{s_\ell}\lambda_\ell, \ \ \norm{\widehat{\bDelta}_{J_\ell^c,s_\ell}}_F \leq M_{\ell}\sqrt{s_{\ell}}\lambda_{\ell}
  \end{align*}
  hold for $1 \leq \ell \leq k-1$, where
  $M_\ell = 12(\rho_l - \rho_l\omega_\ell)^{-1}$, $\widehat{\bDelta}_{J_\ell^c,s_\ell}$ is the submatrix of $\widehat{\bDelta}_{J_\ell^c}$
  as defined in Condition \ref{cond1}, and $\widehat{\bDelta}_{\tilde{J}_\ell^c,s_\ell}$ is the submatrix of
  $\widehat{\bDelta}_{J_\ell^c}$ consisting of the components excluding those with the $s_\ell$ largest absolute values.

  First of all, by Condition \ref{cond4}, we get
  \begin{equation*}
    n^{-1}\norm{\X\trans\X\widehat{\bDelta}_{\ell}}_{\max} \leq s_{\ell}^{-1/2}\phi_u\norm{\widehat{\bDelta}_{J_\ell}}_F
    \leq \phi_u M_\ell \lambda_\ell,
  \end{equation*}
  where the last inequality holds because of $\norm{\widehat{\bDelta}_{J_\ell}}_F \leq M_\ell\sqrt{s_{\ell}}\lambda_{\ell}$. Define the
  $k$th regularization parameter inductively as
  \begin{equation}\label{eq:lambda}
    \lambda_k = 2(2^{-1}\lambda_1 + \phi_u \sum_{\ell=1}^{k-1} M_\ell \lambda_\ell)
    = \lambda_1 + 2\phi_u M \sum_{\ell=1}^{k-1}\theta_\ell\lambda_\ell = \lambda_1 + \sum_{\ell=1}^{k-1}\eta_{\ell}\lambda_{\ell},
  \end{equation}
  where $M = 12\rho_{l}^{-1}$, $\theta_\ell = (1-\omega_\ell)^{-1} = d_{\ell}^*/\delta_{\ell}^*$, and $\eta_{\ell}=24\phi_u\rho_{l}^{-1}\theta_\ell = c \theta_\ell$ with $c = 24\phi_u\rho_{l}^{-1}$.
  Then by the same argument as inequalities \eqref{proof_se_lemma2_inq1} and \eqref{proof_se_lemma2_inq2} in the proof of Lemma \ref{se_lemma2}, we can get
  \begin{equation}\label{induction:1}
    \norm{\widehat{\bDelta}_{J_k^c}}_1 \leq 3\norm{\widehat{\bDelta}_{J_k}}_1.
  \end{equation}

  Further applying Condition \ref{cond1} gives
  \begin{equation*}
    \rho_l (\norm{\widehat{\bDelta}_{J_k}}_F^2 \vee \norm{\widehat{\bDelta}_{J_k^c,s_k}}_F^2) \leq n^{-1}\norm{\X\widehat{\bDelta}_k}_F^2.
  \end{equation*}
  Then by inequality \eqref{proof_se_th_ineq2} and the same argument as inequality \eqref{proof_se_lemma2_inq4}, we have
  \begin{equation*}
    (2n)^{-1}(1-\omega_k)\norm{\X\widehat{\bDelta}_{k}}_{F}^{2} \leq
    \frac{3}{2}\lambda_{k}\norm{\widehat{\bDelta}_{k}}_{1} \leq
    6\lambda_{k}\norm{\widehat{\bDelta}_{J_{k}}}_{1} \leq
    6\lambda_{k}\sqrt{s_{k}}\norm{\widehat{\bDelta}_{J_{k}}}_{F}.
  \end{equation*}
  Combining these two inequalities yields that
  \begin{equation}\label{induction:2}
    \norm{\widehat{\bDelta}_{J_k}}_F \leq 12(\rho_l - \rho_l\omega_k)^{-1}\sqrt{s_k}\lambda_k = M_k\sqrt{s_k}\lambda_k.
  \end{equation}

  Then by the same argument as inequalities \eqref{proof_se_lemma2_inq6} and \eqref{proof_se_lemma2_inq7}, we can get
  \begin{equation*}
    \begin{aligned}
       & \norm{\widehat{\bDelta}_{\tilde{J}_{k,s_k}^c}}_F \leq s_k^{-1/2}\norm{\widehat{\bDelta}_{J_k^c}}_1
      \leq 3s_k^{-1/2}\norm{\widehat{\bDelta}_{J_k}}_1 \leq 3\norm{\widehat{\bDelta}_{J_k}}_F,              \\
       & \norm{\widehat{\bDelta}_{J_{k,s_k}^c}}_F \leq
      \{12\lambda_{k}(\rho_{l}-\rho_{l}\omega_k)^{-1}\sqrt{s_{k}}\norm{\widehat{\bDelta}_{J_{k}}}_{F}\}^{1/2}.
    \end{aligned}
  \end{equation*}
  It yields that
  \begin{equation}\label{induction:3}
    \begin{aligned}
       & \norm{\widehat{\bDelta}_{\tilde{J}_{k,s_k}^c}}_F
      \leq 36(\rho_l - \rho_l\omega_k)^{-1}\sqrt{s_k}\lambda_k = 3M_k\sqrt{s_k}\lambda_k,                                                      \\
       & \norm{\widehat{\bDelta}_{J_{k,s_k}^c}}_F \leq 12(\rho_{l}-\rho_{l}\omega_k)^{-1}\sqrt{s_{k}}\lambda_{k} = M_k\sqrt{s_{k}}\lambda_{k}.
    \end{aligned}
  \end{equation}
  Therefore, we can derive that inequalities \eqref{induction:1}--\eqref{induction:3} hold for $2\leq k \leq r^*$ by mathematical induction.
  By the same argument as that in the proof of Lemma \ref{se_lemma2}, inequalities \eqref{induction:1}--\eqref{induction:3}
  give
  \begin{equation*}
    \begin{aligned}
       & \norm{\widehat{\bDelta}_k}_{F} \leq \norm{\widehat{\bDelta}_{J_k}}_{F} + \norm{\widehat{\bDelta}_{J_k^c,s_k}}_{F}
      + \norm{\widehat{\bDelta}_{\tilde{J}_k^c,s_k}}_{F} \leq 5M_k\sqrt{s_k}\lambda_k = O(\theta_k\sqrt{s_k}\lambda_k),         \\
       & \norm{\widehat{\bDelta}_k}_{1} \leq 4\norm{\widehat{\bDelta}_{J_k}}_1 \leq 4\sqrt{s_k}\norm{\widehat{\bDelta}_{J_k}}_F
      \leq 4M_ks_k\lambda_k = O(\theta_k s_k \lambda_k).
    \end{aligned}
  \end{equation*}

  Finally, we derive an explicit form of the regularization parameter $\lambda_k$ defined in \eqref{eq:lambda}.  By definition, we have
  \begin{equation*}
    \lambda_{k} = \lambda_1 + \sum_{\ell=1}^{k-1}\eta_{\ell}\lambda_{\ell}, \ \ \lambda_{k - 1} = \lambda_1 + \sum_{\ell=1}^{k-2}\eta_{\ell}\lambda_{\ell}.
  \end{equation*}
  It yields that
  \begin{equation*}
    \lambda_{k} - \lambda_{k - 1} = \eta_{k - 1}\lambda_{k - 1},
  \end{equation*}
  which gives
  \begin{equation*}
    \lambda_{k} = (1 + \eta_{k - 1})\lambda_{k - 1}.
  \end{equation*}
  Thus, we can derive $\lambda_k = \Pi_{\ell=1}^{k-1}(1+\eta_\ell)\lambda_1$ by mathematical induction, which concludes the proof of Theorem \ref{se_th}.

  \section{Proof for the parallel pursuit}

  Recall that the parallel pursuit is as follows
  \begin{equation}\label{para_each_layer_estimator}
    \begin{aligned}
      (\widehat{d}_{k},\widehat{\u}_{k},\widehat{\v}_{k})
       & = \underset{(d,\u,\v)}{\arg\min}~ (2n)^{-1}\norm{\Y_{k} - d\X\u\v\trans}_{F}^{2} + \lambda_{k}\norm{d\u\v\trans}_{1} \\
       & \mbox{s.t. } d > 0, n^{-1}\u\trans\X\trans\X\u = \v\trans\v = 1.
    \end{aligned}
  \end{equation}
  where $\Y_{k} = \Y - \X\sum_{\ell\neq k}\widetilde{\C}_{\ell}$ with the initial $\ell$th layer estimator $\widetilde{\C}_{\ell}$ that by definition takes the $s$ largest components of $\widetilde{\C}_{k}^0$ in terms of absolute values while sets the others to be zero. Here $\widetilde{\C}_{\ell}^0$ is $\ell$the unit rank matrix of the initial Lasso estimator $\widetilde{\C}$, which is generated from
  \begin{equation}\label{para_init_estimator_lasso}
    \begin{aligned}
      \widetilde{\C} = \underset{\C}{\arg\min}~ (2n)^{-1}\norm{\Y - \X\C}_{F}^{2} + \lambda_{0}\norm{\C}_{1}.
    \end{aligned}
  \end{equation}
  We need two lemmas before showing the main results of the parallel pursuit.

  \subsection{Lemma \ref{para_th_init} and its proof}

  \begin{lemma}\label{para_th_init}
    When Conditions \ref{cond2} and \ref{cond1} hold with the sparsity level $s \geq s_0$, $\lambda_{0} = 2\sigma_{\max} \sqrt{2\alpha\log(pq)/n}$ for some constant $\alpha > 1$, we have with probability at least $1 - (pq)^{1-\alpha}$, the initial Lasso estimator $\widetilde{\C}$ satisfies
    \begin{equation*}
      \begin{aligned}
        \norm{\widetilde{\bDelta}}_{F} = O(\sqrt{s_0\log(pq)/n}),~ n^{-1/2}\norm{\X\widetilde{\bDelta}}_F = O(\sqrt{s_0\log(pq)/n}),
      \end{aligned}
    \end{equation*}
    where $s_0 = \norm{\C^*}_{0}$ and $\widetilde{\bDelta} = \widetilde{\C} - \C^{*}$.
  \end{lemma}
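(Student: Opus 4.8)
The plan is to run the standard Lasso oracle argument, which here is in fact simpler than the proof of Lemma~\ref{se_lemma2}: because $\widetilde{\C}$ targets the full coefficient matrix $\C^*$ directly rather than a single deflated layer, there are no cross-layer signal terms to absorb, and hence no need to invoke Lemma~\ref{se_lemma1} or the eigen-factor $\theta$. I would start from the defining optimality of $\widetilde{\C}$ in \eqref{para_init_estimator_lasso}, i.e. $(2n)^{-1}\norm{\Y-\X\widetilde{\C}}_F^2+\lambda_0\norm{\widetilde{\C}}_1 \le (2n)^{-1}\norm{\Y-\X\C^*}_F^2+\lambda_0\norm{\C^*}_1$, substitute $\Y=\X\C^*+\E$, and cancel $\norm{\E}_F^2$ to reach the basic inequality
\[
  (2n)^{-1}\norm{\X\widetilde{\bDelta}}_F^2+\lambda_0\norm{\widetilde{\C}}_1 \le n^{-1}\inner{\X\trans\E,\widetilde{\bDelta}}+\lambda_0\norm{\C^*}_1 .
\]

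Next I would control the stochastic term by H\"older's inequality, $n^{-1}\inner{\X\trans\E,\widetilde{\bDelta}}\le n^{-1}\norm{\X\trans\E}_{\max}\norm{\widetilde{\bDelta}}_1$, and condition on exactly the event $\mathcal{A}=\{n^{-1}\norm{\X\trans\E}_{\max}\le \lambda_0/2\}$ already analyzed in Lemma~\ref{se_lemma2}; the probability bound is inherited from inequality~\eqref{gaus} with $\lambda_1$ replaced by the same-form threshold $\lambda_0$. On $\mathcal{A}$ the basic inequality yields $\lambda_0\norm{\widetilde{\C}}_1\le (\lambda_0/2)\norm{\widetilde{\bDelta}}_1+\lambda_0\norm{\C^*}_1$, and splitting $\widetilde{\bDelta}$ over the support $J=\supp(\C^*)$ (with $|J|=s_0\le s$) and its complement gives the cone condition $\norm{\widetilde{\bDelta}_{J^c}}_1\le 3\norm{\widetilde{\bDelta}_J}_1$. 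This is precisely the cone for which Condition~\ref{cond1} is stated, so I may invoke the matrix RE condition to obtain $\rho_l(\norm{\widetilde{\bDelta}_J}_F^2\vee\norm{\widetilde{\bDelta}_{J_s^c}}_F^2)\le n^{-1}\norm{\X\widetilde{\bDelta}}_F^2$.

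Combining the basic inequality with the cone condition gives $(2n)^{-1}\norm{\X\widetilde{\bDelta}}_F^2\le (3\lambda_0/2)\norm{\widetilde{\bDelta}_J}_1\le (3\lambda_0/2)\sqrt{s_0}\,\norm{\widetilde{\bDelta}_J}_F$; feeding in the RE lower bound and cancelling one factor of $\norm{\widetilde{\bDelta}_J}_F$ yields $\norm{\widetilde{\bDelta}_J}_F=O(\rho_l^{-1}\sqrt{s_0}\,\lambda_0)=O(\sqrt{s_0\log(pq)/n})$, while the same two inequalities directly give the prediction bound $n^{-1/2}\norm{\X\widetilde{\bDelta}}_F=O(\sqrt{s_0\log(pq)/n})$. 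To pass from $\norm{\widetilde{\bDelta}_J}_F$ to the full $\norm{\widetilde{\bDelta}}_F$ I would reuse the tail-sum trick from Lemma~\ref{se_lemma2}: the $k$th largest entry of $\widetilde{\bDelta}_{J^c}$ is at most $\norm{\widetilde{\bDelta}_{J^c}}_1/k$, so $\norm{\widetilde{\bDelta}_{\tilde J_s^c}}_F\le s_0^{-1/2}\norm{\widetilde{\bDelta}_{J^c}}_1\le 3\norm{\widetilde{\bDelta}_J}_F$, and summing the three pieces $\widetilde{\bDelta}_J$, $\widetilde{\bDelta}_{J_s^c}$, $\widetilde{\bDelta}_{\tilde J_s^c}$ closes the Frobenius bound.

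There is no genuine obstacle here; the content is a textbook RE-based Lasso rate, and essentially every estimate has already been rehearsed in Lemma~\ref{se_lemma2}. The only points requiring care are bookkeeping rather than difficulty: one should note that the matrix RE condition is legitimately applicable because a Frobenius norm equals the $\ell_2$ norm of the vectorization (so Condition~\ref{cond1} reduces to the usual vector RE condition), and one must use the hypothesis $s\ge s_0$ to guarantee that the entire support of $\C^*$ fits inside the sparsity level at which $\rho_l$ is assumed. Because no deflation is involved, the rate is governed purely by $s_0$ and carries no accumulation factor $\gamma_k$ or eigen-gap factor $\theta_k$, which is exactly the simplification that makes this the clean baseline for the subsequent parallel-pursuit analysis.
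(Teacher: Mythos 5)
Your proof is correct and takes essentially the same route as the paper's: the paper likewise starts from the Lasso basic inequality, conditions on the event $\mathcal{A}=\{n^{-1}\norm{\X\trans\E}_{\max}\le\lambda_0/2\}$ inherited from Lemma \ref{se_lemma2}, derives the cone condition, applies Condition \ref{cond1}, and closes the Frobenius bound with the same tail-sum decomposition into $\widetilde{\bDelta}_J$, $\widetilde{\bDelta}_{J_{s_0}^c}$, and $\widetilde{\bDelta}_{\tilde J_{s_0}^c}$. The minor discrepancies (your $3\lambda_0/2$ versus the paper's looser constant $12\lambda_0$ in the intermediate prediction-error bound) affect only constants, not the stated rates.
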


  \begin{proof}[Proof of Lemma \ref{para_th_init}]
    By the optimality of $\widetilde{\C}$, we have
    \begin{equation*}
      (2n)^{-1}\norm{\X\widetilde{\bDelta}}_{F}^{2} + \lambda_{0}\norm{\widetilde{\C}}_{1} \leq \inner{\widetilde{\bDelta}, \X\trans \E} + \lambda_{0}\norm{\C^{*}}_{1}.
    \end{equation*}
    Applying the same argument as inequalities \eqref{proof_se_lemma2_inq1} and \eqref{proof_se_lemma2_inq2} in the proof of Lemma \ref{se_lemma2}, conditioning on the event $\mathcal{A} = \{\norm{\X\trans \E}_{\max} \leq \lambda_{0}/2\}$, we can get
    \begin{equation*}
      \norm{\widetilde{\bDelta}_{J^{c}}}_{1} \leq 3\norm{\widetilde{\bDelta}_{J}}_{1}.
    \end{equation*}
    Combining these two inequalities gives that
    \begin{equation}\label{ineq1_proof_para_th_init}
      n^{-1}\norm{\X\widetilde{\bDelta}}_{F}^{2} \leq 3\lambda_{0}\norm{\widetilde{\bDelta}}_{1}
      \leq 12\lambda_{0}\norm{\widetilde{\bDelta}_{J}}_{1} \leq 12\sqrt{s_0}\lambda_{0}\norm{\widetilde{\bDelta}_{J}}_{F}.
    \end{equation}

    On the other hand, by Condition \ref{cond1}, we have
    \begin{equation}\label{ineq2_proof_para_th_init}
      \rho_{l}(\norm{\widetilde{\bDelta}_{J}}_{F}^{2}\vee\norm{\widetilde{\bDelta}_{J_{s_0}^c}}_{F}^{2}) \leq n^{-1}\norm{\X\widetilde{\bDelta}}_{F}^{2}.
    \end{equation}
    Therefore, together with \eqref{ineq1_proof_para_th_init}, it yields
    \begin{equation*}
      \begin{aligned}
        \norm{\widetilde{\bDelta}_{J}}_{F} \leq 12\rho_{l}^{-1}\sqrt{s_0}\lambda_{0}, \ \
        \norm{\widetilde{\bDelta}_{J_{s_0}^c}}_{F} \leq 12\rho_{l}^{-1}\sqrt{s_0}\lambda_{0}.
      \end{aligned}
    \end{equation*}
    By the same argument as that in the proof of Lemma \ref{se_lemma2}, when $\lambda_{0} = 2\sigma_{\max} \sqrt{2\alpha\log(pq)/n}$, the following results hold with probability at least $1-(pq)^{1-\alpha}$,
    \begin{equation*}
      \norm{\widetilde{\bDelta}}_{F} = O(\sqrt{s_0}\lambda_0), \ \
      n^{-1/2}\norm{\X\widetilde{\bDelta}}_F = O(\sqrt{s_0}\lambda_0).
    \end{equation*}
    It concludes the proof of Lemma \ref{para_th_init}.
  \end{proof}

  \subsection{Lemma \ref{para_init_svd} and its proof}

  \begin{lemma}\label{para_init_svd}
    When $\lambda_{0} = 2\sigma_{\max} \sqrt{2\alpha\log(pq)/n}$ for some constant $\alpha > 1$, Conditions \ref{cond2}--\ref{cond1} hold with the sparsity level $s \geq s_0$ , then there exists some positive constant $M$ such that with probability at least $1 - (pq)^{1-\alpha}$, the following inequality holds uniformly over $1 \leq k \leq r^*$,
    \begin{equation*}
      \norm{\widetilde{\C}_k^0 - \C_k^*}_F \le M \psi_k \sqrt{s_0\log(pq)/n},
    \end{equation*}
    where $\psi_k = d_1^*d_c^*/(d_k^*\min[\delta_{k-1}^*, \delta_{k}^*])$ with $d_c^*$ the largest singular value of $\C^*$.
  \end{lemma}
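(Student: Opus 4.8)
The plan is to transfer the Lasso error bound from Lemma~\ref{para_th_init} onto each individual layer through a spectral perturbation argument carried out on the $\frac{1}{\sqrt n}\X$-transformed scale. I would set $\bB^* = n^{-1/2}\X\C^*$ and $\widetilde{\bB} = n^{-1/2}\X\widetilde{\C}$, so that, by the definition of the $\bP$-orthogonal SVD, the ordinary SVDs of $\bB^*$ and $\widetilde{\bB}$ carry orthonormal right singular vectors $\v_k^*$ and $\widetilde{\v}_k^0$ and singular values $d_k^*$ and $\widetilde{d}_k^0$. The relevant perturbation is $\bH = \widetilde{\bB} - \bB^* = n^{-1/2}\X\widetilde{\bDelta}$. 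On the event $\mathcal{A} = \{n^{-1}\norm{\X\trans\E}_{\max} \le \lambda_0/2\}$, which holds with probability at least $1-(pq)^{1-\alpha}$, Lemma~\ref{para_th_init} gives both $\norm{\widetilde{\bDelta}}_F = O(\sqrt{s_0\log(pq)/n})$ and $\norm{\bH}_F = n^{-1/2}\norm{\X\widetilde{\bDelta}}_F = O(\sqrt{s_0\log(pq)/n})$.

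Next I would use the orthonormality of the right singular vectors to express the layers as right-projections, $\C_k^* = \C^*\v_k^*(\v_k^*)\trans$ and $\widetilde{\C}_k^0 = \widetilde{\C}\,\widetilde{\v}_k^0(\widetilde{\v}_k^0)\trans$. Denoting the rank-one projections $\bQ_k^* = \v_k^*(\v_k^*)\trans$ and $\widetilde{\bQ}_k = \widetilde{\v}_k^0(\widetilde{\v}_k^0)\trans$, this yields
\[
  \widetilde{\C}_k^0 - \C_k^* = \widetilde{\bDelta}\,\widetilde{\bQ}_k + \C^*(\widetilde{\bQ}_k - \bQ_k^*).
\]
Since projections are contractions, the first term is bounded by $\norm{\widetilde{\bDelta}}_F = O(\sqrt{s_0\log(pq)/n})$, which is subsumed by the second-term bound derived below (as $\psi_k$ is bounded away from zero in the low-rank, well-separated regime); the second term is at most $\norm{\C^*}_2\,\norm{\widetilde{\bQ}_k - \bQ_k^*}_F = d_c^*\,\norm{\widetilde{\bQ}_k - \bQ_k^*}_F$. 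The whole problem thus reduces to controlling the perturbation of the $k$th right-singular projection.

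For this I would symmetrize: $\v_k^*$ and $\widetilde{\v}_k^0$ are the $k$th eigenvectors of the Gram matrices $\bG^* = (\bB^*)\trans\bB^* = (\C^*)\trans\bP\,\C^*$ and $\widetilde{\bG} = \widetilde{\bB}\trans\widetilde{\bB}$, with eigenvalues $d_k^{*2}$ and $\widetilde{d}_k^{0\,2}$. A Davis--Kahan $\sin\Theta$ bound for a single eigenvector gives $\norm{\widetilde{\bQ}_k - \bQ_k^*}_F \le C\,\norm{\widetilde{\bG} - \bG^*}_2/g_k$, where $g_k = \min(d_{k-1}^{*2}-d_k^{*2},\,d_k^{*2}-d_{k+1}^{*2})$. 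I would bound the numerator by $\norm{\widetilde{\bG}-\bG^*}_2 \le 2\norm{\bB^*}_2\norm{\bH}_2 + \norm{\bH}_2^2 \le C' d_1^*\norm{\bH}_F$ (using $\norm{\bB^*}_2 = d_1^*$ and $\norm{\bH}_2\to 0$), and lower-bound the eigengap by factoring $d_{k-1}^{*2}-d_k^{*2} = \delta_{k-1}^*(d_{k-1}^*+d_k^*)\ge d_k^*\delta_{k-1}^*$ and likewise $d_k^{*2}-d_{k+1}^{*2}\ge d_k^*\delta_k^*$, so that $g_k \ge d_k^*\min[\delta_{k-1}^*,\delta_k^*]$. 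Combining produces $d_c^*\,\norm{\widetilde{\bQ}_k-\bQ_k^*}_F = O(\psi_k\sqrt{s_0\log(pq)/n})$ with $\psi_k = d_1^*d_c^*/(d_k^*\min[\delta_{k-1}^*,\delta_k^*])$. Because all stochastic bounds come from the single event $\mathcal{A}$ and the perturbation step is deterministic, the estimate holds uniformly over $1\le k\le r^*$ with probability at least $1-(pq)^{1-\alpha}$.

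The main obstacle is the spectral-projection step. One must verify that Davis--Kahan is applicable, i.e., that $\norm{\widetilde{\bG}-\bG^*}_2$ is smaller than a constant fraction of the eigengap $g_k$ so that the $k$th eigenvalue stays isolated under perturbation; this is where Condition~\ref{cond3} (positive gaps) combines with $s_0\log(pq)/n\to 0$. The other delicate point is the crude lower bound $g_k\ge d_k^*\min[\delta_{k-1}^*,\delta_k^*]$, which is precisely what converts the squared-gap of the Gram matrix into the ratio $d_1^*/d_k^*$ appearing in $\psi_k$, and which must be handled uniformly in $k$. The remaining manipulations are routine norm inequalities.
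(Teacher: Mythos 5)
Your proposal is correct and follows essentially the same route as the paper: both transfer the Lasso bound from Lemma \ref{para_th_init} to each layer by decomposing the layer difference into a term controlled by $\norm{\widetilde{\C}-\C^*}_F$ plus $\C^*$ times a right-singular-vector perturbation of $n^{-1/2}\X\widetilde{\C}$ versus $n^{-1/2}\X\C^*$, bound that perturbation by a Davis--Kahan/Wedin-type inequality (the paper cites Theorem 3 of Yu, Wang and Samworth, whose proof is exactly your Gram-matrix symmetrization), and lower-bound the squared-singular-value gap by $d_k^*\min[\delta_{k-1}^*,\delta_k^*]$ in the same way. Your use of rank-one projections $\widetilde{\bQ}_k,\bQ_k^*$ in place of the paper's sign-aligned vectors $\widetilde{\v}_k-\v_k^*$, and your two-term decomposition $\widetilde{\bDelta}\widetilde{\bQ}_k+\C^*(\widetilde{\bQ}_k-\bQ_k^*)$ in place of the paper's three-term one, are only cosmetic differences.
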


  \begin{proof}[Proof of Lemma \ref{para_init_svd}]

    Note that $\widetilde{\v}_k$ and $\v_k^*$ are also the unit length right singular vectors of $\X\widetilde{\C}$ and $\X\C^*$, respectively, $\widetilde{\v}_k^T \v_k^* \geq 0$, and
    we have $\widetilde{d}_k\widetilde{\u}_k = \widetilde{\C}\widetilde{\v}_k$ and $d_k^*\u_k^* = \C^*\v_k^*$. It follows that
    \begin{equation*}
      \begin{aligned}
        \widetilde{\C}_k^0 - \C_k^* & = \widetilde{d}_k\widetilde{\u}_k\widetilde{\v}_k\trans - d_k^*\u_k^*\v_k^*\trans = d_k^*\u_k^*(\widetilde{\v}_k - \v_k^*)\trans + (\widetilde{d}_k\widetilde{\u}_k - d_k^*\u_k^*)\widetilde{\v}_k\trans \\
                                    & = d_k^*\u_k^*(\widetilde{\v}_k - \v_k^*)\trans +
        \left[\C^*(\widetilde{\v}_k - \v_k^*) + (\widetilde{\C}- \C^*)\widetilde{\v}_k\right]\widetilde{\v}_k\trans.
      \end{aligned}
    \end{equation*}
    Thus, we have
    \begin{equation}\label{para_init_svd_ineq1}
      \begin{aligned}
        \norm{\widetilde{\C}_k^0 - \C_k^*}_F & \leq d_k^*\norm{\u_k^*}_2\norm{\widetilde{\v}_k - \v_k^*}_2
        + \norm{\C^*(\widetilde{\v}_k - \v_k^*)}_2 + \norm{(\widetilde{\C}- \C^*)\widetilde{\v}_k}_2                          \\
                                             & \leq d_k^*\norm{\u_k^*}_2\norm{\widetilde{\v}_k - \v_k^*}_2
        + d_c^*\norm{\widetilde{\v}_k - \v_k^*}_2 + \norm{\widetilde{\C}- \C^*}_F                                             \\
                                             & \leq 2d_c^*\norm{\widetilde{\v}_k - \v_k^*}_2 + \norm{\widetilde{\C}- \C^*}_F,
      \end{aligned}
    \end{equation}
    where $d_c^*$ is the largest singular value of $\C^*$ and the last inequality makes use of the fact that $d_k^*\norm{\u_k^*}_2 \leq d_c^*$ for any $k$, $1\leq k \leq r^*$, due to the following inequality
    \begin{equation*}
      d_k^*\norm{\u_k^*}_2 = \frac{\u_k^*\C^*\v_k*}{\norm{\u_k^*}_2} \leq \max_{\norm{\u}_2=\norm{\v}_2=1}\u\trans\C^*\v
      =d_c^*.
    \end{equation*}

    Moreover, applying \citet[Theorem 3]{yu2014useful}, we get
    \begin{equation*}
      \norm{\widetilde{\v}_k - \v_k^*}_2 \leq
      \frac{2^{3/2}(2 d_1^* + n^{-1/2}\norm{\X\widehat{\bDelta}}_{F})n^{-1/2}\norm{\X\widehat{\bDelta}}_{F}}
      {\min(d_{k-1}^{*2} - d_k^{*2}, d_{k}^{*2} - d_{k+1}^{*2})}
    \end{equation*}
    for any $k$, $1 \leq k \leq r^*$ with $d_0^* = +\infty$.
    By Condition \ref{cond3}, we have
    \begin{align*}
      d_{k - 1}^{*2} - d_{k}^{*2} & = (d_{k - 1}^* + d_{k}^*)(d_{k - 1}^* - d_{k}^*) = (d_{k - 1}^* + d_{k}^*)\delta_{k - 1}^* > d_{k}^* \delta_{k - 1}^*, \\
      d_{k}^{*2} - d_{k+1}^{*2}   & = (d_{k}^* + d_{k+1}^*)(d_{k}^* - d_{k+1}^*) = (d_{k}^* + d_{k+1}^*)\delta_{k}^* > d_{k}^* \delta_{k}^*,
    \end{align*}
    which yields
    \begin{equation*}
      \min(d_{k-1}^{*2} - d_k^{*2}, d_{k}^{*2} - d_{k+1}^{*2}) > d_k^*\min[\delta_{k-1}^*, \delta_{k}^*].
    \end{equation*}

    Then together with the results of Lemma \ref{para_th_init}, it follows that
    \begin{equation}\label{para_init_svd_ineq2}
      \begin{aligned}
        \norm{\widetilde{\v}_k - \v_k^*}_2 & = O\Big\{
        \frac{d_1^* n^{-1/2}\norm{\X\widehat{\bDelta}}_{F}}
        {d_k^*\min[\delta_{k-1}^*, \delta_{k}^*]}\Big\} = O(\widetilde{\psi}_{k}\sqrt{s_0\log(pq)/n}),
      \end{aligned}
    \end{equation}
    uniformly over $1 \leq k \leq r^*$, where $\widetilde{\psi}_k = d_{1}^*/(d_k^*\min[\delta_{k-1}^*, \delta_{k}^*])$.
    Combining inequalities \eqref{para_init_svd_ineq1} and \eqref{para_init_svd_ineq2} and Lemma \ref{para_th_init} entails
    \begin{equation*}
      \norm{\widetilde{\C}_k^0 - \C_k^*}_F \leq 2d_c^*\norm{\widetilde{\v}_k - \v_k^*}_2 + \norm{\widetilde{\C} - \C^*}_F
      \leq M \psi_k\sqrt{s_0\log(pq)/n}
    \end{equation*}
    for some positive constant $M$, uniformly over $1 \leq k \leq r^*$, where $\psi_k = d_1^*d_c^*/(d_k^*\min[\delta_{k-1}^*, \delta_{k}^*])$.
    It concludes the proof of Lemma \ref{para_init_svd}.
  \end{proof}

  \subsection{Proof of Theorem \ref{para_th}}

  \begin{proof}
    Since $(\widehat{d}_{k},\widehat{\u}_{k},\widehat{\v}_{k})$ is the optimal solution of \eqref{para_each_layer_estimator}, we have
    \begin{equation}\label{ineq1_proof_para_th}
      (2n)^{-1}\norm{\X\widehat{\bDelta}_{k}}_{F} + \lambda_{k}\norm{\widehat{\C}_{k}}_{1} \leq n^{-1}\inner{\widehat{\bDelta}_{k}, \X\trans (\Y_{k} - \X\C_{k}^{*})} + \lambda_{k}\norm{\C_{k}^{*}}_{1}
    \end{equation}
    with $\Y_{k} = \Y - \X\widetilde{\C} + \X\widetilde{\C}_{k}$. Our proof will be conditioning on the event
    $\mathcal{A}=\{n^{-1}\norm{\X\trans \E}_{\max}\leq \lambda_{0}/2\}$, which holds with probability at least $1-(pq)^{1-\alpha}$ as demonstrated in Lemma \ref{para_th_init}.

    The key part of this proof is to derive the upper bound on $n^{-1}\norm{\X\trans (\Y_{k} - \X\C_{k}^{*})}_{\max}$. On the one hand, by the KKT condition of optimization \eqref{para_init_estimator_lasso}, we have
    \begin{equation}\label{kkt}
      n^{-1}\norm{\X\trans (\Y-\X\widetilde{\C})}_{\max} \leq \lambda_{0} = 2\sigma_{\max} \sqrt{2\alpha\log(pq)/n}.
    \end{equation}
    On the other hand, Lemma \ref{para_init_svd} gives
    \begin{equation*}
      \norm{\widetilde{\C}_{k}^0 - \C_k^*}_F \leq M\psi_k \sqrt{s_0\log(pq)/n}.
    \end{equation*}
    Denote by $S_k$ the index set consisting of the indices of the $s$ largest components of $\widetilde{\C}_{k}^0$ in terms of absolute values. By definition, our initial $k$th layer estimator $\widetilde{\C}_{k}$ takes the components of $\widetilde{\C}_{k}^0$ on $S_k$ while sets the other components to be zero.

    First of all, we claim that
    \begin{equation}\label{contra}
      \norm{\C_{k,J_k \cap S_k^C}^{*}}_F \leq 2M\psi_k\sqrt{s_0\log(pq)/n}.
    \end{equation}
    If this is not true, then $\norm{\C_{k,J_k \cap S_k^C}^{*}}_F > 2M\psi_k\sqrt{s_0\log(pq)/n}$. We can deduce that
    \begin{align*}
      \norm{\widetilde{\C}_{k,J_k \cap S_k^C}^{0}}_F & = \norm{\widetilde{\C}_{k,J_k \cap S_k^C}^{0} - \C_{k,J_k \cap S_k^C}^{*} + \C_{k,J_k \cap S_k^C}^{*}}_F              \\
                                                     & \geq \norm{\C_{k,J_k \cap S_k^C}^{*}}_F - \norm{\widetilde{\C}_{k,J_k \cap S_k^C}^{0} - \C_{k,J_k \cap S_k^C}^{*}}_F  \\
                                                     & \geq \norm{\C_{k,J_k \cap S_k^C}^{*}}_F - \norm{\widetilde{\C}_{k}^{0} - \C_{k}^{*}}_F > M\psi_k\sqrt{s_0\log(pq)/n}.
    \end{align*}
    On the other hand, since $J_k$ is the support of $\C_{k}^{*}$, we have
    \begin{align*}
      \norm{\widetilde{\C}_{k,S_k \cap J_k^C}^{0}}_F & = \norm{\widetilde{\C}_{k,J_k \cap S_k^C}^{0} - \C_{k,S_k \cap J_k^C}^{*} + \C_{k,S_k \cap J_k^C}^{*}}_F                  \\
                                                     & \leq \norm{\C_{k,S_k \cap J_k^C}^{*}}_F + \norm{\widetilde{\C}_{k,S_k \cap J_k^C}^{0} - \C_{k,S_k \cap J_k^C}^{*}}_F      \\
                                                     & \leq \norm{\C_{k,S_k \cap J_k^C}^{*}}_F + \norm{\widetilde{\C}_{k}^{0} - \C_{k}^{*}}_F \leq  M\psi_k\sqrt{s_0\log(pq)/n}. 
    \end{align*}

    In view of these two inequalities, we get
    \begin{align*}
      \norm{\widetilde{\C}_{k,J_k \cap S_k^C}^{0}}_F > \norm{\widetilde{\C}_{k,S_k \cap J_k^C}^{0}}_F.
    \end{align*}
    This is a contraction since by the definition of $S_k$, the set $J_k \cap S_k^C$ contains $s_k^* - t$ elements with $t = |J_k \cap S_k|$, each of which in $\widetilde{\C}_{k}^{0}$ should be no larger than any of the $s - t$ components indexed by $S_k \cap J_k^C$, and $s - t$ is no less than $s_k^* - t$. Therefore, we know that inequality \eqref{contra} should be true.
    It follows that
    \begin{equation}\label{para_th_thresh_ineq1}
      \begin{aligned}
        \norm{\widetilde{\C}_k - \C_k^*}_{F} & \leq \norm{\widetilde{\C}_{k,S_k} - \C_{k,S_k}^*}_{F}
        + \norm{\C_{k,J_k \cap S_k^C}^{*}}_F                                                                                                                           \\
                                             & \leq \norm{\widetilde{\C}_{k}^0 - \C_{k}^*}_{F} + \norm{\C_{k,J_k \cap S_k^C}^{*}}_F \leq 3M\psi_k\sqrt{s_0\log(pq)/n}.
      \end{aligned}
    \end{equation}

    Moreover, since $\|\widetilde{\C}_k\|_0 = s$ and $\|\C_k^*\|_0 = s_k^* \leq s$, we have
    \begin{equation*}
      \norm{\widetilde{\bDelta}_k}_0 = \|\widetilde{\C}_k - \C_k^*\|_0 \leq \|\widetilde{\C}_k\|_0 + \|\C_k^*\|_0 = s + s_k^* \leq 2s.
    \end{equation*}
    Denote by $\widetilde{J}_k$ the index set consisting of the indices of the $s$ largest components of $\widetilde{\bDelta}_k$ in terms of absolute values.
    It is clear that $\norm{\widetilde{\bDelta}_{{\widetilde{J}_k}^c}}_1 \le \norm{\widetilde{\bDelta}_{\widetilde{J}_k}}_1$. Thus, by Condition \ref{cond4}, we get
    \begin{equation}\label{para_th_thresh_ineq2}
      n^{-1}\norm{\X\trans\X\widetilde{\bDelta}_{k}}_{\max} \leq \phi_u\norm{\widetilde{\bDelta}_{\widetilde{J}_k}}_F/\sqrt{s}
      \le \phi_u\norm{\widetilde{\bDelta}_k}_F/\sqrt{s}.
    \end{equation}
    As $s \geq s_0$, combining inequalities \eqref{para_th_thresh_ineq1} and \eqref{para_th_thresh_ineq2} gives
    \begin{equation*}
      n^{-1}\norm{\X\trans\X\widetilde{\bDelta}_{k}}_{\max} \leq 3 M\phi_u\psi_k\sqrt{\log(pq)/n}.
    \end{equation*}
    Together with \eqref{kkt}, we can derive that
    \begin{equation*}
      n^{-1}\norm{\X\trans (\Y_{k} - \X\C_{k}^{*})}_{\max} \leq n^{-1}\norm{\X\trans (\Y-\X\widetilde{\C})}_{\max}
      + n^{-1}\norm{\X\trans\X\widetilde{\bDelta}_{k}}_{\max} \le C\psi_k\sqrt{\log(pq)/n}
    \end{equation*}
    for some positive constant $C$ independent of $k$.

    \smallskip

    Therefore, when $\lambda_{k} = 2C\psi_k\sqrt{\log(pq)/n}$, by the same argument as inequality \eqref{induction:1}, we get
    \begin{equation}\label{ineq2_proof_para_th}
      \norm{\widehat{\bDelta}_{J_{k}^{c}}}_{1} \leq 3\norm{\widehat{\bDelta}_{J_{k}}}_{1}.
    \end{equation}
    Similarly, it follows from inequality \eqref{ineq1_proof_para_th} that
    \begin{equation*}
      n^{-1}\norm{\X\widehat{\bDelta}_{k}}_{F}^{2} \leq 3\lambda_{k}\norm{\widehat{\bDelta}_{k}}_{1}
      \leq 12\lambda_{k}\norm{\widehat{\bDelta}_{J_{k}}}_{1} \leq 12\lambda_{k}\sqrt{s_{k}}\norm{\widehat{\bDelta}_{J_{k}}}_{F}.
    \end{equation*}
    Applying Condition \ref{cond1}, we have
    \begin{equation}\label{ineq3_proof_para_th}
      \rho_{l}(\norm{\widehat{\bDelta}_{J_k}}_{F}^{2}\vee\norm{\widehat{\bDelta}_{J_k^c,s_k}}_{F}^{2}) \leq
      n^{-1}\norm{\X\widehat{\bDelta}_{k}}_{F}^{2} \leq
      12\lambda_{k}\sqrt{s_{k}}\norm{\widehat{\bDelta}_{J_{k}}}_{F}.
    \end{equation}

    Finally, by the same argument as that in the proof of Lemma \ref{se_lemma2}, we can get
    \begin{equation}\label{uni}
      \begin{aligned}
         & \norm{\widehat{\bDelta}_{k}}_{F} \leq \norm{\widehat{\bDelta}_{J_k}}_{F} +
        \norm{\widehat{\bDelta}_{J_k^c,s_{k}}}_{F} + \norm{\widehat{\bDelta}_{\tilde{J}_k^c,s_{k}}}_{F}
        = O(\lambda_{k}\sqrt{s_k}),                                                                                     \\
         & \norm{\widehat{\bDelta}_{k}}_{1} = \norm{\widehat{\bDelta}_{J_k}}_{1} + \norm{\widehat{\bDelta}_{J_k^c}}_{1}
        \leq 4\norm{\widehat{\bDelta}_{J_k}}_{1} \leq 4\sqrt{s_{k}}\norm{\widehat{\bDelta}_{J_k}}_{F} = O(\lambda_{k}s_{k}).
      \end{aligned}
    \end{equation}
    Since the above argument applies to any fixed $k$, $1 \leq k \leq r^*$, and the corresponding constants are independent of $k$, we know that the estimation error bounds in \eqref{uni} hold uniformly over $k$. It concludes the proof of Theorem \ref{para_th}.
  \end{proof}

  \section{Proof of Theorem \ref{partial_vs_global_optimum_th}}

  Note that $\widehat{\C}_{k}$ is the global minimizer of the corresponding CURE problem and $\widehat{\C}_{k}^L$ is a local minimizer that satisfies the conditions in Theorem \ref{partial_vs_global_optimum_th}. Since the following argument applies to any fixed $k$, $1 \leq k \leq r^*$, we omit the subscript $k$ and write them as $\widehat{\C}$ and $\widehat{\C}^L$ for simplicity. We will show that $\norm{\bDelta}_F = O(s_k^{1/2}\lambda_k)$ and $\norm{\bDelta}_1=O(s_k \lambda_k)$ with $\bDelta = \widehat{\C}^L - \widehat{\C}$ such that $\widehat{\C}^L$ enjoys the same asymptotic properties as $\widehat{\C}$ in terms of estimation error bounds.

  Our proof will be conditioning on the event $\mathcal{A}$ (defined in Lemma \ref{se_lemma2} for the sequential pursuit and defined in Lemma \ref{para_th_init} for the parallel pursuit), which holds with probability at least $1 - (pq)^{1-\alpha}$. First of all, by the KKT condition, the global minimizer $\widehat{\C}$ satisfies
  \begin{equation*}
    n^{-1}\norm{\X\trans (\Y_{k}-\X\widehat{\C})}_{\max} \leq \lambda_{k}.
  \end{equation*}
  Together with $n^{-1}\norm{\X\trans (\Y_{k}-\X\widehat{\C}^L)}_{\max} = O(\lambda_{k})$ imposed in Theorem \ref{partial_vs_global_optimum_th}, we get
  \begin{equation*}
    n^{-1}\norm{\X\trans\X\bDelta}_{\max} \leq n^{-1}\norm{\X\trans (\Y_{k}-\X\widehat{\C})}_{\max} + n^{-1}\norm{\X\trans (\Y_{k}-\X\widehat{\C}^L)}_{\max} = O(\lambda_k).
  \end{equation*}
  Denote by $A_j = \text{supp}(\bDelta_j)$ with $\bDelta_j$ being the $j$th column of $\bDelta$. It follows that
  \begin{align*}
    n^{-1}\norm{\X_{A_j}\trans\X_{A_j}\bDelta_{A_j}}_2 & \leq n^{-1} \sqrt{\abs{A_j}}\norm{\X_{A_j}\trans\X_{A_j}\bDelta_{A_j}}_{\max} \leq n^{-1} \sqrt{\abs{A_j}}\norm{\X\trans\X_{A_j}\bDelta_{A_j}}_{\max}     \\
                                                       & = n^{-1} \sqrt{\abs{A_j}}\norm{\X\trans\X \bDelta_{j}}_{\max} \leq n^{-1} \sqrt{\abs{A_j}}\norm{\X\trans\X \bDelta}_{\max} = O(\abs{A_j}^{1/2}\lambda_k),
  \end{align*}
  where $\X_{A_j}$ ($\bDelta_{A_j}$) is the submatrix (subvector) of $\X$ ($\bDelta_{j}$) consisting of columns (components) in $A_j$.

  Moreover, by the assumptions $\norm{\widehat{\C}}_0 = O(s_k)$ and $\norm{\widehat{\C}^L}_0 = O(s_k)$, there exists some positive constant $C$ such that
  \begin{equation*}
    \sum_{j=1}^{q}\abs{A_j} \leq \norm{\widehat{\C}}_0 + \norm{\widehat{\C}^L}_0 \leq C s_k.
  \end{equation*}
  Therefore, it follows from the assumption $\min_{\norm{\boldsymbol{\gamma}}_{2}=1,\norm{\boldsymbol{\gamma}}_{0}\le Cs_k} n^{-1/2}\norm{\X\boldsymbol{\gamma}}_{2} \ge \kappa_0$ in Theorem \ref{partial_vs_global_optimum_th} that the smallest singular value of $n^{-1/2} \X_{A_j}$ is bounded from below by $\kappa_0$. It yields
  \begin{equation*}
    \kappa_0^2\norm{\bDelta_{A_j}}_2 \leq n^{-1}\norm{\X_{A_j}\trans\X_{A_j}\bDelta_{A_j}}_2 = O(\abs{A_j}^{1/2} \lambda_k).
  \end{equation*}
  Since $\norm{\bDelta}_F^2 = \sum_{j=1}^{q}\norm{\bDelta_j}_2^2 = \sum_{j=1}^{q}\norm{\bDelta_{A_j}}_2^2$, we finally get
  \begin{equation*}
    \norm{\bDelta}_F^2 = O(\lambda_k^2\sum_{j=1}^{q}\abs{A_j}) = O(s_k\lambda_k^2).
  \end{equation*}
  It follows immediately that $\norm{\bDelta}_F = O(s_k^{1/2}\lambda_k)$ and $\norm{\bDelta}_1=O(s_k \lambda_k)$, which concludes the proof.

  \section{Proof of Stagewise CURE}\label{sec:app:cure}

  \subsection{Derivations of contended stagewise learning}\label{sec:derivation-algorithm}
 
Before presenting the derivations, we first recall some notations for the sake of clarity. Denote 
$\X=\left[\widetilde{\x}_{1},\dots,\widetilde{\x}_p\right]=\left[\x_1,\dots,\x_{n}\right]\trans\in\mathbb{R}^{n\times p}$ and 
$\Y=\left[\widetilde{\y}_{1},\dots,\widetilde{\y}_{q}\right]=\left[\y_1,\dots,\y_n\right]\trans\in\mathbb{R}^{n\times q}$ then let 
$\widetilde{\x}_{j}$ and $\widetilde{\y}_k$ be the $j$th and $k$th columns of $\X$ and $\Y$, respectively. In addition, let 
$\E^t$ be equal to $\Y - d^t\X\u^t\v^t\trans$ and $\widetilde{\e}_k$ denotes the $k$th column of $\E^t$. 

\noindent \textbf{(I) Initialization}. Recall that the optimization in the initialization is as follow
\begin{equation*}
    (\widehat{j},\widehat{k},\widehat{s})
    = \underset{(j,k);s=\pm\epsilon}{\arg\min}~ L(s\1_j\1_k\trans).
\end{equation*}
By the expansion of the loss function, we have
\begin{equation*}
    \begin{aligned}
        L(s\1_j\1_k\trans) & = (2n)^{-1}\norm{\Y - s\X\1_{j}\1_{k}\trans}_F^2 + \frac{\mu}{2}\norm{s\1_{j}\1_{k}\trans}_F^2                                              \\
                           & = (2n)^{-1}\norm{\Y}_F^2 + (2n)^{-1}\epsilon^2\1_{j}\trans\X\trans\X\1_{j} - n^{-1}s\1_{j}\trans\X\trans\Y\1_{k} + \frac{\mu}{2}\epsilon^2.
    \end{aligned}
\end{equation*}
Thus the original problem is equivalent to
\begin{equation*}
    \begin{aligned}
        (\what{j},\what{k})
        = \underset{j,k}{\arg\min}~(2n)^{-1}\epsilon\norm{\widetilde{\x}_j}_2^2 - n^{-1}\abs{\widetilde{\x}_{j}\trans\widetilde{\y}_{k}},~
        \widehat{s}
        = \mbox{sgn}(\widetilde{\x}_{j}\trans\widetilde{\y}_{k})\epsilon. 
    \end{aligned}
\end{equation*}

\noindent \textbf{(II) Backward update}. At the $(t+1)$th step, the two updating options are as follows,
\begin{equation*}
    \begin{aligned}
        \what{j} & = \arg\min_{j\in\mathcal{A}^t} L(d\u\v\trans) \mbox{ s.t. } (d\u) = (d\u)^t-\mbox{sgn}(u_j^t)\epsilon\1_j, \v = \v^{t}, \\
        \what{k} & = \arg\min_{k\in\mathcal{B}^t} L(d\u\v\trans) \mbox{ s.t. } (d\v) = (d\v)^t-\mbox{sgn}(v_k^t)\epsilon\1_k, \u = \u^{t}.
    \end{aligned}
\end{equation*}
Note that, either updating $d\u$ or $d\v$ will decrease the penalty term by a fixed amount $\lambda^t\epsilon$. For example, assuming that we update $d\u$, then 
\begin{equation*}
    \begin{aligned}
        \rho(d^{t+1}\u^{t+1}\v^{t+1}\trans) - \rho(d^t\u^t\v^t\trans)
         & = \norm{(d\u)^t - \mbox{sgn}(u_{\what{j}}^t)\epsilon\1_{\what{j}}}_1\norm{\v^t}_1 - d^t\norm{\u^t}_1\norm{\v^t}_1 \\
         & = \norm{(d\u)^t - \mbox{sgn}(u_{\what{j}}^t)\epsilon\1_{\what{j}}}_1 - \norm{(d\u)^t}_1 = -\epsilon.
    \end{aligned}
\end{equation*}
It follows that we only need to compare the value of the loss function to
decide whether to update $d\u$ or $d\v$. 

Focusing on $d\u$, the loss function can be written as
\begin{equation}
    \begin{aligned}
        L\left(\left[(d\u)^t-\mbox{sgn}(u_{j}^t)\epsilon\1_{j}\right]\v^t\trans\right)
         & = (2n)^{-1}\norm{\Y - \X\left[(d\u)^t-\mbox{sgn}(u_{j}^t)\epsilon\1_{j}\right]\v^t\trans}_{F}^2 + \\
         & \quad \frac{\mu}{2}\norm{\v^t}_2^2\norm{(d\u)^t-\mbox{sgn}(u_{j}^t)\epsilon\1_{j}}_2^2\\  
         & = (2n)^{-1}\epsilon^{2}\norm{\widetilde{\x}_j}_2^2\norm{\v^{t}}_{2}^{2}
        + n^{-1}\mbox{sgn}(u_{j}^t)\epsilon\widetilde{\x}_{j}\trans\E^t\v^t                                                   \\
         & \quad - \mu\epsilon d^t\abs{u_j^t}\norm{\v^t}_2^2 + \frac{\mu}{2}\epsilon^2\norm{\v^t}_2^2 + L(d^t\u^t\v^t\trans),
    \end{aligned}
\end{equation}
where $\E^t=\Y - d^t\X\u^t\v^t\trans$, and $\widetilde{\x}_j$ is the $j$th column of $\X$. For $d\v$, with similar argument, we have 
\begin{equation}\label{proof:loss:v}
    \begin{aligned}
        L\left(\u^t\left[(d\v)^t - \mbox{sgn}(v_{k}^t)\epsilon\1_{k}\right]\trans\right)
         & = n^{-1}\mbox{sgn}(v_k^t)\epsilon\u^t\trans\X\trans\widetilde{\e}_k^t - \mu\epsilon d^t\abs{v_k^t}\norm{\u^t}_2^2 + \frac{\mu}{2}\epsilon^2\norm{\u^t}_2^2 \\
         & \quad + (2n)^{-1}\epsilon^2\norm{\X\u^t}_2^2 + L(d^t\u^t\v^t\trans),
    \end{aligned}
\end{equation}
where $\widetilde{\e}_k^t$ is the $k$th column of $\E^t$. Therefore, 
the two proposals in the backward step are equivalent to
\begin{equation}\label{proof:backward:opt}
    \begin{aligned}
        \what{j} & = \underset{j\in\mathcal{A}^t}{\arg\min}~ (2n)^{-1}\epsilon\norm{\widetilde{\x}_j}_2^2\norm{\v^{t}}_{2}^{2} +
        n^{-1}\mbox{sgn}(u_{j}^t)\widetilde{\x}_{j}\trans\E^t\v^t - \mu d^t\abs{u_j^t}\norm{\v^t}_2^2,                           \\
        \what{k} & = \underset{k\in\mathcal{B}^t}{\arg\min}~ n^{-1}\mbox{sgn}(v_k^t)\u^t\trans\X\trans\widetilde{\e}_k^t -
        \mu d^t\abs{v_k^t}\norm{\u^t}_2^2.
    \end{aligned}
\end{equation}

\noindent \textbf{(III) Forward update}. When the backward update can no longer proceed, a forward
update is carried out. Let's focus on $d\u$; the problem is
\begin{equation*}
    \underset{j,s=\pm\epsilon}{\arg\min}~L\left(\left[(d\u)^t+s\1_j\right]\v^t\trans\right) = (2n)^{-1}\norm{\Y - \X\left[(d\u)^t+s\1_j\right]\v^t\trans}_F^2 +
    \frac{\mu}{2}\norm{(d\u)^t+s\1_j}_2^2\norm{\v^t}_2^2.
\end{equation*}
We have that
\begin{equation*}
    \begin{aligned}
        L\left(\left[(d\u)^t+s\1_j\right]\v^t\trans\right)
         & = (2n)^{-1}\epsilon^2\norm{\widetilde{\x}_j}_2^2\norm{\v^t}_2^2 - n^{-1}s\widetilde{\x}_j\trans\E^t\v^t\\
         & \quad + \frac{\mu}{2}\epsilon^2\norm{\v^t}_2^2 + \mu sd^tu_j^t\norm{\v^t}_2^2 + L(d^t\u^t\v^t\trans).
    \end{aligned}
\end{equation*}
Similarly, for $d\v$, we have 
\begin{equation*}
    \begin{aligned}
        L\left(\u^t\left[(d\v)^t+h\1_k\right]\trans\right) = \frac{\epsilon^2}{2n}\norm{\X\u^t}_2^2 - \frac{h}{n}\u^t\trans\X\trans\widetilde{\e}_k^t
        + \frac{\mu\epsilon^2}{2}\norm{\u^t}_2^2 + \mu hd^t v_k^t\norm{\u^t}_2^2 + L(d^t\u^t\v^t\trans).
    \end{aligned}
\end{equation*}
Therefore, minimizing the loss function is equivalent to
\begin{equation}\label{proof:forward:opt}
    \begin{aligned}
         & \underset{j,s=\pm\epsilon}{\arg\min}~ (2n)^{-1}\epsilon^2\norm{\widetilde{\x}_j}_2^2\norm{\v^t}_2^2
        - s(n^{-1}\widetilde{\x}_j\trans\E^t\v^t - \mu d^tu_j^t\norm{\v^t}_2^2),                                               \\
         & \underset{k,h=\pm\epsilon}{\arg\min}~ h(\mu d^t v_k^t\norm{\u^t}_2^2 - n^{-1}\u^t\trans\X\trans\widetilde{\e}_k^t).
    \end{aligned}
\end{equation}
Then it follows  that
\begin{equation*}
    \begin{aligned}
        \what{j} & = \underset{j}{\arg\min}~ (2n)^{-1}\epsilon\norm{\widetilde{\x}_j}_2^2\norm{\v^t}_2^2
        - \abs{n^{-1}\widetilde{\x}_j\trans\E^t\v^t - \mu d^tu_j^t\norm{\v^t}_2^2},                                            \\
        \what{k} & = \underset{k}{\arg\min}~ -\abs{n^{-1}\u^t\trans\X\trans\widetilde{\e}_k^t - \mu d^t v_k^t\norm{\u^t}_2^2},
    \end{aligned}
\end{equation*}
and
\begin{equation*}
    \begin{aligned}
        \widehat{s} & = \mbox{sgn}(n^{-1}\widetilde{\x}_j\trans\E^t\v^t - \mu d^tu_j^t\norm{\v^t}_2^2)\epsilon ~\text{for updating $d\u$};        \\
        \widehat{h} & = \mbox{sgn}(n^{-1}\u^t\trans\X\trans\widetilde{\e}_k^t - \mu d^t v_k^t\norm{\u^t}_2^2)\epsilon ~\text{for updating $d\v$}.
    \end{aligned}
\end{equation*}

\noindent \textbf{(IV) Computational complexity}. 
Assume that the $t$th step is completed, and the cardinalities of active sets $\mA^t$ and $\mB^t$ are $a^t$ and $b^t$, respectively.
In the $(t+1)$th step, we need to update the current residual matrix $\E^t = \Y - d^t\X\u^t\v^t\trans$.
The complexity for updating $\E^t$ is $O(a^t n + b^t n)$ because we only need to calculate the coordinates in the active sets.

With the given $\E^t$, updating $\widetilde{\x}_j\trans\E^t\v^t$ requires $O(b^tn)$ by only calculating the coordinates corresponding to the active set.
Similarly, the complexity of updating $\u^t\trans\X\trans\widetilde{\e}_k^t$ is $O(a^tn)$. The complexity of computing $\norm{\u^t}_2^2$ or
$\norm{\v^t}_2^2$ is $O(a^t)$ or $O(b^t)$, respectively. Comparing \eqref{proof:backward:opt} with \eqref{proof:forward:opt}, the complexity of \eqref{proof:backward:opt} is negligible because it only searches
$j$ and $k$ in the active sets. By the expansion of loss function, the complexity for comparing the value of loss function
is also negligible in terms of \eqref{proof:forward:opt}. Therefore, with $\E^t$, the complexity of backward and forward step in the $(t+1)$th step is $O(a^tnq + b^tnp)$. Combing the complexity for updating $\E^t$, the complexity in the $(t+1)$th step is still of the order $O(a^tnq + b^tnp)$.

  \subsection{Proof of Lemma \ref{lemma:algorithm:converge}}


\begin{lemma}\label{stagewise_initial_lemma}
    Let $Q(\cdot)$ be the objective function  defined in \eqref{elas_cure} of the main paper.\\
    \noindent 1. For any $\lambda$, if there exist $j$ and $k$ such that $Q(h\1_{j}\1_{k}\trans;\lambda) \leq Q(0;\lambda)$ where $\abs{h}=\epsilon$, it must be true that $\lambda \leq \lambda^{0}$.

    \noindent 2. For any $t$, we have
            $Q(d^{t+1}\u^{t+1}\v^{t+1}\trans;\lambda^{t+1}) \leq
            Q(d^{t}\u^{t}\v^{t}\trans;\lambda^{t+1}) - \xi$.

    \noindent 3. For $\xi \geq 0$ and any $t$ such that $\lambda^{t+1} < \lambda^{t}$, we have
    \begin{equation*}
        \begin{aligned}
             & Q(d^{t}\u^{t}\v^{t}\trans;\lambda^{t}) - \xi
            \leq Q\left((d^{t}\u^{t}\pm \epsilon\1_{j})\v^{t}\trans;\lambda^{t}\right), \\
             & Q(d^{t}\u^{t}\v^{t}\trans;\lambda^{t}) - \xi \leq
            Q\left(\u^{t}(d^{t}\v^{t}\pm \epsilon\1_{k})\trans;\lambda^{t}\right),
        \end{aligned}
    \end{equation*}
    where $1\leq j \leq p$ and $1\leq k \leq q$.
\end{lemma}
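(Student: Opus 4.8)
All three statements reduce to careful bookkeeping of the penalized objective $Q(d\u\v\trans;\lambda)=L(d\u\v\trans)+\lambda\,\rho(d\u\v\trans)$ with $\rho(d\u\v\trans)=d\|\u\|_1\|\v\|_1$, combined with the execution and stalling rules of Algorithm \ref{alg:cure}. Throughout I abbreviate $L^t=L(d^t\u^t\v^t\trans)$, $\rho^t=\rho(d^t\u^t\v^t\trans)$ and $Q^t=Q(d^t\u^t\v^t\trans;\lambda^t)=L^t+\lambda^t\rho^t$ (and analogously at index $t+1$), and I use the normalization $\|\u^t\|_1=\|\v^t\|_1=1$ coming from the rescaling in \eqref{eq:back:du}--\eqref{eq:forw:dv}, so that each single-coordinate $\pm\epsilon$ update changes $\rho$ by exactly $\pm\epsilon$. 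The plan is to dispatch the three parts in order, each by translating an $L$-based algorithmic rule into a $Q$-based inequality.

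Part 1 is immediate. A rank-one atom $h\1_j\1_k\trans$ with $\abs{h}=\epsilon$ has penalty exactly $\epsilon$, so the hypothesis $Q(h\1_j\1_k\trans;\lambda)\le Q(\0;\lambda)$ reads $L(h\1_j\1_k\trans)+\lambda\epsilon\le L(\0)$, i.e.\ $\lambda\epsilon\le L(\0)-L(h\1_j\1_k\trans)$. Since the initialization \eqref{eq:ini} selects the atom of this form minimizing the loss, $L(h\1_j\1_k\trans)\ge L(\widehat{d}^0\widehat{\u}^0\widehat{\v}\ztrans)$, whence $\lambda\epsilon\le L(\0)-L(\widehat{d}^0\widehat{\u}^0\widehat{\v}\ztrans)=\epsilon\lambda^0$, giving $\lambda\le\lambda^0$.

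For Part 2 I would split on the type of the $(t+1)$th step, noting that both sides of the claimed inequality are evaluated at $\lambda^{t+1}$. In a backward step $\lambda^{t+1}=\lambda^t$ and $\rho^{t+1}-\rho^t=-\epsilon$, so the objective gap equals $(L^{t+1}-L^t)-\lambda^t\epsilon$; the execution rule $L^{t+1}-L^t<\lambda^t\epsilon-\xi$ makes this $<-\xi$. In a forward step $\rho^{t+1}-\rho^t=+\epsilon$, so the gap equals $(L^{t+1}-L^t)+\lambda^{t+1}\epsilon$, and the definition $\lambda^{t+1}=\min(\lambda^t,\{L^t-L^{t+1}-\xi\}/\epsilon)\le\{L^t-L^{t+1}-\xi\}/\epsilon$ rearranges to exactly $(L^{t+1}-L^t)+\lambda^{t+1}\epsilon\le-\xi$. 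In both cases $Q(d^{t+1}\u^{t+1}\v^{t+1}\trans;\lambda^{t+1})\le Q(d^t\u^t\v^t\trans;\lambda^{t+1})-\xi$.

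Part 3 is the crux and the main obstacle, since it must certify near-optimality of the current iterate against every blockwise $\pm\epsilon$ move, drawing on different information according to the sign of the induced penalty change. The event $\lambda^{t+1}<\lambda^t$ signals that a forward step ran while the backward search had stalled, which supplies two facts: (i) by stalling of \eqref{eq:back:du}--\eqref{eq:back:dv}, every move that lowers the $\ell_1$-norm raises the loss by at least $\lambda^t\epsilon-\xi$; and (ii) by optimality of the forward search \eqref{eq:forw:du}--\eqref{eq:forw:dv}, the executed loss $L^{t+1}$ is a lower bound for the loss of every single-coordinate $\pm\epsilon$ move in either block, while $\lambda^{t+1}<\lambda^t$ forces $\{L^t-L^{t+1}-\xi\}/\epsilon<\lambda^t$, i.e.\ $L^t-L^{t+1}<\lambda^t\epsilon+\xi$. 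Writing $L_{\rm new}$ for the loss at a moved point, for a move lowering the norm I would invoke (i): its objective at $\lambda^t$ equals $L_{\rm new}+\lambda^t(\rho^t-\epsilon)\ge(L^t+\lambda^t\epsilon-\xi)+\lambda^t\rho^t-\lambda^t\epsilon=Q^t-\xi$. For a move raising the norm I would invoke (ii): its objective equals $L_{\rm new}+\lambda^t(\rho^t+\epsilon)\ge L^{t+1}+\lambda^t\rho^t+\lambda^t\epsilon>(L^t-\lambda^t\epsilon-\xi)+\lambda^t\rho^t+\lambda^t\epsilon=Q^t-\xi$. Finally, for any index the two signs $\pm\epsilon\1_j$ (resp.\ $\pm\epsilon\1_k$) always split into these two categories — one lowering and one raising the norm when the coordinate is active, both raising it when inactive — so both displayed inequalities of Part 3 hold for all $1\le j\le p$ and $1\le k\le q$. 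The delicate step is precisely this matching of loss-level thresholds to penalized-objective bounds with the correct $\pm\epsilon$ penalty accounting; once that is set up the inequalities are routine.
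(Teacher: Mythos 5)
Your proposal is correct. Parts 1 and 2 coincide with the paper's own argument: Part 1 is verbatim the same computation, and for Part 2 the paper handles the backward case directly and the forward case by contradiction with the update formula $\lambda^{t+1}=\min(\lambda^t,\{L^t-L^{t+1}-\xi\}/\epsilon)$, which is exactly the rearrangement you perform directly (your unified treatment also covers the forward case with $\lambda^{t+1}<\lambda^t$ in one stroke, which the paper leaves to the identity it derives later).

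Part 3, however, follows a genuinely different route, and arguably a tighter one. The paper first establishes the exact identity $Q(d^{t+1}\u^{t+1}\v^{t+1}\trans;\lambda^{t+1})=Q(d^{t}\u^{t}\v^{t}\trans;\lambda^{t+1})-\xi$, then shifts both sides from $\lambda^{t+1}$ to $\lambda^{t}$ using the algebraic inequality $\lambda^{t+1}\rho(d^{t+1}\u^{t+1}\v^{t+1}\trans)+(\lambda^{t}-\lambda^{t+1})\rho(d^{t}\u^{t}\v^{t}\trans)\le\lambda^{t}\rho(d^{t+1}\u^{t+1}\v^{t+1}\trans)$, and finally concludes by asserting that the executed forward update attains the minimum of $Q(\cdot;\lambda^{t})$ over all blockwise $\pm\epsilon$ moves. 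That last equality is justified only for moves that increase the penalty, for which minimizing $L$ is equivalent to minimizing $Q$ at fixed $\lambda^{t}$; for a move that decreases the penalty (moving an active coordinate toward zero), forward optimality of $L^{t+1}$ alone does not give it, and the paper's proof never invokes the backward rule to cover such moves. Your move-by-move verification fills precisely this hole: penalty-decreasing moves are controlled by the backward stalling condition (your fact (i)), and penalty-increasing moves by forward optimality of $L^{t+1}$ together with $L^{t}-L^{t+1}<\lambda^{t}\epsilon+\xi$ extracted from $\lambda^{t+1}<\lambda^{t}$ (your fact (ii)). So your argument is more elementary, actually uses the hypothesis that the premise makes available (a forward step runs only after the backward search stalls), and is more complete than the paper's at its one delicate point. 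The only caveat, shared with and inherited from the paper, is the bookkeeping convention that every single-coordinate $\pm\epsilon$ move changes $\rho$ by exactly $\pm\epsilon$; this silently excludes active coordinates of magnitude below $\epsilon$, but since the paper assumes the same, it is not a gap relative to the paper's proof.
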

The first statement in Lemma \ref{stagewise_initial_lemma} ensures the validity of the initialization step. The second statement in Lemma \ref{stagewise_initial_lemma} ensures that, for each $\lambda^{t}$, the algorithm performs coordinate descent whenever $\lambda^{t}=\lambda^{t+1}$. The third statement implies that $\lambda^t$ gets reduced only when the penalized loss at $\lambda^t$ can not be further reduced even by searching over all possible coordinate descent directions. 


\begin{proof}[Proof of Lemma \ref{stagewise_initial_lemma}]
    \noindent 1. By the assumption, we have $L(0) - L(h\1_{j}\1_{k}\trans ) \geq \lambda \rho(h\1_{j}\1_{k}\trans ) = \lambda \epsilon$, which yields
    \begin{equation*}
        \lambda \leq \frac{1}{\epsilon}[L(0) - L(h\1_{j}\1_{k}\trans )]
        \leq \frac{1}{\epsilon}[L(0) - \underset{j,k,s=\pm\epsilon}{\min}~L(h\1_{j}\1_{k}\trans )] = \lambda^{0}.
    \end{equation*}

    \noindent 2.
    In the backward step, we have that $\lambda^{t+1} = \lambda^{t}$,  
    $L(d^{t+1}\u^{t+1}\v^{t+1}\trans) \leq L(d^{t}\u^{t}\v^{t}\trans) + \lambda^{t}\epsilon - \xi$, and the penalty term is always decreased by a fixed amount $\lambda^t\epsilon$. So $Q(d^{t+1}\u^{t+1}\v^{t+1}\trans;\lambda^{t+1}) \leq Q(d^{t}\u^{t}\v^{t}\trans;\lambda^{t+1})- \xi$ holds.
    
    It remains to consider the forward step when $\lambda^{t+1} = \lambda^{t}$. If $Q(d^{t+1}\u^{t+1}\v^{t+1}\trans;\lambda^{t+1}) > Q(d^{t}\u^{t}\v^{t}\trans;\lambda^{t+1})- \xi$, we have
    \begin{equation*}
        \begin{aligned}
            L(d^{t}\u^{t}\v^{t}\trans) - L(d^{t+1}\u^{t+1}\v^{t+1}\trans) - \xi
             & < \lambda^{t+1}[\rho(d^{t+1}\u^{t+1}\v^{t+1}) - \rho(d^{t}\u^{t}\v^{t}\trans)] = \lambda^{t+1}\epsilon,
        \end{aligned}
    \end{equation*}
    which yields that
    \begin{equation*}
        \lambda^{t+1} > \frac{L(d^{t}\u^{t}\v^{t}\trans) - L(d^{t+1}\u^{t+1}\v^{t+1}\trans) - \xi}{\epsilon}.
    \end{equation*}
    This contradicts with the formula for computing $\lambda^{t+1}$.
  
    \noindent 3.
    The $\lambda^{t}$ can only be reduced in the forward step. So when $\lambda^{t+1}<\lambda^{t}$, we have
    \begin{equation*}
        \lambda^{t+1} = \frac{L(d^{t}\u^{t}\v^{t}\trans) - L(d^{t+1}\u^{t+1}\v^{t+1}\trans)-\xi}{\epsilon},
    \end{equation*}
and  $\epsilon=\rho(d^{t+1}\u^{t+1}\v^{t+1}\trans) - \rho(d^{t}\u^{t}\v^{t}\trans)$. It follows that 
    \begin{equation*}
        \begin{aligned}
             \lambda^{t+1}[\rho(d^{t+1}\u^{t+1}\v^{t+1}\trans) - \rho(d^{t}\u^{t}\v^{t}\trans)]
              = L(d^{t}\u^{t}\v^{t}\trans) - L(d^{t+1}\u^{t+1}\v^{t+1}\trans) - \xi.
        \end{aligned}
    \end{equation*}
    That is, $Q(d^{t+1}\u^{t+1}\v^{t+1}\trans;\lambda^{t+1}) = Q(d^{t}\u^{t}\v^{t}\trans;\lambda^{t+1}) - \xi$.

    We then have that 
    \begin{equation}\label{lemma_stage_3_eq1}
        \begin{aligned}
            Q(d^{t+1}\u^{t+1}\v^{t+1}\trans;\lambda^{t+1}) +
            (\lambda^{t} - \lambda^{t+1})\rho(d^{t}\u^{t}\v^{t}\trans) =
            Q(d^{t}\u^{t}\v^{t}\trans;\lambda^{t}) - \xi,
        \end{aligned}
    \end{equation}
   and 
    \begin{equation}\label{lemma_stage_3_ineq1}
        \begin{aligned}
             & \quad~ \lambda^{t+1}\rho(d^{t+1}\u^{t+1}\v^{t+1}\trans) + (\lambda^{t} - \lambda^{t+1})\rho(d^{t}\u^{t}\v^{t}\trans)               \\
             & = \lambda^{t+1}\rho(d^{t+1}\u^{t+1}\v^{t+1}\trans) + (\lambda^{t} - \lambda^{t+1})[\rho(d^{t+1}\u^{t+1}\v^{t+1}\trans) - \epsilon] \\
             & = \lambda^{t}\rho(d^{t+1}\u^{t+1}\v^{t+1}\trans) + \epsilon(\lambda^{t+1}-\lambda^{t}) \leq
            \lambda^{t}\rho(d^{t+1}\u^{t+1}\v^{t+1}\trans).
        \end{aligned}
    \end{equation}
    Combining \eqref{lemma_stage_3_eq1} and \eqref{lemma_stage_3_ineq1}, we have
    \begin{equation*}
        \begin{aligned}
            Q(d^{t}\u^{t}\v^{t}\trans;\lambda^{t}) - \xi
             & \leq Q(d^{t+1}\u^{t+1}\v^{t+1}\trans;\lambda^{t+1}) + (\lambda^{t} - \lambda^{t+1})\rho(d^{t+1}\u^{t+1}\v^{t+1}\trans) \\
             & = Q(d^{t+1}\u^{t+1}\v^{t+1}\trans;\lambda^{t})\\
             & = \min\{Q((d^{t}\u^{t}\pm \epsilon\1_j )\v^{t}\trans;\lambda^{t}),Q((\u^{t}(d^{t}\v^{t}\pm \epsilon\1_k )\trans;\lambda^{t})\}.
        \end{aligned}
      \end{equation*}
This completes the proof.
  \end{proof}

  Now we prove Lemma \ref{lemma:algorithm:converge}. Recall that for $\lambda^t$, the corresponding objective function is
  \begin{equation*}
    Q(d\u\v\trans) = L(\d\u\v\trans) + \lambda^td\norm{\u}_1\norm{\v}_1,
  \end{equation*}
  where $L(d\u\v\trans) = (2n)^{-1}\norm{\Y-\d\X\u\v\trans}_F^2+\mu\norm{d\u\v\trans}_F^2/2$. In the $(t+1)$th step, the objective function is strongly convex when either $\u^t$ or $\v^t$ is fixed. Thus, with any given $\X$ and $\Y$, there exists a constant $M$ such that  
  \begin{equation*}
    \begin{aligned}
      \mu\norm{\v^t}_2^2\I \preceq \nabla_{d\u}^2L \preceq M\I ,~
      \mu\norm{\u^t}_2^2\I \preceq \nabla_{d\v}^2L \preceq M\I,
    \end{aligned}
  \end{equation*}
where $\I$ is the identity matrix, and $\nabla_{d\u}^2L$ and $\nabla_{d\v}^2L$ are the second derivatives of the loss function with respect to $d\u$ and $d\v$, respectively. Define
  \begin{equation*}
    \begin{aligned}
      (d\u)^{t*} & = \underset{d\u}{\arg\min}~Q(d\u\v^t\trans)~\text{with fixed $\v^t$};   \\
      (d\v)^{t*} & = \underset{d\v}{\arg\min}~Q(\u^t(d\v)\trans)~\text{with fixed $\u^t$}.
    \end{aligned}
  \end{equation*}

We now derive the upper bound of $\norm{d^t\u^t - (d\u)^{t*}}_2$. (With similar argument, we can get the upper bound of
  $\norm{d^t\v^t-(d\v)^{t*}}_2$.) With fixed $\v^t$, Taylor expansion of the objective function gives
  \begin{equation}\label{proof:th_converge:eq1}
    \begin{aligned}
      Q\left((d\u)^{t*}\right) & = Q\left((d\u)^t\right) + [\nabla_{d\u}L\left((d\u)^t\right) + \lambda^t\bdelta]\trans\left((d\u)^{t*} - (d\u)^t\right) \\
                               & \quad + \frac{1}{2}\left((d\u)^{t*} - (d\u)^t\right)\trans\nabla_{d\u}^2L\left((d\u)^{t*} - (d\u)^t\right),
    \end{aligned}
  \end{equation}
where $\bdelta$ is a $p$-dimensional vector and we use $\norm{\v^t}=1$. The $j$th entry of $\bdelta$, $\delta_j$, is subject to $\abs{\delta_j}\leq 1$ and
  $\delta_j=\mbox{sgn}(u_j)$ if $u_j\neq 0$. Here for convenience, we have rewritten $d^t\u^t$ as $(d\u)^t$ and omitted $\v^t$ in the expressions. With the boundedness of the Hessian matrix and
  the fact that $Q\left((d\u)^{t*}\right)\leq Q\left((d\u)^t\right)$, it follows that
  \begin{equation*}
    \begin{aligned}
      \frac{\mu\norm{\v^t}_2^2}{2}\norm{(d\u)^{t*} - (d\u)^t}_2^2
       & \leq -[\nabla_{d\u}L\left((d\u)^t\right) + \lambda^t\bdelta]\trans\left((d\u)^{t*} - (d\u)^t\right) \\
       & \leq \norm{\nabla_{d\u}L\left((d\u)^t\right) + \lambda^t\bdelta}_2\norm{(d\u)^{t*} - (d\u)^t}_2.
    \end{aligned}
  \end{equation*}
Therefore,
  \begin{equation*}
    \norm{(d\u)^{t*} - (d\u)^t}_2 \leq \frac{2}{\mu\norm{\v^t}_2^2}\norm{\nabla_{d\u}L\left((d\u)^t\right) + \lambda^t\bdelta}_2.
  \end{equation*}

It remains to derive the upper bound of $\norm{\nabla_{d\u}L\left((d\u)^t\right) + \lambda^t\bdelta}_2$. To do this, let's examine the upper bound of each entry of
  $\nabla_{d\u}L\left((d\u)^t\right)+\lambda^t\bdelta$. By the third statement in Lemma \ref{stagewise_initial_lemma}, for $u_j^t\neq 0$, we have
  \begin{equation}\label{proof:th_converge:ineq1}
    L\left((d\u)^t\pm\mbox{sgn}(u_j^t)\epsilon\1_j\right) \pm\lambda^t\epsilon \geq L\left((d\u)^t\right) - \xi,
  \end{equation}
  when $\lambda^{t+1} < \lambda^{t}$. On the other hand, with Taylor expansion, we have
  \begin{equation}\label{proof:th_converge:ineq2}
    \begin{aligned}
      L\left((d\u)^t\pm\mbox{sgn}(u_j^t)\epsilon\1_j\right)
       & = L\left((d\u)^t\right) \pm\mbox{sgn}(u_j^t)\epsilon\nabla_{d\u}L\left((d\u)^t\right)\trans\1_j \\
       & \quad + \frac{\epsilon^2}{2}\1_{j}\trans\nabla_{d\u}^{2}L\1_j.
    \end{aligned}
  \end{equation}
  Combing \eqref{proof:th_converge:ineq1} and \eqref{proof:th_converge:ineq2} yields
  \begin{equation*}
    \mp\left[\mbox{sgn}(u_j^t)\epsilon\nabla_{d\u}L\left((d\u)^t\right)\trans\1_j + \lambda^t\epsilon\right]
    \leq \frac{\epsilon^2}{2}\1_{j}\trans\nabla_{d\u}^{2}L\1_j + \xi
    \leq \frac{M\epsilon^2}{2} + \xi,
  \end{equation*}
  where the last inequality is due to the boundedness of the Hessian matrix. We have that 
  \begin{equation}\label{proof:th_converge:ineq4}
    \begin{aligned}
      \abs{\nabla_{d\u}L\left((d\u)^t\right)\trans\1_j + \mbox{sgn}(u_j^t)\lambda^t}
       & = \abs{\mbox{sgn}(u_j^t)\nabla_{d\u}L\left((d\u)^t\right)\trans\1_j + \lambda^t} \\
       & \leq \frac{M\epsilon}{2} + \frac{\xi}{\epsilon}~~\text{for $u_j^t\neq 0$}.
    \end{aligned}
  \end{equation}
When $u_j^t = 0$, by the third statement of Lemma \ref{stagewise_initial_lemma}, we have that 
  \begin{equation}\label{proof:th_converge:ineq3}
    L\left((d\u)^t\pm\mbox{sgn}(u_j^t)\epsilon\1_j\right) + \lambda^t\epsilon \geq L\left((d\u)^t\right) - \xi.
  \end{equation}
  Similarly, combining \eqref{proof:th_converge:ineq2} and \eqref{proof:th_converge:ineq3}, we have that 
  \begin{equation*}
    \begin{aligned}
      \mp\mbox{sgn}(u_j^t)\epsilon\nabla_{d\u}L\left((d\u)^t\right)\trans\1_j
       & \leq \xi + \lambda^t\epsilon + \frac{\epsilon^2}{2}\1_{j}\trans\nabla_{d\u}^{2}L\1_j \\
       & \leq \xi + \lambda^t\epsilon + \frac{M\epsilon^2}{2},
    \end{aligned}
  \end{equation*}
  which yields
  \begin{equation}\label{proof:th_converge:ineq5}
    \abs{\nabla_{d\u}L\left((d\u)^t\right)\trans\1_j} - \lambda^t \leq \frac{M\epsilon}{2} + \frac{\xi}{\epsilon}~~\text{for $u_j^t=0$}.
  \end{equation}
  Then, combining \eqref{proof:th_converge:ineq4} and \eqref{proof:th_converge:ineq5}, we get
  \begin{equation*}
    \abs{\nabla_{d\u}L\left((d\u)^t\right)\trans\1_j + \delta_j\lambda^t}
    \leq \frac{M\epsilon}{2} + \frac{\xi}{\epsilon}~~\text{for $j=1,\dots,p$}.
  \end{equation*}
Note that we can choose $\delta_j$ appropriately from $-1$ to $1$ when $u_j^t=0$. Therefore, we get the bound that 
  \begin{equation}\label{proof:th_converge:ineq6}
    \begin{aligned}
      \norm{(d\u)^{t*} - (d\u)^t}_2
       & \leq \frac{2}{\mu\norm{\v^t}_2^2}\norm{\nabla_{d\u}L\left((d\u)^t\right) + \lambda^t\bdelta}_2    \\
       & \leq \frac{2\sqrt{p}}{\norm{\v^t}_2^2}\left(\frac{M\epsilon}{2\mu} + \frac{\xi}{\epsilon\mu}\right).
    \end{aligned}
  \end{equation}
  By similar argument, we can get the bound that 
  \begin{equation}\label{proof:th_converge:ineq7}
    \norm{(d\v)^{t*} - (d\v)^{t}}_2 \leq \frac{2\sqrt{q}}{\norm{\u^t}_2^2}\left(\frac{M\epsilon}{2\mu} + \frac{\xi}{\epsilon\mu}\right).
  \end{equation} 
  Then, with $1=\norm{\u^t}_1 \leq \sqrt{p}\norm{\u^t}_2$ and $1=\norm{\v^t}_1\leq \sqrt{q}\norm{\v^t}_2$, combining \eqref{proof:th_converge:ineq6} 
  and \eqref{proof:th_converge:ineq7} yields 
  \begin{equation*}
    \max\left[\norm{(d\u)^{t*}\v^t\trans - d^t\u^t\v^t\trans}_F,~\norm{\u^t(d\v)^{t*}\trans - d^t\u^t\v^t\trans}_F\right] 
    \leq 2\sqrt{pq}\left(\frac{M\epsilon}{2\mu}+\frac{\xi}{\epsilon\mu}\right),
  \end{equation*}
  where we make use of the fact that $\norm{\x\y\trans}_F=\norm{\x}_{2}\norm{\y}_2$ for any column vectors $\x$ and $\y$. This completes the proof.



\bibliographystyle{rss}

\end{document}